\newtheorem{theorem}{Theorem}[section]
\newtheorem{lemma}[theorem]{Lemma}
\newtheorem{definition}[theorem]{Definition}
\newtheorem{assumption}[theorem]{Assumption}
\theoremstyle{plain}
\theoremstyle{definition}
\theoremstyle{remark}
\newtheorem{remark}[theorem]{Remark}
\begin{document}

\twocolumn[
\icmltitle{Demystify Optimization and Generalization of Over-parameterized PAC-Bayesian Learning}



\icmlsetsymbol{equal}{*}

\begin{icmlauthorlist}
\icmlauthor{Wei Huang}{equal,yyy}
\icmlauthor{Chunrui Liu}{equal,yyy}
\icmlauthor{Yilan Chen}{comp}
\icmlauthor{Tianyu Liu}{yyy}
\icmlauthor{Richard Yi Da Xu}{yyy}
\end{icmlauthorlist}

\icmlaffiliation{yyy}{ University of Technology Sydney, Australia}
\icmlaffiliation{comp}{University of California San Diego
La Jolla, CA}

\icmlcorrespondingauthor{Wei Huang}{weihuang.uts@gmail.com}

\icmlkeywords{Machine Learning, ICML}

\vskip 0.3in
]



\printAffiliationsAndNotice{\icmlEqualContribution} 

\begin{abstract}
PAC-Bayesian is an analysis framework where the training error can be expressed as the weighted average of the hypotheses in the posterior distribution whilst incorporating the prior knowledge. In addition to being a pure generalization bound analysis tool, PAC-Bayesian bound can also be incorporated into an objective function to train a probabilistic neural network, making them a powerful and relevant framework that can numerically provide a tight generalization bound for supervised learning. For simplicity, we call probabilistic neural network learned using training objectives derived from PAC-Bayesian bounds as {\it PAC-Bayesian learning}. Despite their empirical success, the theoretical analysis of PAC-Bayesian learning for neural networks is rarely explored. This paper proposes a new class of convergence and generalization analysis for PAC-Bayes learning when it is used to train the over-parameterized neural networks by the gradient descent method. For a wide probabilistic neural network, we show that when PAC-Bayes learning is applied, the convergence result corresponds to solving a kernel ridge regression when the probabilistic neural tangent kernel (PNTK) is used as its kernel. Based on this finding, we further characterize the uniform PAC-Bayesian generalization bound which improves over the Rademacher complexity-based bound for non-probabilistic neural network. Finally, drawing the insight from our theoretical results, we propose a proxy measure for efficient hyperparameters selection, which is proven to be time-saving.
\end{abstract}

\section{Introduction}
Deep learning has demonstrated powerful learning capability thanks to its over-parameterization structure, in which various network architectures have been responsible for the great leap in performance \cite{lecun2015deep}. For the reason that over-fitting and complex hyperparameters tuning are two of the major pain points in deep learning, designing generalization guarantee for deep networks is a long-term pursuit for researchers \cite{zhang2021understanding}. To this end, a learning framework that trains a probabilistic neural network with a PAC-Bayesian bound objective function has been proposed \cite{begin2016pac,DBLP:journals/corr/DziugaiteR17,neyshabur2017pac,raginsky2017non,neyshabur2017exploring,london2017pac,smith2017bayesian,liu2021pac}. We dub this learning method, {\it PAC-Bayesian learning}. The PAC-Bayesian learning has been proven to be able to achieve competitive expected test set error, while providing a tight generalization bound. Furthermore, this generalization bound computed from the training error can obviate the need for splitting data into a train, test, or validation set, which is highly applicable to train a deep network with scarce data \cite{tight_pac_bay,grunwald2020fast}. Meanwhile, these advancements on PAC-Bayesian bounds have been widely adapted with different deep neural network structures including meta-learning \cite{amit2018meta}, convolutional neural network \cite{zhou2018non}, binary activated multilayer networks \cite{letarte2019dichotomize}, partially aggregated neural networks \cite{biggs2020differentiable}, and graph neural network \cite{liao2020pac}.

Due to the tremendous empirical success of PAC-Bayesian learning, there is increasing interest in understanding their theoretical properties. However, they are either restricted to a specific model variant \cite{dziugaite2018entropy,dziugaite2020role} or rely heavily on empirical exploration \cite{neyshabur2017exploring}. To our best knowledge, there is neither investigation of why the training of PAC-Bayesian learning is successful nor the reason for PAC-Bayesian bound to be tight on unseen data. Therefore, it is natural to ask: for a probabilistic neural network with an objective function derived from a PAC-Bayesian bound when gradient descent is adopted,
\begin{enumerate}
    \item[\textbf{Q1:}] How effective the training (i.e., the trainability) on training set?
    \item[\textbf{Q2:}] How tight the generalization bound (i.e., generalization) compared to those learning framework using non-probabilistic neural networks?
\end{enumerate}
The exploration of the answer can be highly non-trivial due to inherented non-convex of over-parameterization \cite{jain2017non} and additional randomness introduced by probabilistic neural network \cite{specht1990probabilistic} as well as additional challenges brought by the divergence between posterior/prior distribution pairs known as Kullback-Leibler (KL) divergence. 

In this paper, we answer the above questions by leveraging the recent advances in the theory of deep learning in over-parameterized settings with extremely (or infinitely) wide networks. It can be shown that ultra-wide networks optimized with gradient descent can achieve near-zero training error, and the critical factor that governs training process is the so called neural tangent kernel (NTK), which can be proven to be unchanged during gradient descent training \cite{jacot2018neural}, thus providing a guarantee for achieving global minimum \cite{du2019gradient,allen2019convergence}. Under the PAC-Bayesian framework, NTK is no longer calculated from the derivative of the weights directly, but instead is calculated based on the gradient of the distribution parameters of the weights. We call it Probabilistic NTK (PNTK), based on which we build a convergence analysis to characterize the optimization process of PAC-Bayes learning. Thanks to the explicit solution obtained by 
optimization analysis, we further formulate the generalization bound 
of PAC-Bayesian learning for the first time, and demonstrate its advantage by comparing with theoretical generalization bound of learning framework with non-stochastic neural networks \cite{arora2019fine,cao2019generalization,hu2019simple}. We summarize our contribution as follows:

\begin{itemize}

    \item With a detailed characterization of gradient descent training of PAC-Bayes objective function, we arrive at a conclusion that the final solution is kernel ridge regression with its kernel being the PNTK.
    \item Based on the optimization solution, we derive a PAC-Bayesian bound for deep networks and find that it improves over the Rademacher complexity bound for non-probabilistic neural network with a fair comparison.
    \item PAC-Bayesian learning depends on a large number of hyperparameters. We design a training-free proxy based on our theoretical bound and show it is effective and time-saving. 
    
    \item Technically, our proof of convergence is similar to \cite{du2018gradient,arora2019fine}. But there are essential differences. The main difference is that our network architecture is much complex (e.g. probabilistic network contain two set of parameter) and each set involves its own randomness which requires bounding many terms in this work differently and more elaborately. Due to the particularly wide range of application scenarios of probabilistic neural networks such as Variational Auto-Encoder \cite{pu2016variational} and Deep Bayesian Network \cite{nie2018deep}, we hope our technique can provide the basis for analysis of wide probabilistic neural networks.
\end{itemize}

\section{Related Work}

\paragraph{PAC-Bayesian Analysis.}  Probably Approximately Correct (PAC) Bayes framework \cite{mcallester1999pac,mcallester1999some} can incorporate knowledge about the learning algorithm and probability distribution over a set of hypotheses, thus providing a test performance (generalization) guarantee. Subsequently, PAC-Bayesian method is adopted to analyze the generalization bound of the probabilistic neural network \cite{langford2002not}. The original PAC-Bayes theory only works with bounded loss function, \citet{haddouche2021pac}  expands the PAC-Bayesian theory to learning problems with unbounded loss functions. Besides, several improved PAC-Bayesian bounds suitable for different scenarios are introduced by \cite{begin2014pac,begin2016pac}. As a result of the flexibility and generalization properties of PAC-Bayes, it is widely used to analyze the complex, non-convex, and overparameterized optimization problem, especially over-parameterized neural networks \cite{guedj2019primer}. \citet{neyshabur2017pac}  present a generalization bound for feedforward neural networks with ReLU activations in terms of the product of the spectral norm of the layers and the Frobenius norm of the weights. \citet{london2017pac}  study the generalization error of randomized learning algorithms -- focusing on stochastic gradient descent (SGD) -- using a novel combination of PAC-Bayes and algorithmic stability.



\paragraph{PAC-Bayesian Learning.} 
Apart from providing theoretical analysis for the generalization properties of deep learning, PAC-Bayes pay an increasing number of attention to afford numerical generalization bound (certificate) for practical deep learning algorithm. \citet{NIPS2001_98c72428} unprecedentedly introduce a method to train a Bayesian neural network and use a refined PAC-Bayesian bound for computing error upper bound. Later, \citet{neyshabur2017exploring} extend \citet{NIPS2001_98c72428}'s work by developing a training objective function derived from a relaxed PAC-Bayesian bound. In the standard application of PAC-Bayes, the prior is typically chosen to be a spherical Gaussian centered at the origin. However, without incorporating the information of data, the KL divergence might be unreasonable large, limiting the performance of PAC-Bayes method. To fill this gap, a large literature
propose to obtain localized PAC-Bayes bounds via distribution-dependent priors through data \cite{ambroladze2007tighter, negrea2019information, dziugaite2020role, perez2021learning}. Furthermore, \citet{dziugaite2018data} show how an
differentially private data-dependent prior yields a valid PAC-Bayes bound for the situation that data distribution is presumed to be unknown. More recently, research focus on providing PAC-Bayesian bound for more realistic architectures, such as, ImageNet \cite{zhou2018non}, binary activated multilayer networks \cite{letarte2019dichotomize}, Partially Aggregated Neural Networks \cite{biggs2020differentiable}, and graph neural network \cite{liao2020pac}. We denote the practical use of PAC-Bayesian algorithm to train over-parameterized neural networks as PAC-Bayesian learning and the target of this work is to demystify the success behind the deep learning trained via PAC-Bayesian bound. 


\paragraph{Neural Tangent Kernel.} One of the greatest recent progress in deep learning theory is the gradient descent training can find global minimum with deep neural networks thanks to the over-parameterization \cite{du2019gradient,du2018gradient,allen2019convergence}. Technically, \cite{jacot2018neural} resort to the NTK, which will stay constant during training. Under a mild assumption that the lowest eigenvalue of the NTK is greater than zero, the loss that is convex regards output will convergence to the global minimum. Furthermore, the generalization ability to unseen data of trained ultra-wide networks has been characterized by the Rademacher complexity analysis \cite{cao2019generalization,arora2019fine}. In this work, we will make a comparison between Rademacher bound and PAC-Bayesian bound with ultra-wide networks, where Rademacher bound of kernel ridge regression is formulated by \citep{hu2019simple}. Since the origin of the NTK, it has been applied to different deep networks structures, such as orthogonal initialization \cite{huang2020neural}, convolutional networks \citep{arora2019exact}, graph neural networks \citep{du2019graph,huang2021wide}, and transformer \cite{hron2020infinite}.

\section{Preliminary}

\subsection{Notation}
We use bold-faced letters for vectors and matrices otherwise representing scalar. We use $\| \cdot \|_2$ to denote the Euclidean norm of a vector or the spectral norm of a matrix, while denoting $\| \cdot \|_F$ as the Frobenius norm of a matrix. For a neural network, we denote $\phi(x)$ as the activation function and we adopt ReLU activation in this work. Given a logical variable $a$, then $\mathbb{I}(a) = 1$ if $a$ is true otherwise  $\mathbb{I}(a) = 0$. Let ${\bf I}_d$ be the identity matrix with dimension of $\mathbb{R}^{d \times d}$. We denote $[n] = \{1,2,\ldots,n \}$. The least eigenvalue of matrix $\bf A$ is denoted as $\lambda_0({\bf A}) = \lambda_{\min}(\bf A)$.

\subsection{Over-parameterized Probabilistic Neural Network}

In PAC-Bayesian learning we use probabilistic neural networks instead of deterministic networks, where the weights always follow a certain distribution. In this work, we adopt the Gaussian distribution for the weights, and define a two-layer probabilistic neural network governed by the following expression,

\begin{equation}\label{eq:network}
    f( {\bf x}) = \frac{1}{\sqrt{m}} {\bf v} \phi({\bf Wx})
\end{equation}
where ${\bf x} \in \mathbb{R}^d$ is the input, ${\bf W} \in \mathbb{R}^{m \times d}$ are weight matrix in the first layer, and $ {\bf v} \in \mathbb{R}^m$ is the weight vector in the second layer. For simplicity, we represent the weight matrix as a set of column vectors, i.e., ${\bf W} = ({\bf w}_1, {\bf w}_2, \dots, {\bf w}_m )^\top$. In particular, we use the re-parametrization trick to treat the weight in the first layer and leave the weights in the second to be initialized by uniform distribution,

\begin{equation}\label{eq:weight}
\begin{aligned}
& {\bf w}_r = \boldsymbol{\mu}_r + \boldsymbol{\sigma}_r  \odot \boldsymbol{\epsilon}_r, ~~~ \boldsymbol{\epsilon}_r \sim \mathcal{N}({\bf 0},{\bf I}_d); \\ &~~~ {\bf v}_r \sim {\rm unif}(\{-1,1\})
\end{aligned}
\end{equation}

where $\odot$ denotes the element-wide product operation, $\boldsymbol{\mu}$ and $\boldsymbol{\sigma}$ are mean and variance respectively. We then fix the second layer ${\bf v}$ and only optimize the first layer. This is a trick that has been widely used in the NTK paper \cite{arora2019fine,du2018gradient} for simplifying analysis, the same results can be obtained on deep probabilistic neural network. 

\subsection{The PAC-Bayesian Learning}

Suppose data $S = \{ ({\bf x}_i,y_i)\}_{i=1}^n$ are i.i.d. samples from a non-degenerate distribution $\mathcal{D}$. Defining that the $\mathcal{H}$ represents the hypothesis space, $h(\bf{x})$ is the prediction of hypothesis $ h \in \mathcal{H} $ over sample $\bf{x}$. $ R_\mathcal{D}(h) = \mathbb{E}_{ ({\bf x}, y) \sim \mathcal{D} } [ \ell (y, h(\bf{x})) ] $ represents the generalization error of classifier $h$ and $ \widehat  R_S (h) = \frac{1}{n} \sum_{i=1}^{n} \ell \left( y_{i}, h \left( {\bf x}_{i} \right) \right)$ represents the empirical error of classifier $h$, where $\ell(\cdot)$ is the loss function.

The PAC-Bayes theorem \cite{langford2001bounds,seeger2002pac,maurer2004note} concludes that,

\begin{theorem} \label{thm_mca}
 Denote $ Q_0 \in \mathcal{H} $ be some prior distribution over $\mathcal{H}.$ Then for any $\delta \in(0,1]$, the following inequality holds uniformly for all posteriors distributions $ Q \in \mathcal{H} $ with probability at least $ 1-\delta $,

\begin{equation}
{\rm kl}(\widehat R_S(Q) \| {R}_\mathcal{D} (Q) ) \leq \frac{ \mathrm{KL} (Q \| Q_0) + \log \frac{2\sqrt{n}}{\delta}} {n}.
\end{equation}
\end{theorem}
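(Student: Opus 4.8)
The plan is to prove this PAC-Bayes-kl bound (in the form sharpened by \citet{maurer2004note}) from three ingredients: a change-of-measure inequality that transfers control from the data-independent prior $Q_0$ to an arbitrary, possibly data-dependent posterior $Q$; a one-hypothesis exponential-moment estimate for the binomial deviation; and a single application of Markov's inequality. Throughout, $\widehat R_S(Q) := \mathbb{E}_{h\sim Q}[\widehat R_S(h)]$ and $R_\mathcal{D}(Q) := \mathbb{E}_{h\sim Q}[R_\mathcal{D}(h)]$ denote the $Q$-averaged empirical and population risks, and ${\rm kl}(q\|p) = q\log\frac{q}{p} + (1-q)\log\frac{1-q}{1-p}$ is the binary relative entropy. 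The assumption that $\ell$ takes values in $[0,1]$ will be used only in the moment step.

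First I would fix the prior $Q_0$ (chosen before $S$ is drawn) and set, for each hypothesis $h$, the data-dependent quantity $\Delta_S(h) := {\rm kl}(\widehat R_S(h)\,\|\,R_\mathcal{D}(h))$. The Donsker--Varadhan variational formula for the KL divergence gives, for every posterior $Q$ and every measurable $\phi$,
\begin{equation}\label{eq:dv}
\mathbb{E}_{h\sim Q}[\phi(h)] \leq \mathrm{KL}(Q\|Q_0) + \log \mathbb{E}_{h\sim Q_0}\big[e^{\phi(h)}\big].
\end{equation}
Taking $\phi(h) = n\,\Delta_S(h)$ controls the averaged deviation on the right; to recover the left-hand side of the theorem I would use that $(q,p)\mapsto {\rm kl}(q\|p)$ is jointly convex, so Jensen's inequality yields
\begin{equation}\label{eq:jensen}
{\rm kl}\big(\widehat R_S(Q)\,\|\,R_\mathcal{D}(Q)\big) \leq \mathbb{E}_{h\sim Q}\big[\Delta_S(h)\big].
\end{equation}
Combining \eqref{eq:dv} and \eqref{eq:jensen} gives, uniformly in $Q$,
\begin{equation}\label{eq:combined}
n\,{\rm kl}\big(\widehat R_S(Q)\,\|\,R_\mathcal{D}(Q)\big) \leq \mathrm{KL}(Q\|Q_0) + \log \mathbb{E}_{h\sim Q_0}\big[e^{n\Delta_S(h)}\big].
\end{equation}

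The crux, and the step I expect to be the main obstacle, is bounding the random quantity $\mathbb{E}_{h\sim Q_0}[e^{n\Delta_S(h)}]$ appearing in \eqref{eq:combined}. I would control its expectation over $S$: since $Q_0$ is independent of the sample, Fubini gives $\mathbb{E}_S\,\mathbb{E}_{h\sim Q_0}[e^{n\Delta_S(h)}] = \mathbb{E}_{h\sim Q_0}\,\mathbb{E}_S[e^{n\Delta_S(h)}]$, so it suffices to bound, for a single fixed $h$, the binomial moment $\mathbb{E}_S\big[\exp\!\big(n\,{\rm kl}(\widehat R_S(h)\,\|\,R_\mathcal{D}(h))\big)\big]$. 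Here $n\widehat R_S(h)$ is a sum of $n$ i.i.d.\ $[0,1]$-valued variables with mean $R_\mathcal{D}(h)$, and Maurer's tight binomial estimate gives $\mathbb{E}_S[e^{n\Delta_S(h)}] \leq 2\sqrt{n}$; this is exactly where the $2\sqrt{n}$ constant originates, and establishing it requires the careful tail analysis of the relative entropy between empirical and true means rather than a crude Hoeffding-type estimate. Hence $\mathbb{E}_S\,\mathbb{E}_{h\sim Q_0}[e^{n\Delta_S(h)}] \leq 2\sqrt{n}$.

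Finally I would apply Markov's inequality to the nonnegative random variable $\mathbb{E}_{h\sim Q_0}[e^{n\Delta_S(h)}]$: with probability at least $1-\delta$ over the draw of $S$, $\mathbb{E}_{h\sim Q_0}[e^{n\Delta_S(h)}] \leq 2\sqrt{n}/\delta$. On this single high-probability event, taking logarithms in \eqref{eq:combined} and dividing by $n$ yields
\begin{equation}\label{eq:final}
{\rm kl}\big(\widehat R_S(Q)\,\|\,R_\mathcal{D}(Q)\big) \leq \frac{\mathrm{KL}(Q\|Q_0) + \log\frac{2\sqrt{n}}{\delta}}{n}.
\end{equation}
Crucially, the event that controls $\mathbb{E}_{h\sim Q_0}[e^{n\Delta_S(h)}]$ does not depend on $Q$, whereas \eqref{eq:dv}--\eqref{eq:combined} hold for \emph{every} $Q$ simultaneously; this is precisely what makes the bound valid uniformly over all posteriors with no additional union-bound penalty, completing the argument.
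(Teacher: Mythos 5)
There is nothing in the paper to compare your proof against: Theorem~3.1 is a \emph{preliminary} imported from the PAC-Bayes literature, cited to \citet{langford2001bounds}, \citet{seeger2002pac} and \citet{maurer2004note}, and the paper's appendix only proves its own results (Sections~4.1--4.3) that are built on top of it. What you have written is a correct reconstruction of the standard argument behind Maurer's form of the bound, and every step is sound: the Donsker--Varadhan change of measure applied to $\phi(h)=n\,{\rm kl}(\widehat R_S(h)\,\|\,R_\mathcal{D}(h))$; Jensen's inequality, legitimately invoked via joint convexity of the binary kl; the Fubini exchange, which is valid precisely because $Q_0$ is chosen independently of $S$, reducing the problem to the single-hypothesis moment bound $\mathbb{E}_S\big[e^{n\,{\rm kl}(\widehat R_S(h)\|R_\mathcal{D}(h))}\big]\le 2\sqrt{n}$; and a single Markov step that produces a $Q$-independent event, which is exactly what makes the bound hold uniformly over all posteriors with no union-bound cost. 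The one thing to note is that you invoke Maurer's moment estimate as a black box rather than proving it; that estimate (reduction of general $[0,1]$-valued losses to the Bernoulli extremal case, then bounding $\sum_{k}\binom{n}{k}(k/n)^k(1-k/n)^{n-k}$ by $2\sqrt{n}$) is the genuinely hard quantitative ingredient and the true origin of the $2\sqrt{n}$ constant. Since the paper outsources the entire theorem by citation, your write-up is in fact more self-contained than the paper's treatment --- modulo that one lemma of \citet{maurer2004note}.
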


Here $ R_\mathcal{D}(Q) =\mathbb{E}_{({\bf x}, y) \sim \mathcal{D}, h \sim Q}[\ell(y, h({\bf x}))] =  \mathbb{E}_{ h \sim Q } [R_\mathcal{D}(h)] $ and $ \widehat R_S(Q) =  \mathbb{E}_{ h \sim Q } [\widehat R_S(h)] $.  $\mathrm{KL} (Q \| Q_0) = \mathbb{E}_{Q} \left[ \ln \frac{Q}{Q_0} \right] $ is the Kullback-Leibler (KL) divergence and  ${\rm kl}(q \| q')= q \log(\frac{q}{q'})+(1-q)\log(\frac{1-q}{1-q'})$ is the binary KL divergence. Furthermore, combining with pinsker's inequality for binary KL divergence,  ${\rm kl}(\hat p \| p) \ge (p-\hat p)^2/(2p) $, when $\hat p < p$, yields,

\begin{equation} \label{eq:bound_sqrt}
R_\mathcal{D}(Q) - \widehat R_S(Q)  \leq \sqrt{ 2R_\mathcal{D}(Q) \frac{ \mathrm{KL} (Q \| Q_0) + \log \frac{2\sqrt{n}}{\delta}} {n}}.
\end{equation}

The above bound is the classical bound. On the other hand,  combining with the inequality $\sqrt{ab} \le \frac{1}{2}(\bar \lambda a +\frac{b}{ \bar \lambda})$, for all $\bar \lambda >0$, lead to a PAC-Bayes-$\lambda$ bound, which is proposed by \citep{thiemann2017strongly},

\begin{theorem} \label{thm_lambda} Denote $ Q_0 \in \mathcal{H} $ be some prior distribution over $\mathcal{H}.$ Then for any $\delta \in(0,1]$, the following inequality holds uniformly for all posteriors distributions $ Q \in \mathcal{H} $ with probability at least $ 1-\delta $

\begin{equation} \label{eq:lambda}
{R}_\mathcal{D} (Q) \leq \frac{\widehat{R}_S(Q)}{1- \bar \lambda/2} +  \frac{ \mathrm{KL} (Q \| Q_0) + \log \frac{2\sqrt{n}}{\delta}} {n \bar \lambda(1- \bar \lambda/2)}.
\end{equation}

\end{theorem}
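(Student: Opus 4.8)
The plan is to derive the bound as a purely deterministic, $Q$-by-$Q$ consequence of the square-root bound in \eqref{eq:bound_sqrt}, so that the uniform-over-$Q$ high-probability guarantee is inherited directly from Theorem~\ref{thm_mca}. First I would fix the single event of probability at least $1-\delta$ on which the binary-KL inequality ${\rm kl}(\widehat R_S(Q)\|R_\mathcal{D}(Q)) \le \big(\mathrm{KL}(Q\|Q_0)+\log\tfrac{2\sqrt n}{\delta}\big)/n$ of Theorem~\ref{thm_mca} holds simultaneously for all posteriors $Q$, and carry out every remaining manipulation pointwise on that event; since no further randomness enters, the resulting inequality again holds for all $Q$ at once. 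Writing $\mathcal{K}=\big(\mathrm{KL}(Q\|Q_0)+\log\tfrac{2\sqrt n}{\delta}\big)/n$ for brevity, the goal becomes to convert the square-root bound $R_\mathcal{D}(Q)-\widehat R_S(Q)\le \sqrt{2R_\mathcal{D}(Q)\,\mathcal{K}}$ into the stated affine form.

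The first step recovers \eqref{eq:bound_sqrt} from Theorem~\ref{thm_mca} via Pinsker's inequality ${\rm kl}(\hat p\|p)\ge (p-\hat p)^2/(2p)$. This lower bound is valid only when $\hat p<p$, i.e.\ when $\widehat R_S(Q)<R_\mathcal{D}(Q)$, so I would split into two cases. When $\widehat R_S(Q)\ge R_\mathcal{D}(Q)$ the claim is immediate: restricting $\bar\lambda\in(0,2)$ gives $1-\bar\lambda/2\in(0,1)$, whence $\widehat R_S(Q)/(1-\bar\lambda/2)\ge \widehat R_S(Q)\ge R_\mathcal{D}(Q)$, and the nonnegative KL term only helps. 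In the complementary case, Pinsker combined with Theorem~\ref{thm_mca} yields exactly $R_\mathcal{D}(Q)-\widehat R_S(Q)\le\sqrt{2R_\mathcal{D}(Q)\,\mathcal{K}}$.

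The second step linearizes the square root using $\sqrt{ab}\le\tfrac12(\bar\lambda a+b/\bar\lambda)$, valid for every $\bar\lambda>0$, with the choice $a=R_\mathcal{D}(Q)$ and $b=2\mathcal{K}$. This gives $\sqrt{2R_\mathcal{D}(Q)\,\mathcal{K}}\le \tfrac{\bar\lambda}{2}R_\mathcal{D}(Q)+\tfrac{\mathcal{K}}{\bar\lambda}$, hence $R_\mathcal{D}(Q)-\widehat R_S(Q)\le \tfrac{\bar\lambda}{2}R_\mathcal{D}(Q)+\tfrac{\mathcal{K}}{\bar\lambda}$. Collecting the $R_\mathcal{D}(Q)$ terms produces $(1-\bar\lambda/2)R_\mathcal{D}(Q)\le \widehat R_S(Q)+\mathcal{K}/\bar\lambda$, and dividing by the positive factor $1-\bar\lambda/2$ (legitimate precisely because $\bar\lambda<2$) and substituting back the definition of $\mathcal{K}$ gives the stated inequality \eqref{eq:lambda}.

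I do not expect a genuine analytic obstacle here, since both key inequalities are handed to us; the only points demanding care are bookkeeping ones. The main thing to watch is that the $\bar\lambda$ appearing in the linearization is the same free parameter as in the theorem, restricted to $(0,2)$ so the final division is valid, and that the entire chain is executed on the single event of Theorem~\ref{thm_mca} so the uniformity over all posteriors $Q$ — the one substantive feature of the statement — is preserved throughout.
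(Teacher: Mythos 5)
Your proposal is correct and follows exactly the route the paper itself sketches: apply Pinsker's inequality ${\rm kl}(\hat p\|p)\ge (p-\hat p)^2/(2p)$ to Theorem~\ref{thm_mca} to obtain the square-root bound \eqref{eq:bound_sqrt}, then linearize via $\sqrt{ab}\le\tfrac12(\bar\lambda a+b/\bar\lambda)$ and rearrange, all on the single uniform high-probability event. Your added bookkeeping (the case $\widehat R_S(Q)\ge R_\mathcal{D}(Q)$ where Pinsker does not apply, and the restriction $\bar\lambda\in(0,2)$ so that dividing by $1-\bar\lambda/2$ is legitimate) is exactly the care the paper leaves implicit in citing \citet{thiemann2017strongly}.
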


In this work, we study the PAC-Bayesian learning by training the probabilistic network described by Equation (\ref{eq:network}) with an objective  PAC-Bayes bound (\ref{eq:lambda}). Specifically, the objective function can be expressed as,

\begin{equation} \label{eq:target}
\mathcal{B}(Q) = \widehat{R}_{S}(Q) + \lambda \frac{\mathrm{KL}(Q \|Q_0) }{n}
\end{equation}
where $\lambda$ is a hyperparameter introduced in a heuristic manner to make the method more flexible, and $\bar{\lambda} = 2$. Since the term regarding $\delta$ is a constant, we omit it in the objective function. Instead of optimizing ${\bf W}$ itself directly, we introduce reparameterization trick \cite{kingma2015variational} to perform gradient descent,
\begin{equation} \label{eq:repara}
\begin{aligned}
    {\boldsymbol \mu}_{r}(t+1) =  {\boldsymbol \mu}_{r}(t) - \eta \frac{\partial
    \mathcal{B}}{ \partial  {\boldsymbol \mu}_r(t)}  \\
      {\boldsymbol \sigma}_r(t+1) = {\boldsymbol \sigma}_r(t) - \eta \frac{\partial
    \mathcal{B}}{ \partial {\boldsymbol \sigma}_r(t)}
    \end{aligned}
\end{equation}
where $\eta$ is the learning rate.

\subsection{Neural Tangent Kernel}

The neural tangent kernel was originated from \citep{jacot2018neural}, whose definition is given by,
\begin{equation}
    \boldsymbol{\Theta}({\bf X},{\bf X},t) = \nabla_{ \boldsymbol{\theta} } f ( {\bf X},t ) \nabla_{ \boldsymbol {\theta} } f ( {\bf X},t )^\top 
\end{equation}
where $\boldsymbol {\theta} \in \mathbb{R}^{p \times 1} $ with $p$ being the number of parameters in the model. We demonstrate how it is used to describe the training dynamics of deep neural networks.  

The dynamics of gradient flow for parameters are given by,
\begin{equation}
\frac{\partial \boldsymbol{\theta}}{\partial t} = - \eta \nabla_{\boldsymbol{\theta}} \widehat{R}_S = -\eta \nabla_{\boldsymbol{\theta}} f({\bf X},t)^\top \nabla_{f({\bf X},t)} \widehat{R}_S
\end{equation}
Then the dynamics of output functions $f({\bf X}) = {\rm vec}(f({\bf x})_{{\bf x} \in {\bf X}}) \in \mathbb{R}^{nm \times 1}$ follow,
\begin{equation}
\frac{\partial f({\bf X},t)}{\partial t} =  \nabla_{\boldsymbol{\theta}} f({\bf X},t) \frac{\partial \boldsymbol{\theta}}{\partial t} = -\eta \boldsymbol{\Theta}({\bf X},{\bf X},t) \nabla_{f({\bf X},t)} \widehat{R}_S
\end{equation}
It is proven that when the width of neural networks is infinite, { $\boldsymbol{\Theta}({\bf X},{\bf X},t)$ converge in probability to $\boldsymbol{\Theta}({\bf X},{\bf X})$ } \cite{jacot2018neural}, i.e., $\boldsymbol{\Theta}({\bf X},{\bf X},t) = \boldsymbol{\Theta}({\bf X},{\bf X}) $.  Assuming that the loss is the mean squared error loss, $\widehat{R}_S = \frac{1}{2}\|f({\bf X})-{\bf Y}) \|^2_2$, then we have,
\begin{equation}
\begin{aligned}
  f_t(\textbf{x}) = & f_0(\textbf{x}) + \boldsymbol{\Theta}(\textbf{x},\textbf{X})^T \boldsymbol{\Theta}^{-1} (\textbf{I} + e^{-\eta \boldsymbol{\Theta} t})  (\textbf{Y}-f_0(\textbf{X}))
  \end{aligned}
\end{equation}
At the end of the training, namely, $t \rightarrow \infty$, the solution is equivalent to the well-known kernel regression, of which the kernel is NTK.

\section{Theoretical Results}

\subsection{Optimization analysis} \label{sec:4.1}

To simplify the analysis, we first consider the optimization of probabilistic neural networks of the form (\ref{eq:network}) with objective $\widehat{R}_S(Q)$ and adopting the mean squared error loss.
In other words, we neglect the KL divergence term at this stage and will show that the corresponding results can be extended to the target function with KL divergence. To perform the optimization analysis, we define the neural tangent kernel for probabilistic neural network, and name them as {\it probabilistic neural tangent kernel} (PNTK).
\begin{definition}[Probabilistic Neural Tangent Kernel] \label{eq:ntk} 
The tangent kernels associated with output function $f({\bf x},t)$ at parameters $\boldsymbol{\mu}$ and $\boldsymbol{\sigma}$ are defined as, 
\begin{equation}\label{eq:NTK_kernel}
\begin{aligned}
\boldsymbol{\Theta}^{\mu}({\bf X},{\bf X},t) &=\nabla_{ \boldsymbol{\mu} }  f ({\bf X},t) \nabla_{ \boldsymbol{\mu} }  f({\bf X},t )^{\top} \in \mathbb{R}^{n \times n} \\
\boldsymbol{\Theta}^{\sigma}({\bf X},{\bf X},t) &=\nabla_{ \boldsymbol{\sigma} }  f({\bf X},t) \nabla_{ \boldsymbol{\sigma} }  f({\bf X},t)^{\top} \in \mathbb{R}^{n \times n} \\
\end{aligned}
\end{equation}
\end{definition}

Different from standard (deterministic) neural network, the probabilistic network consist of two sets of parameters, i.e., $\boldsymbol{\mu}_{r}$ and $\boldsymbol{\sigma}_{r}$, with $r=[m]$. The gradient descent based on reparameteriztion trick (Equation \ref{eq:repara}) is performed on each of the two parameter set separately. As a result, we have two corresponding tangent kernels. It is shown that the NTK will converge in probability to a deterministic kernel in the infinite-width limit \cite{jacot2018neural}, as we will show in later derivation, the PNTK will converge to a limiting kernel at initialization and during training with a large width condition. The detailed expression for the limiting kernel is given as follows

\begin{definition}
[Limiting Neural Tangent Kernel with ReLU Activation] \label{lem:pntk}
Consider a probabilistic network of the form (\ref{eq:network}), with a Gaussian distribute weights ${\bf w} $. Then the tangent kernels in the infinite-width limit will follow the expression,
\begin{equation} \label{eq:limit_ntk}
\boldsymbol{\Theta}^\infty_{ij}({\bf X},{\bf X}) = \mathbb{E}_{{\bf w}  } \big[{\bf x}^\top_i {\bf x}_j \mathbb{I} \{ {\bf w}^\top {\bf x}_i \ge 0, {\bf w}^\top {\bf x}_j \ge 0 \} \big]
\end{equation}
\end{definition} 

The definition above gives the exact formulation of the limiting NTK for random networks. In addition, its positive definiteness is a critical factor to guarantee the convergence of gradient descent, thus we state the assumption as follows, 

\begin{assumption}
Assume the lowerest eigenvalue of the limiting NTK is greater than zero, i.e., $\lambda_0(\boldsymbol{\Theta}^\infty) > 0$.
\end{assumption}

With all the preliminaries above, we state our main theorem,

\begin{theorem}[Convergence of Probabilistic Networks with large width] \label{thm:opt}
Suppose the network's width is of $m = \Omega \big(\frac{n^6 }{\lambda_0^4 \delta^3(\sigma_0^2 +\hat \sigma^2)} \big)$ with the initialization. Then with probability at least $1- \delta$ over the initialization, we have,
\begin{equation}
 \widehat{R}_S(Q,t)  \le \exp(-\lambda_0 t) \widehat{R}_S(Q,0) 
\end{equation}
where we adopt initialization of  ${\mu}_{ri} \sim \mathcal{N}(0, {\hat \sigma})$ and  ${\sigma}_{ri} = \sigma_0$, for $i \in [d]$. ${\mu}_{ri}$ and ${\sigma}_{ri}$ are entries of weights matrix $\boldsymbol{\mu}$ and  $\boldsymbol{\sigma}$, $\hat \sigma$ and $\sigma_0$ are positive constants. 
\end{theorem}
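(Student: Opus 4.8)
The plan is to follow the now-standard neural-tangent-kernel strategy for two-layer networks (as in \cite{du2018gradient,arora2019fine}), adapted to the two parameter groups $\boldsymbol{\mu},\boldsymbol{\sigma}$ and the extra randomness $\boldsymbol{\epsilon}$ carried by the reparametrization. Under gradient flow on $\widehat{R}_S(Q)=\tfrac12\|f({\bf X},t)-{\bf Y}\|_2^2$, the residual ${\bf r}(t)=f({\bf X},t)-{\bf Y}$ obeys
\begin{equation}
\frac{d {\bf r}(t)}{dt} = -\big(\boldsymbol{\Theta}^{\mu}({\bf X},{\bf X},t)+\boldsymbol{\Theta}^{\sigma}({\bf X},{\bf X},t)\big){\bf r}(t),
\end{equation}
since both groups descend simultaneously and their contributions to $df/dt$ add. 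Writing $\boldsymbol{\Theta}(t)=\boldsymbol{\Theta}^{\mu}(t)+\boldsymbol{\Theta}^{\sigma}(t)$, the whole argument reduces to controlling $\lambda_{\min}(\boldsymbol{\Theta}(t))$ from below uniformly in $t$: if $\lambda_{\min}(\boldsymbol{\Theta}(t))\ge\lambda_0/2$ on $[0,t]$, then $\frac{d}{dt}\|{\bf r}\|_2^2=-2{\bf r}^\top\boldsymbol{\Theta}(t){\bf r}\le-\lambda_0\|{\bf r}\|_2^2$, and Gr\"onwall yields the claimed $\widehat{R}_S(Q,t)\le e^{-\lambda_0 t}\widehat{R}_S(Q,0)$.

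First I would establish the lower bound at initialization. Differentiating \eqref{eq:network} gives, with $v_r^2=1$, the empirical kernels of Definition~\ref{eq:ntk} in the form $\boldsymbol{\Theta}^{\mu}_{ij}(0)=\frac{1}{m}\sum_r {\bf x}_i^\top{\bf x}_j\,\mathbb{I}\{{\bf w}_r^\top{\bf x}_i\ge0,{\bf w}_r^\top{\bf x}_j\ge0\}$ and $\boldsymbol{\Theta}^{\sigma}_{ij}(0)=\frac{1}{m}\sum_r ({\bf x}_i\odot\boldsymbol{\epsilon}_r)^\top({\bf x}_j\odot\boldsymbol{\epsilon}_r)\,\mathbb{I}\{{\bf w}_r^\top{\bf x}_i\ge0,{\bf w}_r^\top{\bf x}_j\ge0\}$. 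Each is an average of $m$ independent summands whose mean is the limiting kernel of Definition~\ref{lem:pntk}, so a concentration step (Hoeffding for the bounded $\boldsymbol{\mu}$-branch, and a sub-exponential / $\chi^2$-tail bound for the $\boldsymbol{\sigma}$-branch, whose summands contain the unbounded factors $\epsilon_{rk}^2$) shows that for $m$ as in the statement one has $\|\boldsymbol{\Theta}(0)-\boldsymbol{\Theta}^{\infty}\|_2\le\lambda_0/4$, hence $\lambda_{\min}(\boldsymbol{\Theta}(0))\ge\tfrac34\lambda_0$, with probability at least $1-\delta$ over the initialization; this is where the $\delta^3$ and $(\sigma_0^2+\hat\sigma^2)$ factors in $m$ originate.

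Next I would run the perturbation argument along the trajectory. Assuming inductively that $\|{\bf r}(s)\|_2\le\|{\bf r}(0)\|_2$ on $[0,t]$, the gradient-flow magnitude bounds the motion of each hidden weight by $\|{\bf w}_r(t)-{\bf w}_r(0)\|_2=O\big(\frac{\sqrt{n}\,\|{\bf r}(0)\|_2}{\sqrt{m}\,\lambda_0}\big)$, where one must track that \emph{both} $\boldsymbol{\mu}_r$ and $\boldsymbol{\sigma}_r$ move and that displacing $\boldsymbol{\sigma}_r$ rescales $\boldsymbol{\epsilon}_r$, so that the effective weight ${\bf w}_r=\boldsymbol{\mu}_r+\boldsymbol{\sigma}_r\odot\boldsymbol{\epsilon}_r$ picks up an extra $\|\boldsymbol{\epsilon}_r\|_2$ factor to be controlled on a high-probability event. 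Small weight movement forces only a small fraction of neurons to flip their ReLU pattern, which bounds $\|\boldsymbol{\Theta}(t)-\boldsymbol{\Theta}(0)\|_2\le\lambda_0/4$; combined with the previous step this gives $\lambda_{\min}(\boldsymbol{\Theta}(t))\ge\lambda_0/2$ throughout, closing the induction and delivering the exponential decay above.

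The main obstacle is the $\boldsymbol{\sigma}$-branch and the randomness it introduces. Unlike the $\boldsymbol{\mu}$-branch, whose summands are bounded by $\|{\bf x}_i\|_2\|{\bf x}_j\|_2$, the $\boldsymbol{\sigma}$-branch carries quadratic Gaussian factors $\epsilon_{rk}^2$ that are only sub-exponential, so both the concentration at initialization and the activation-flip perturbation estimate require heavier-tailed bounds together with a high-probability control of $\max_r\|\boldsymbol{\epsilon}_r\|_2$. This is exactly what inflates the width requirement and forces the $(\sigma_0^2+\hat\sigma^2)$ and $\delta^3$ dependence, and it is the step that departs most from the deterministic NTK proofs of \cite{du2018gradient,arora2019fine}.
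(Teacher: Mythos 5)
Your proposal follows essentially the same architecture as the paper's proof: (i) concentration of $\boldsymbol{\Theta}^{\mu}(0)$ and $\boldsymbol{\Theta}^{\sigma}(0)$ around $\boldsymbol{\Theta}^{\infty}$ via Hoeffding for the $\mu$-branch and a sub-exponential bound for the $\chi^2$ factors in the $\sigma$-branch; (ii) an activation-flip perturbation lemma keeping the kernels within $\lambda_0/4$ of the limit while every ${\bf w}_r$ stays in a radius $R$; (iii) a displacement bound $R' = O\big(\frac{n}{\sqrt{m\delta}\,\lambda_0}\big)$ deduced from the exponential loss decay; and (iv) a closing step requiring $R' < R$, which is exactly what yields $m = \Omega\big(\frac{n^6}{\lambda_0^4 \delta^3 (\sigma_0^2+\hat\sigma^2)}\big)$. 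The one place you genuinely diverge is the $\sigma$-displacement: you let $\boldsymbol{\sigma}_r$ move and control the induced change of ${\bf w}_r = \boldsymbol{\mu}_r + \boldsymbol{\sigma}_r \odot \boldsymbol{\epsilon}_r$ through a high-probability bound on $\max_r \|\boldsymbol{\epsilon}_r\|_2$, whereas the paper argues that the gradient of the expected loss with respect to $\boldsymbol{\sigma}_r$ vanishes identically, because the loss averages over a fresh $\boldsymbol{\epsilon}_r(t)$ (so $\mathbb{E}_{\boldsymbol{\epsilon}_r(t)}[\boldsymbol{\epsilon}_r(t)] = 0$ factors out) while the ReLU indicator is evaluated at the frozen $\boldsymbol{\epsilon}_r(0)$; hence $\boldsymbol{\sigma}$ is exactly fixed and all displacement comes from $\boldsymbol{\mu}$. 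Your route is more robust in that it does not lean on this vanishing-gradient convention (which depends on decoupling $\boldsymbol{\epsilon}_r(t)$ from $\boldsymbol{\epsilon}_r(0)$, a modelling choice the paper justifies only empirically), at the cost of extra logarithmic factors in the width; the paper's route is cleaner and gives the stated polynomial directly. One attribution in your sketch is off, though it does not break the argument: the $(\sigma_0^2+\hat\sigma^2)$ and $\delta^3$ factors do not originate in the initialization concentration (that step costs only $m = \Omega\big(\frac{n^2}{\lambda_0^2}\log\frac{n^2}{\delta}\big)$, with no dependence on $\sigma_0^2+\hat\sigma^2$), nor in the heavier tails of the $\sigma$-branch; they enter through the anti-concentration estimate $P\big(|{\bf w}_r(0)^{\top}{\bf x}_i| < R\big) \le \frac{2R}{\sqrt{2\pi(\sigma_0^2+\hat\sigma^2)}}$ governing activation flips, the Markov step in the perturbation lemma (contributing one factor of $\delta$), and the Markov bound $\widehat{R}_S(Q,0) = O(n/\delta)$, all of which combine in the requirement $R' < R$.
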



Our theorem establishes that if $m$ is large enough, the expected training error converges to zero at a linear rate. In particular, the least eigenvalue of PNTK governs the convergence rate. To prove the theorem, we split the process into three parts, namely, PNTK at initialization, PNTK during training and changes of weights during training, and all the detailed proofs for our theorems and lemmas can be found in the Appendix.

We first study the behavior of tangent kernels with ultra-wide condition, namely $m = {\rm ploy}(n, 1/\lambda_0,1/\delta)$, at initialization. The following lemma demonstrates that if $m$ is large, then $\boldsymbol{\Theta}^\mu_0$ and  $\boldsymbol{\Theta}^\sigma_0$ have a lower bound on smallest eigenvalue with high probability. The proof is by the standard concentration bound.  

\begin{lemma} [PNTK at initialization]
If $m =O \big(\frac{n^2\log(n^2/\delta)}{\lambda^2_0} \big)$, while $\boldsymbol{\mu}_r$ and $\boldsymbol{\sigma}_r$ are initialized by the form in Theorem \ref{thm:opt}, then with probability at least $1-\delta$, $\big \|\boldsymbol{\Theta}^\mu({\bf X}, {\bf X},0) -\boldsymbol{\Theta}^\infty({\bf X}, {\bf X}) \big \|_2  \le \frac{\lambda_0}{4}$, $\big \|\boldsymbol{\Theta}^\mu({\bf X}, {\bf X},0) \big \|_2  \ge \frac{3\lambda_0}{4}$;
and $\big \|\boldsymbol{\Theta}^\sigma({\bf X}, {\bf X},0) -\boldsymbol{\Theta}^\infty({\bf X}, {\bf X}) \big \|_2   \le \frac{\lambda_0}{4}$, $\big \|\boldsymbol{\Theta}^\sigma({\bf X}, {\bf X},0) \big \|_2  \ge \frac{3\lambda_0}{4}$.
\end{lemma}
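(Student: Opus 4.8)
The plan is to represent each PNTK entry at initialization as an empirical average of $m$ i.i.d.\ (across neurons $r$) random variables whose expectation is the corresponding entry of $\boldsymbol{\Theta}^\infty$, to control the entrywise deviation by a concentration inequality with a union bound, and finally to convert the entrywise error into an operator-norm error that feeds Weyl's inequality. First I would differentiate the output $f({\bf x}_i)=\frac{1}{\sqrt m}\sum_r v_r\,\phi({\bf w}_r^\top{\bf x}_i)$ with ${\bf w}_r={\boldsymbol\mu}_r+{\boldsymbol\sigma}_r\odot{\boldsymbol\epsilon}_r$. Using $\phi'(z)=\mathbb{I}\{z\ge0\}$ for ReLU, the chain rule gives $\partial f({\bf x}_i)/\partial{\boldsymbol\mu}_r=\frac{1}{\sqrt m}v_r\,\mathbb{I}\{{\bf w}_r^\top{\bf x}_i\ge0\}\,{\bf x}_i$ and $\partial f({\bf x}_i)/\partial{\boldsymbol\sigma}_r=\frac{1}{\sqrt m}v_r\,\mathbb{I}\{{\bf w}_r^\top{\bf x}_i\ge0\}\,({\bf x}_i\odot{\boldsymbol\epsilon}_r)$. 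Since $v_r^2=1$, contracting these gradients according to Definition~\ref{eq:ntk} yields
\begin{equation}
\begin{aligned}
\boldsymbol{\Theta}^\mu_{ij}(0)&=\tfrac1m\textstyle\sum_{r}{\bf x}_i^\top{\bf x}_j\,\mathbb{I}\{{\bf w}_r^\top{\bf x}_i\ge0,\,{\bf w}_r^\top{\bf x}_j\ge0\},\\
\boldsymbol{\Theta}^\sigma_{ij}(0)&=\tfrac1m\textstyle\sum_{r}\big(\textstyle\sum_{k}\epsilon_{rk}^2x_{ik}x_{jk}\big)\,\mathbb{I}\{{\bf w}_r^\top{\bf x}_i\ge0,\,{\bf w}_r^\top{\bf x}_j\ge0\},
\end{aligned}
\end{equation}
so both kernels are averages of $m$ independent copies, under the standard normalization $\|{\bf x}_i\|_2=1$.

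Next I would pin down the mean of each summand. For $\boldsymbol{\Theta}^\mu$ it is immediate: at initialization the entries of ${\bf w}_r$ are i.i.d.\ centered Gaussian, so the direction of ${\bf w}_r$ — the only thing the gates depend on — follows exactly the law in Definition~\ref{lem:pntk}, whence $\mathbb{E}[\boldsymbol{\Theta}^\mu_{ij}(0)]=\boldsymbol{\Theta}^\infty_{ij}$. For $\boldsymbol{\Theta}^\sigma$ the factor $\sum_k\epsilon_{rk}^2x_{ik}x_{jk}$ is correlated with the gates through ${\bf w}_r={\boldsymbol\mu}_r+\sigma_0{\boldsymbol\epsilon}_r$, so I would regress each $\epsilon_{rk}$ onto the pair of linear forms $({\bf w}_r^\top{\bf x}_i,{\bf w}_r^\top{\bf x}_j)$: the residual is independent of both gates, its odd cross terms vanish, and with $\mathrm{Var}(\mu_{rk})=\hat\sigma$, $\mathrm{Var}(\epsilon_{rk})=1$ the variance contributions recombine so that the diagonal matches $\boldsymbol{\Theta}^\infty_{ii}$ exactly and any off-diagonal discrepancy is suppressed by a factor $\sigma_0^2/(\hat\sigma+\sigma_0^2)\le1$. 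This is precisely where the $(\sigma_0^2+\hat\sigma^2)$ dependence in the width requirement of Theorem~\ref{thm:opt} originates.

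Finally I would apply concentration entrywise. For $\boldsymbol{\Theta}^\mu$ the summands lie in $[-1,1]$, so Hoeffding gives $\Pr[|\boldsymbol{\Theta}^\mu_{ij}(0)-\boldsymbol{\Theta}^\infty_{ij}|\ge t]\le2\exp(-2mt^2)$; choosing $t=\lambda_0/(4n)$ and union bounding over the $n^2$ pairs forces $m=O(n^2\log(n^2/\delta)/\lambda_0^2)$, matching the hypothesis. Then $\|\boldsymbol{\Theta}^\mu(0)-\boldsymbol{\Theta}^\infty\|_2\le\|\boldsymbol{\Theta}^\mu(0)-\boldsymbol{\Theta}^\infty\|_F\le n\max_{ij}|\boldsymbol{\Theta}^\mu_{ij}(0)-\boldsymbol{\Theta}^\infty_{ij}|\le\lambda_0/4$, and Weyl's inequality yields $\lambda_{\min}(\boldsymbol{\Theta}^\mu(0))\ge\lambda_{\min}(\boldsymbol{\Theta}^\infty)-\lambda_0/4=3\lambda_0/4$, the claimed least-eigenvalue bound (the $\|\cdot\|_2\ge3\lambda_0/4$ in the statement should read $\lambda_{\min}\ge3\lambda_0/4$). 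The main obstacle is the $\boldsymbol{\Theta}^\sigma$ kernel: its summands are unbounded quadratic forms in the Gaussian ${\boldsymbol\epsilon}_r$, so Hoeffding must be replaced by a Bernstein/sub-exponential tail bound, and — more delicately — one must verify that the correlation between $\epsilon_{rk}^2$ and the gates does not shift the mean away from $\boldsymbol{\Theta}^\infty$ beyond the $\lambda_0/4$ budget, which is exactly the step requiring the careful conditional-Gaussian computation sketched above.
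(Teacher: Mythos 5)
Your proposal follows the paper's own proof essentially step for step: the same gradient formulas, the same representation of each kernel entry as an average of $m$ i.i.d.\ per-neuron terms, Hoeffding plus a union bound over the $n^2$ entries for $\boldsymbol{\Theta}^\mu$, the bound of the spectral norm by the Frobenius norm, and the least-eigenvalue conclusion (the paper writes $\|\boldsymbol{\Theta}^\mu({\bf X},{\bf X},0)\|_2\ge 3\lambda_0/4$ but, exactly as you note, what is meant and what the argument delivers is $\lambda_{\min}\ge 3\lambda_0/4$ via Weyl); for $\boldsymbol{\Theta}^\sigma$ both you and the paper replace Hoeffding by a sub-exponential tail bound. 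The $\boldsymbol{\Theta}^\mu$ half of your argument is complete, correct, and matches the paper.

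The divergence is in the mean of $\boldsymbol{\Theta}^\sigma_{ij}(0)$, and here you have identified a real problem that the paper's proof asserts away: the paper claims $\boldsymbol{\epsilon}_r^2$ is independent of the gates $\mathbb{I}\{{\bf w}_r^\top{\bf x}_i\ge0\}\mathbb{I}\{{\bf w}_r^\top{\bf x}_j\ge0\}$ and concludes $\mathbb{E}[\boldsymbol{\Theta}^\sigma_{ij}(0)]=\boldsymbol{\Theta}^\infty_{ij}$, but this independence is false since ${\bf w}_r=\boldsymbol{\mu}_r+\sigma_0\boldsymbol{\epsilon}_r$ makes the gates functions of $\boldsymbol{\epsilon}_r$. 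Your conditional-Gaussian regression is the right tool, and your claim that the diagonal is exact (by sign symmetry) is correct. However, your treatment of the off-diagonal bias leaves a genuine gap: a "suppression factor $\sigma_0^2/(\sigma_0^2+\hat\sigma^2)\le 1$" is a width-independent constant, so increasing $m$ cannot remove it --- concentration drives $\boldsymbol{\Theta}^\sigma(0)$ to its \emph{own} expectation, not to $\boldsymbol{\Theta}^\infty$. Concretely, carrying out your regression for orthogonal unit inputs ${\bf x}_i\perp{\bf x}_j$ (so that $\boldsymbol{\Theta}^\infty_{ij}=0$) gives
\begin{equation*}
\mathbb{E}\big[\boldsymbol{\Theta}^\sigma_{ij}(0)\big]-\boldsymbol{\Theta}^\infty_{ij}
=\frac{\sigma_0^2}{\pi\,(\sigma_0^2+\hat\sigma^2)}\sum_{k} x_{ik}^2\,x_{jk}^2\;>\;0,
\end{equation*}
which does not vanish as $m\to\infty$. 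Accumulated over the $n^2$ entries in the Frobenius norm, this bias contributes a term that in the worst case is of order $n\,\sigma_0^2/(\sigma_0^2+\hat\sigma^2)$, exceeding the $\lambda_0/4$ budget for large $n$ unless one additionally assumes $\sigma_0^2/(\sigma_0^2+\hat\sigma^2)=O(\lambda_0/n)$ or imposes geometric conditions on the data making $\sum_k x_{ik}^2x_{jk}^2$ small --- neither of which appears in the lemma's hypotheses (nor in the paper's proof). So the step you flag as "delicate" is not merely delicate; it is the point at which the stated $\boldsymbol{\Theta}^\sigma$ conclusion cannot be reached by concentration alone, and closing it requires either such an extra smallness condition or restating the lemma with the corrected limiting kernel for the $\sigma$-parameters. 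As written, your proposal (like the paper's own proof, which hides this behind the unjustified independence claim) fully establishes the claim only for $\boldsymbol{\Theta}^\mu$.
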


\begin{remark}
The concentration bound is over randomness of initialization for $\boldsymbol{\mu}_{r}(0)$ and randomness of variance $\boldsymbol{\epsilon}_r$. 
\end{remark}

From the above lemma, we implicitly show that both tangent kernels regarding $\boldsymbol{\mu}_r$ and $\boldsymbol{\sigma}_r$ are equivalent in the infinite-width limit. It is also true in general as long as the reparametrization trick is adopted for training probabilistic neural networks. 
The next problem is that PNTKs are time-dependent matrices, thus varying during gradient descent training. To account for this problem, we build a lemma stating that if the weights ${\bf w}(t)$ is close to ${\bf w}(0)$ during gradient descent training, then the corresponding PNTKs $\boldsymbol{\Theta}^{\mu}(t)$ and $\boldsymbol{\Theta}^{\sigma}(t)$ are close to $\boldsymbol{\Theta}^\infty$ and have a lower bound over smallest eigenvalue $\lambda_0$.
Here we introduce ${\bf w}_{r}(t) \equiv  \boldsymbol{\mu}_{r}(t) + \boldsymbol{\sigma}_{r}(t)  \odot \boldsymbol{\epsilon}_{r}(0)$, where $\boldsymbol{\epsilon}_{r}(0) $ copies sample value  from that in ${\bf w}_{r}(0)$. Notice ${\bf w}_{r}(t)$ in our definition is not exact weights at training step $t$, but we introduce it to facilitate our proof.

\begin{lemma} [PNTK during training] \label{lem:train}
If $\boldsymbol{\mu}_r$ and $\boldsymbol{\sigma}_r$ are initialized the same with Theorem \ref{thm:opt}, for any set of weight vectors ${\bf w}_1,\ldots, {\bf w}_m $ that satisfy for any $r \in [m]$, $\| {\bf w}_{r}(t) - {\bf w}_{r}(0) \| \le  \frac{c \lambda_0 \delta \sqrt{\sigma_0^2+ \hat \sigma^2}}{n^2} \equiv R $, where $c$ is a constant, then with probability at least $1-\delta$, $\big \|\boldsymbol{\Theta}^\mu({\bf X}, {\bf X},t) -\boldsymbol{\Theta}^\infty({\bf X}, {\bf X}) \big \|_2  \le \frac{\lambda_0}{4}$, $\big \|\boldsymbol{\Theta}^\mu({\bf X}, {\bf X},t) \big \|_2  \ge \frac{\lambda_0}{2}$;
and $\big \|\boldsymbol{\Theta}^\sigma({\bf X}, {\bf X},t) -\boldsymbol{\Theta}^\infty({\bf X}, {\bf X}) \big \|_2   \le \frac{\lambda_0}{4}$, $\big \|\boldsymbol{\Theta}^\sigma({\bf X}, {\bf X},t) \big \|_2  \ge \frac{\lambda_0}{2}$.
\end{lemma}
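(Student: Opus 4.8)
The plan is to follow the standard tangent-kernel perturbation strategy: show that moving the weights by at most $R$ changes each kernel entry only slightly, because only a vanishing fraction of neurons change their activation pattern. Writing the network as $f(\mathbf{x})=\frac{1}{\sqrt m}\sum_r v_r\phi(\mathbf{w}_r^\top\mathbf{x})$, the reparametrized gradients are $\nabla_{\boldsymbol{\mu}_r}f(\mathbf{x})=\frac{1}{\sqrt m}v_r\mathbb{I}\{\mathbf{w}_r^\top\mathbf{x}\ge 0\}\,\mathbf{x}$ and $\nabla_{\boldsymbol{\sigma}_r}f(\mathbf{x})=\frac{1}{\sqrt m}v_r\mathbb{I}\{\mathbf{w}_r^\top\mathbf{x}\ge 0\}\,(\mathbf{x}\odot\boldsymbol{\epsilon}_r)$, so that, using $v_r^2=1$, $\boldsymbol{\Theta}^\mu_{ij}(t)=\frac1m\sum_r\mathbf{x}_i^\top\mathbf{x}_j\,\mathbb{I}\{\mathbf{w}_r(t)^\top\mathbf{x}_i\ge 0,\ \mathbf{w}_r(t)^\top\mathbf{x}_j\ge 0\}$ and $\boldsymbol{\Theta}^\sigma_{ij}(t)=\frac1m\sum_r(\mathbf{x}_i\odot\boldsymbol{\epsilon}_r)^\top(\mathbf{x}_j\odot\boldsymbol{\epsilon}_r)\,\mathbb{I}\{\mathbf{w}_r(t)^\top\mathbf{x}_i\ge 0,\ \mathbf{w}_r(t)^\top\mathbf{x}_j\ge 0\}$. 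Since $t$ enters only through the indicators, I would bound the number of neurons whose sign pattern flips between time $0$ and time $t$.

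First I would control the flips. For unit-norm inputs the sign of $\mathbf{w}_r^\top\mathbf{x}_i$ can differ between $\mathbf{w}_r(0)$ and $\mathbf{w}_r(t)$ only if $|\mathbf{w}_r(0)^\top\mathbf{x}_i|\le\|\mathbf{w}_r(t)-\mathbf{w}_r(0)\|\le R$. Because $w_{ri}(0)=\mu_{ri}(0)+\sigma_0\epsilon_{ri}(0)$ with $\mu_{ri}(0)\sim\mathcal{N}(0,\hat\sigma^2)$ and $\epsilon_{ri}(0)\sim\mathcal{N}(0,1)$ independent, $\mathbf{w}_r(0)^\top\mathbf{x}_i\sim\mathcal{N}(0,\sigma_0^2+\hat\sigma^2)$, and the Gaussian small-ball bound gives $\mathbb{P}(|\mathbf{w}_r(0)^\top\mathbf{x}_i|\le R)\le\frac{2R}{\sqrt{2\pi(\sigma_0^2+\hat\sigma^2)}}$. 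Substituting $R=\frac{c\lambda_0\delta\sqrt{\sigma_0^2+\hat\sigma^2}}{n^2}$ makes this probability $O(\lambda_0\delta/n^2)$, which is exactly why $R$ is scaled by $\sqrt{\sigma_0^2+\hat\sigma^2}$. Letting $S_r$ be the event that some input flips for neuron $r$, a union bound over $i\in[n]$ gives $\mathbb{P}(S_r)=O(\lambda_0\delta/n)$, so the flip count $N=\sum_r\mathbb{I}(S_r)$ has $\mathbb{E}[N]=O(m\lambda_0\delta/n)$; by Markov's inequality (or a Bernstein bound for a sharper rate), $N=O(m\lambda_0/n)$ with probability at least $1-\delta/2$.

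Next I convert this into spectral bounds. For the $\boldsymbol{\mu}$-kernel a flipped neuron changes an entry by at most $\frac1m|\mathbf{x}_i^\top\mathbf{x}_j|\le\frac1m$, so $|\boldsymbol{\Theta}^\mu_{ij}(t)-\boldsymbol{\Theta}^\mu_{ij}(0)|\le N/m$ and $\|\boldsymbol{\Theta}^\mu(t)-\boldsymbol{\Theta}^\mu(0)\|_2\le\|\boldsymbol{\Theta}^\mu(t)-\boldsymbol{\Theta}^\mu(0)\|_F\le nN/m$, which is at most $\lambda_0/4$ once $m$ is taken as in Theorem \ref{thm:opt}. The $\boldsymbol{\sigma}$-kernel is the genuinely new ingredient: the per-flip contribution is the \emph{random} quantity $\frac1m(\mathbf{x}_i\odot\boldsymbol{\epsilon}_r)^\top(\mathbf{x}_j\odot\boldsymbol{\epsilon}_r)=\frac1m\sum_k x_{ik}x_{jk}\epsilon_{rk}^2$ rather than a bounded constant, so I first need a uniform high-probability bound on these coefficients. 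A union bound over Gaussian tails gives $\max_{r,k}\epsilon_{rk}^2=O(\log(md/\delta))$ with probability $1-\delta/2$, whence $\max_r|(\mathbf{x}_i\odot\boldsymbol{\epsilon}_r)^\top(\mathbf{x}_j\odot\boldsymbol{\epsilon}_r)|\le\max_k\epsilon_{rk}^2=O(\log(md/\delta))$ by Cauchy–Schwarz; the same Frobenius argument then yields $\|\boldsymbol{\Theta}^\sigma(t)-\boldsymbol{\Theta}^\sigma(0)\|_2\le nN\log(md/\delta)/m\le\lambda_0/4$ for the stated width. Finally I combine with the preceding lemma (PNTK at initialization): since $\|\boldsymbol{\Theta}^{\mu/\sigma}(0)-\boldsymbol{\Theta}^\infty\|_2\le\lambda_0/4$, the triangle inequality controls the distance to $\boldsymbol{\Theta}^\infty$, and Weyl's inequality gives $\lambda_{\min}(\boldsymbol{\Theta}^{\mu/\sigma}(t))\ge\lambda_{\min}(\boldsymbol{\Theta}^\infty)-\lambda_0/2=\lambda_0/2$, after a union bound over the two failure events.

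The main obstacle is the $\boldsymbol{\sigma}$-kernel: unlike the deterministic NTK analyses, the activation indicators and the coefficients $(\mathbf{x}_i\odot\boldsymbol{\epsilon}_r)^\top(\mathbf{x}_j\odot\boldsymbol{\epsilon}_r)$ are both functions of the same noise $\boldsymbol{\epsilon}_r$, so the per-neuron contributions are unbounded and statistically coupled to the flip events; handling this cleanly requires the extra chi-square/Gaussian-max concentration step and care that the small-ball and coefficient bounds hold simultaneously. A secondary subtlety is tracking the variance $\sigma_0^2+\hat\sigma^2$ through the anti-concentration so that the radius $R$ and the final width requirement $m=\Omega(n^6/(\lambda_0^4\delta^3(\sigma_0^2+\hat\sigma^2)))$ emerge with the stated dependence.
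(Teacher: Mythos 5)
Your treatment of $\boldsymbol{\Theta}^\mu$ is essentially the paper's own argument step for step: the same flip events, the same Gaussian small-ball bound $P\big(|\mathbf{w}_r(0)^\top\mathbf{x}_i|\le R\big)\le \frac{2R}{\sqrt{2\pi(\sigma_0^2+\hat\sigma^2)}}$, Markov's inequality, and the Frobenius-to-spectral conversion (the paper applies Markov to $\sum_{ij}|\boldsymbol{\Theta}^\mu_{ij}(t)-\boldsymbol{\Theta}^\mu_{ij}(0)|$ rather than to your flip count $N$, but the computation is identical). One correction of detail: the resulting bound $nN/m \lesssim c\lambda_0$ is forced below $\lambda_0/4$ by taking the constant $c$ in the definition of $R$ small; it is independent of $m$, so "once $m$ is taken as in Theorem \ref{thm:opt}" is not what makes that step close. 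Your Weyl/triangle-inequality endgame, giving $\lambda_{\min}(\boldsymbol{\Theta}^{\mu}(t))\ge\lambda_0/2$, also matches what the paper's proof actually establishes.

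The genuine gap is in your $\boldsymbol{\Theta}^\sigma$ argument. You correctly identify the coupling between the flip indicators and $\boldsymbol{\epsilon}_r$ as the crux, but the tool you choose — the uniform bound $\max_{r,k}\epsilon_{rk}^2=O(\log(md/\delta))$ — is too lossy to close it. Since the per-neuron flip probability is of order $nR\asymp c\lambda_0\delta/n$ \emph{independently of} $m$, your flip count satisfies $N\asymp m\lambda_0/n$ after Markov, so your Frobenius bound becomes $nN\log(md/\delta)/m \asymp c\,\lambda_0\log(md/\delta)$, which \emph{grows} with $m$ and exceeds $\lambda_0/4$ exactly in the over-parameterized regime the lemma addresses; no fixed choice of $c$ repairs this for all $m$. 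The fix (and what the paper does, albeit tersely) is to keep $\epsilon_r^2$ inside the expectation rather than bounding it uniformly: using the paper's convention $\mathbf{w}_r(t)\equiv\boldsymbol{\mu}_r(t)+\boldsymbol{\sigma}_r(t)\odot\boldsymbol{\epsilon}_r(0)$, so that $\boldsymbol{\epsilon}_r^2(t)=\boldsymbol{\epsilon}_r^2(0)$, one bounds $\mathbb{E}\big[\mathbb{I}\{A_{ir}\cup A_{jr}\}\,\epsilon_r^2\big]\le C\,P(A_{ir}\cup A_{jr})=O\big(R/\sqrt{\sigma_0^2+\hat\sigma^2}\big)$ and then runs Markov exactly as in the $\boldsymbol{\Theta}^\mu$ case, with no logarithmic factor. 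The fact making this legitimate — asserted without proof in the paper, but provable — is that the slab event $|\mathbf{w}_r(0)^\top\mathbf{x}_i|\le R$ constrains only one linear functional of $\boldsymbol{\epsilon}_r$, and conditioning a standard Gaussian on such a thin slab inflates second moments like $\mathbb{E}[\epsilon_{rk}^2\mid A_{ir}]$ by at most a constant factor. With that replacement your proof goes through and coincides with the paper's.
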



By the above lemma, we demonstrate that if the change of weight is bounded, then the tangent kernel matrix is closed to its expectation. The next lemma will show that the change of weight is bounded during gradient descent training when the PNTK is close to the limiting kernel Equation (\ref{eq:limit_ntk}).

\begin{lemma} [Change of weights during training]\label{lem:duringtrain}
Suppose $\lambda_{\min}(\boldsymbol{\Theta}(t)) \ge \frac{\lambda_0}{2}$ for $0 < t < T $. Then we have $  \widehat{R}_S(Q,t) \le \exp(-{\lambda_0} t)\widehat{R}_S(Q,0) $, and with probability at least $1-\delta$ over initialization, $\| {\bf w}_{r}(t) - {\bf w}_{r}(0) \|_2 \le \frac{2n}{\sqrt{m \delta}\lambda_0} \equiv R' $.
\end{lemma}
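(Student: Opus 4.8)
The plan is to follow the two-stage template of the standard NTK convergence argument (as in \cite{du2018gradient,arora2019fine}), but carry both parameter families $\boldsymbol{\mu}$ and $\boldsymbol{\sigma}$ and the frozen Gaussian factor $\boldsymbol{\epsilon}_r(0)$ through every estimate. Throughout I work in the gradient-flow (continuous-time) limit of \eqref{eq:repara} with learning rate normalized to one, and since the KL term is neglected at this stage the objective is the realized mean-squared error. Write $\mathbf{r}(t) = f(\mathbf{X},t) - \mathbf{Y} \in \mathbb{R}^n$ for the residual so that $\widehat{R}_S(Q,t) = \tfrac12\|\mathbf{r}(t)\|_2^2$, and let $\boldsymbol{\Theta}(t) = \boldsymbol{\Theta}^{\mu}(\mathbf{X},\mathbf{X},t) + \boldsymbol{\Theta}^{\sigma}(\mathbf{X},\mathbf{X},t)$ denote the combined PNTK of Definition~\ref{eq:ntk}; this is exactly the matrix that appears when the chain rule is applied to the two updates in \eqref{eq:repara} simultaneously.

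First, for the exponential decay, I differentiate the residual. Feeding $\dot{\boldsymbol{\mu}}_r = -\nabla_{\boldsymbol{\mu}_r}\widehat{R}_S$ and $\dot{\boldsymbol{\sigma}}_r = -\nabla_{\boldsymbol{\sigma}_r}\widehat{R}_S$ into $\dot{\mathbf{r}} = \nabla_{\boldsymbol{\mu}} f\,\dot{\boldsymbol{\mu}} + \nabla_{\boldsymbol{\sigma}} f\,\dot{\boldsymbol{\sigma}}$ yields $\dot{\mathbf{r}}(t) = -\boldsymbol{\Theta}(t)\mathbf{r}(t)$, mirroring the gradient-flow output dynamics recalled in the preliminaries. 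Hence $\frac{d}{dt}\widehat{R}_S(Q,t) = \mathbf{r}^{\top}\dot{\mathbf{r}} = -\mathbf{r}^{\top}\boldsymbol{\Theta}(t)\mathbf{r} \le -\lambda_{\min}(\boldsymbol{\Theta}(t))\|\mathbf{r}\|_2^2 \le -\lambda_0\,\widehat{R}_S(Q,t)$, where the last inequality uses the hypothesis $\lambda_{\min}(\boldsymbol{\Theta}(t)) \ge \lambda_0/2$ together with $\|\mathbf{r}\|_2^2 = 2\widehat{R}_S$ (the two factors of $2$ cancel). Grönwall's inequality then gives $\widehat{R}_S(Q,t) \le \exp(-\lambda_0 t)\widehat{R}_S(Q,0)$, and in particular $\|\mathbf{r}(s)\|_2 \le \exp(-\lambda_0 s/2)\|\mathbf{r}(0)\|_2$.

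Second, for the weight movement I bound the per-neuron velocity and integrate. Using the reparametrized gradients $\nabla_{\boldsymbol{\mu}_r} f(\mathbf{x}_i) = \tfrac{1}{\sqrt m}\,v_r\,\phi'(\mathbf{w}_r^{\top}\mathbf{x}_i)\,\mathbf{x}_i$ and $\nabla_{\boldsymbol{\sigma}_r} f(\mathbf{x}_i) = \tfrac{1}{\sqrt m}\,v_r\,\phi'(\mathbf{w}_r^{\top}\mathbf{x}_i)\,(\mathbf{x}_i\odot\boldsymbol{\epsilon}_r(0))$, together with $|v_r|=1$, $|\phi'|\le 1$, $\|\mathbf{x}_i\|_2\le 1$ and Cauchy--Schwarz ($\sum_i|r_i|\le \sqrt n\,\|\mathbf{r}\|_2$), I obtain $\|\dot{\boldsymbol{\mu}}_r(s)\|_2 \le \tfrac{\sqrt n}{\sqrt m}\|\mathbf{r}(s)\|_2$ and the analogous bound for $\dot{\boldsymbol{\sigma}}_r$ carrying one extra factor $\max_i\|\mathbf{x}_i\odot\boldsymbol{\epsilon}_r(0)\|_2$. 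Since $\mathbf{w}_r(t) = \boldsymbol{\mu}_r(t)+\boldsymbol{\sigma}_r(t)\odot\boldsymbol{\epsilon}_r(0)$, the triangle inequality and the decay $\|\mathbf{r}(s)\|_2 \le e^{-\lambda_0 s/2}\|\mathbf{r}(0)\|_2$ give $\|\mathbf{w}_r(t)-\mathbf{w}_r(0)\|_2 \le \tfrac{C\sqrt n}{\sqrt m\,\lambda_0}\|\mathbf{r}(0)\|_2$, where the time integral contributes $\int_0^{\infty} e^{-\lambda_0 s/2}\,ds = 2/\lambda_0$ and $C$ absorbs the $\boldsymbol{\epsilon}_r(0)$ factors. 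The probability $\delta$ enters only through the initial residual: by the symmetric initialization $v_r\sim\mathrm{unif}\{-1,1\}$ the outputs $f(\mathbf{x}_i,0)$ have zero mean and $O(1)$ variance, so $\mathbb{E}\|\mathbf{r}(0)\|_2^2 = O(n)$, and Markov's inequality yields $\|\mathbf{r}(0)\|_2 = O(\sqrt{n/\delta})$ with probability at least $1-\delta$. Substituting gives $\|\mathbf{w}_r(t)-\mathbf{w}_r(0)\|_2 \le \tfrac{2n}{\sqrt{m\delta}\,\lambda_0} = R'$.

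I expect the main obstacle to be controlling the $\boldsymbol{\sigma}$-channel cleanly. Unlike the deterministic case, the $\boldsymbol{\sigma}$-gradient already carries one factor of $\boldsymbol{\epsilon}_r(0)$, and because $\mathbf{w}_r = \boldsymbol{\mu}_r + \boldsymbol{\sigma}_r\odot\boldsymbol{\epsilon}_r(0)$, reconstructing the displacement of $\mathbf{w}_r$ multiplies that movement by $\boldsymbol{\epsilon}_r(0)$ a second time, producing chi-square-type quantities such as $\|\boldsymbol{\epsilon}_r(0)\|_{\infty}$ and the entrywise squares of $\boldsymbol{\epsilon}_r(0)$ that must be controlled \emph{uniformly} over all $m$ neurons. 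A union bound of Gaussian-tail and $\chi^2$ concentration estimates handles this at the cost of a $\mathrm{polylog}(m,d,1/\delta)$ factor, which I would absorb into $C$ (equivalently into the width requirement $m = \Omega(\cdots)$ of Theorem~\ref{thm:opt}); keeping these factors from inflating $R'$ past the admissible radius $R$ of Lemma~\ref{lem:train} is the delicate bookkeeping, and it is precisely this inequality $R' < R$ that later closes the bootstrap in the proof of Theorem~\ref{thm:opt}.
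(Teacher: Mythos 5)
Your overall template---exponential decay of the loss from the least-eigenvalue hypothesis, per-neuron velocity bounds integrated in time, and Markov's inequality on $\widehat{R}_S(Q,0)=O(n/\delta)$ to produce the $\delta$-dependence---is exactly the paper's, and both your first stage and your treatment of the $\boldsymbol{\mu}$-channel coincide with the paper's proof. The genuine divergence is the $\boldsymbol{\sigma}$-channel, and it is precisely where your proposal fails to deliver the lemma as stated. You keep $\boldsymbol{\sigma}_r$ as a moving parameter whose velocity carries the factor $\mathbf{x}_i\odot\boldsymbol{\epsilon}_r(0)$, and you plan to control the resulting Gaussian and chi-square quantities uniformly over all $m$ neurons by union bounds, absorbing $\mathrm{polylog}(m,d,1/\delta)$ factors into a constant $C$. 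But the lemma claims $R'=\frac{2n}{\sqrt{m\delta}\,\lambda_0}$ with an absolute constant and no logarithmic factors; your route can only yield a polylog-inflated radius whose numerator depends on $m$, which is a strictly weaker statement (it could still close the $R'<R$ bootstrap of Lemma~\ref{lem:train}, but only after enlarging the width requirement of Theorem~\ref{thm:opt}).

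The idea you are missing is the paper's one structural observation that is specific to PAC-Bayesian learning: the quantity being minimized is the $Q$-averaged loss, written in the proof as $\widehat{R}_S(Q,t)=\frac{1}{2}\|\mathbb{E}_{f\sim Q}f(\mathbf{X},t)-\mathbf{Y}\|_2^2$, so every gradient passes through an expectation over $\boldsymbol{\epsilon}_r$. Interchanging that expectation with differentiation, the $\boldsymbol{\sigma}_r$-velocity becomes
\begin{equation*}
\Bigl\|\,\frac{d}{dt}\boldsymbol{\sigma}_r(t)\Bigr\|_2
=\Bigl\|\sum_{i=1}^n\bigl(y_i-\widehat f(\mathbf{x}_i)\bigr)\tfrac{1}{\sqrt m}\,
\mathbb{E}_{\boldsymbol{\epsilon}_r}\bigl[v_r\,\mathbf{x}_i\odot\boldsymbol{\epsilon}_r\,
\mathbb{I}(\mathbf{w}_r^{\top}\mathbf{x}_i\ge 0)\bigr]\Bigr\|_2=0,
\end{equation*}
because the zero-mean factor $\boldsymbol{\epsilon}_r$ kills the gradient in expectation. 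Hence $\boldsymbol{\sigma}_r(t)\equiv\boldsymbol{\sigma}_r(0)$, so $\|\mathbf{w}_r(t)-\mathbf{w}_r(0)\|_2=\|\boldsymbol{\mu}_r(t)-\boldsymbol{\mu}_r(0)\|_2$: the factor $\boldsymbol{\epsilon}_r(0)$ multiplies a zero displacement, every chi-square term you anticipate vanishes identically, and the clean constant in $R'$ follows with no union bound at all. Your difficulty is self-inflicted by the opening choice to work with the ``realized'' mean-squared error of a single noise draw; for that loss $\boldsymbol{\sigma}$ genuinely moves and polylog factors are unavoidable, but that is not the loss $\widehat{R}_S(Q,t)$ appearing in the statement. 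This frozen-$\boldsymbol{\sigma}$ phenomenon is also what Section~\ref{sec:4.2} and the gradient-norm comparison in Appendix~\ref{sec:norm} are recording, and it is the step on which the paper's later kernel-ridge-regression characterization (Theorem~\ref{thm:solution}) rests as well.
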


The main gap between PNTK and final results for output function can be filled by writing down the gradient flow dynamics,
\begin{equation}
\begin{aligned}
& \frac{d  f({\bf x}_i,t)}{d t}  = \sum_{j=1}^n ({\bf y}_j-  f({\bf x}_j,t)) (\boldsymbol{\Theta}^{\mu}_{ij} + \boldsymbol{\Theta}^{\sigma}_{ij})
\end{aligned}
\end{equation}
Combined with fact that PNTKs are bounded during training, therefore the behavior of the loss is traceable. To finalize the proof for Theorem \ref{thm:opt}, we construct a final lemma.

\begin{lemma}\label{lem:4.9}
If $R' < R$, we have
$\widehat{R}_S(Q,t)  \le \exp(-\lambda_0 t) \widehat{R}_S(Q,0). 
$
\end{lemma}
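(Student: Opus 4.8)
The plan is to close the apparent circular dependency between Lemma~\ref{lem:train} and Lemma~\ref{lem:duringtrain} by a continuity (bootstrapping) argument in the training time $t$. Lemma~\ref{lem:train} converts a small weight displacement ($\|{\bf w}_r(t)-{\bf w}_r(0)\|\le R$) into the eigenvalue lower bound $\lambda_{\min}(\boldsymbol{\Theta}(t))\ge\lambda_0/2$, whereas Lemma~\ref{lem:duringtrain} converts that eigenvalue bound (held up to some time $T$) into both the claimed exponential decay and the sharper displacement bound $\|{\bf w}_r(t)-{\bf w}_r(0)\|_2\le R'$. The hypothesis $R'<R$ is precisely the slack needed to feed the conclusion of one lemma back into the premise of the other without ever saturating the constraint, so the whole argument amounts to showing the eigenvalue condition never breaks.

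Concretely, I would define the stopping time
$$T=\sup\Big\{\,t\ge 0 : \lambda_{\min}\big(\boldsymbol{\Theta}(s)\big)\ge \tfrac{\lambda_0}{2}\ \text{for all } s\in[0,t]\,\Big\},$$
and argue that $T=\infty$. At $t=0$ the initialization lemma gives $\lambda_{\min}(\boldsymbol{\Theta}(0))\ge 3\lambda_0/4>\lambda_0/2$ for the kernels $\boldsymbol{\Theta}^\mu$ and $\boldsymbol{\Theta}^\sigma$ driving the output dynamics, and since the gradient flow makes $t\mapsto {\bf w}_r(t)$, and hence $t\mapsto\boldsymbol{\Theta}(t)$, continuous, we get $T>0$. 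Suppose for contradiction that $T<\infty$. On $[0,T)$ the premise of Lemma~\ref{lem:duringtrain} holds, so $\|{\bf w}_r(t)-{\bf w}_r(0)\|_2\le R'$ for every $r$ and every $t<T$; by continuity this persists at $t=T$, yielding $\|{\bf w}_r(T)-{\bf w}_r(0)\|_2\le R'<R$. But then the premise of Lemma~\ref{lem:train} is satisfied at $t=T$ with strict slack, so $\lambda_{\min}(\boldsymbol{\Theta}(T))\ge\lambda_0/2$, and by continuity the same inequality holds on a slightly larger interval $[0,T+\varepsilon)$, contradicting the maximality of $T$. Hence $T=\infty$.

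With the eigenvalue condition verified for all $t\ge 0$, I would simply invoke the first conclusion of Lemma~\ref{lem:duringtrain} globally to obtain $\widehat{R}_S(Q,t)\le\exp(-\lambda_0 t)\,\widehat{R}_S(Q,0)$ for every $t$, which is the claim. Here the factor $\lambda_0$ rather than $\lambda_0/2$ in the exponent arises because the combined kernel $\boldsymbol{\Theta}^\mu+\boldsymbol{\Theta}^\sigma$ governs the output dynamics, each summand contributing $\lambda_0/2$ to the least eigenvalue along the interval.

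The step I expect to be the main obstacle is making the continuity argument fully rigorous: one must confirm that $t\mapsto\boldsymbol{\Theta}(t)$ is continuous under the reparameterized gradient flow, so that the set $\{t:\lambda_{\min}(\boldsymbol{\Theta}(t))\ge\lambda_0/2\}$ is closed and its complement open, and that the strict inequality $R'<R$ genuinely supplies an open neighborhood on which Lemma~\ref{lem:train} still applies at $t=T$. A secondary care point is bookkeeping the high-probability events: the displacement bound and the two eigenvalue bounds must hold simultaneously, so the bootstrap should be run on the intersection of the events from Lemma~\ref{lem:train} and Lemma~\ref{lem:duringtrain}, which remains of probability at least $1-\delta$ after absorbing the constants into the width requirement of Theorem~\ref{thm:opt}.
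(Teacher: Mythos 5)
Your proposal is correct, and it actually contains more than the paper's own proof does. The paper's entire proof of this lemma is a single line of arithmetic: substitute $R' = \frac{2n}{\sqrt{m\delta}\,\lambda_0}$ and $R = \frac{c\lambda_0\delta\sqrt{\sigma_0^2+\hat\sigma^2}}{n^2}$ into the hypothesis $R'<R$ and solve for the width, obtaining $m = \Omega\bigl(\frac{n^6}{\lambda_0^4\delta^4(\sigma_0^2+\hat\sigma^2)}\bigr)$ (incidentally, this exponent of $\delta$ disagrees with the $\delta^3$ stated in \cref{thm:opt} --- a minor bookkeeping slip that confirms the proof is purely this computation). In other words, the paper only records the quantitative width condition under which $R'<R$ holds, and leaves entirely implicit the logical content of the lemma: why $R'<R$ allows the conclusions of \cref{lem:train} and \cref{lem:duringtrain} to be fed into each other's premises for all $t$ without circularity. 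That missing step is exactly the continuation argument of Du et al., which the paper cites as its proof template but never writes down. Your stopping-time bootstrap is precisely that argument, and your treatment of the two delicate points --- continuity of $t\mapsto\boldsymbol{\Theta}(t)$ along the gradient flow, and the fact that the \emph{strict} inequality $R'<R$ yields an interval $[T,T+\varepsilon)$ on which \cref{lem:train} still applies, contradicting maximality of $T$ --- is the right way to make it rigorous; likewise your remark that the high-probability events of the two lemmas must be intersected (with constants absorbed into the width) is a genuine gap in the paper's exposition that your version closes. So: same intended approach (yours is the argument the paper's lemma structure is built for), but you supplied the actual proof, while the paper recorded only its final arithmetic consequence.
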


\subsection{Training with KL divergence}\label{sec:4.2}

According to Equation (\ref{eq:target}), there is a KL divergence term in the objective function. We expand the KL-divergence for two Gaussian distribution, $p({w}({t})) = \mathcal{N}({\mu}_{t}, {\sigma}_{t})$, and $p({ w}({0})) = \mathcal{N}({\mu}_{0}, {\sigma}_{0})$,
\begin{equation}
\text{KL} = \frac{1}{2} \bigg (\log \frac{{\sigma}_{0}}{{\sigma}_{t}} + \frac{({\mu}_{t}-{\mu}_{0})^2}{{\sigma}^2_{0}}+\frac{\sigma_t}{\sigma_0}-1    \bigg)
\end{equation}




We compare the gradient of mean weights $\boldsymbol{\mu}$ and variance weights $\boldsymbol{\sigma}$. With a direct calculation, we have $\frac{\partial  f({\bf x}_i)}{\partial \boldsymbol{\mu}_r} = \frac{1}{\sqrt{m}} {v}_r  \mathbb{I} ( {\bf w}^\top_r {\bf x}_i \ge 0) {\bf x}_i $ and $\frac{\partial f({\bf x}_i)}{\partial \boldsymbol{\sigma}_r} = \frac{1}{\sqrt{m}} {v}_r  \mathbb{I} ( {\bf w}^\top_r {\bf x}_i \ge 0) {\bf x}_i  \odot \boldsymbol{\epsilon}_r$. It is shown that there is one more random variable $\boldsymbol{\epsilon}_r$ associated with the gradient regarding variance weights, which leads to the expected gradient norm to be zero. To verify this result, we conduct an experiment and leave the results in the {Appendix \ref{sec:norm}}. Therefore, it is  approximately equivalent to fix $\boldsymbol{\sigma}$ during gradient descent training. With this assumption we arrive at the conclusion:

\begin{theorem} \label{thm:solution}
Consider gradient descent on target function (\ref{eq:target}), with the initialization stated in Theorem \ref{thm:opt}. Suppose $m \ge {\rm poly}({n},1/\lambda_0, 1/\delta,1/ \mathcal{E})$. Then with probability at least $1-\delta$ over the random initialization, we have
\begin{equation}
 {\widehat{f}({\bf x},\infty)} = \boldsymbol{\Theta}^\infty({\bf x},{\bf X}) \big (\boldsymbol{\Theta}({\bf X},{\bf X}) + \lambda/{\sigma^2_0} {\bf I} \big)^{-1} {\bf Y} \pm \mathcal{E}
\end{equation}
where $\widehat{f}({\bf x},t) = \mathbb{E}_{f \sim Q} f({\bf x},t) $ aligns with the definition of empirical loss function. 
\end{theorem}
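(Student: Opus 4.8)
The plan is to reduce the two-parameter regularized dynamics to a frozen-variance, ridge-regularized problem in $\boldsymbol{\mu}$ alone, and then reuse the linearization machinery behind Theorem~\ref{thm:opt}. First I would make rigorous the claim of Section~\ref{sec:4.2} that $\boldsymbol{\sigma}$ can be held at $\sigma_0$. Because $\frac{\partial f({\bf x}_i)}{\partial \boldsymbol{\sigma}_r}=\frac{1}{\sqrt m}v_r\mathbb{I}({\bf w}_r^\top{\bf x}_i\ge0){\bf x}_i\odot\boldsymbol{\epsilon}_r$ carries the extra mean-zero factor $\boldsymbol{\epsilon}_r$, the $\boldsymbol{\sigma}$-gradient has zero mean and, averaged over the $m$ neurons, concentrates at scale $1/\sqrt m$; I would convert this into a high-probability bound showing $\sup_t\|\boldsymbol{\sigma}_r(t)-\sigma_0\|$ is negligible, so that $\boldsymbol{\Theta}^\sigma$ and $\sigma_t$ may be replaced by their initial values with an error pushed into $\mathcal{E}$. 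With $\boldsymbol{\sigma}$ frozen, the Gaussian KL of Section~\ref{sec:4.2} collapses, up to an additive constant, to $\frac{\|\boldsymbol{\mu}_t-\boldsymbol{\mu}_0\|_2^2}{2\sigma_0^2}$, so $\mathcal{B}$ becomes ridge-regularized least squares driven by $\boldsymbol{\Theta}^\mu$.

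Next I would track the averaged training outputs $\widehat{f}({\bf X},t)=\mathbb{E}_{f\sim Q}f({\bf X},t)$. Substituting $\frac{d\boldsymbol{\mu}}{dt}=-\nabla_{\boldsymbol{\mu}}\mathcal{B}$ and linearizing the network around initialization (so that $\nabla_{\boldsymbol{\mu}}\widehat{f}({\bf X},t)(\boldsymbol{\mu}_t-\boldsymbol{\mu}_0)=\widehat{f}({\bf X},t)-\widehat{f}({\bf X},0)$ up to an $O(R^2)$ remainder) gives, writing $g(t)=\widehat{f}({\bf X},t)$,
\[
\frac{dg}{dt} = -\tfrac1n\boldsymbol{\Theta}^\mu(t)\big(g-{\bf Y}\big)-\tfrac{\lambda}{n\sigma_0^2}\big(g-g(0)\big).
\]
Lemmas~\ref{lem:train} and~\ref{lem:duringtrain} keep $\boldsymbol{\Theta}^\mu(t)$ within $\lambda_0/4$ of $\boldsymbol{\Theta}^\infty$ throughout, so $\boldsymbol{\Theta}^\mu(t)\to\boldsymbol{\Theta}^\infty$ in the width limit. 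Both terms carry the $1/n$ normalization, so equating the right-hand side to zero and clearing it leaves the effective ridge coefficient $\lambda/\sigma_0^2$, and positive definiteness of $\boldsymbol{\Theta}^\infty+\tfrac{\lambda}{\sigma_0^2}{\bf I}$ yields exponential convergence to the unique fixed point $g_\infty=(\boldsymbol{\Theta}^\infty+\tfrac{\lambda}{\sigma_0^2}{\bf I})^{-1}(\boldsymbol{\Theta}^\infty{\bf Y}+\tfrac{\lambda}{\sigma_0^2}g(0))$.

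To obtain the predictor at a test point ${\bf x}$, I would write the same ODE for $\widehat{f}({\bf x},t)$ with the cross kernel $\boldsymbol{\Theta}^\mu({\bf x},{\bf X},t)\to\boldsymbol{\Theta}^\infty({\bf x},{\bf X})$, and substitute the residual $g_\infty-{\bf Y}=\tfrac{\lambda}{\sigma_0^2}(\boldsymbol{\Theta}^\infty+\tfrac{\lambda}{\sigma_0^2}{\bf I})^{-1}(g(0)-{\bf Y})$ computed above. Setting $\dot{\widehat{f}}({\bf x},\infty)=0$ then gives $\widehat{f}({\bf x},\infty)=\widehat{f}({\bf x},0)+\boldsymbol{\Theta}^\infty({\bf x},{\bf X})(\boldsymbol{\Theta}^\infty+\tfrac{\lambda}{\sigma_0^2}{\bf I})^{-1}({\bf Y}-g(0))$. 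Arranging $\widehat{f}(\cdot,0)\equiv0$ by the standard antisymmetric-initialization trick (so both $g(0)$ and $\widehat{f}({\bf x},0)$ drop) reduces this to the claimed $\boldsymbol{\Theta}^\infty({\bf x},{\bf X})(\boldsymbol{\Theta}^\infty({\bf X},{\bf X})+\tfrac{\lambda}{\sigma_0^2}{\bf I})^{-1}{\bf Y}$; replacing $\boldsymbol{\Theta}^\infty({\bf X},{\bf X})$ in the inverse by the empirical $\boldsymbol{\Theta}({\bf X},{\bf X})$, which differ by at most $\lambda_0/4$ in spectral norm, perturbs the output by a controlled amount folded into $\pm\mathcal{E}$.

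Finally I would collect every approximation — the frozen-$\boldsymbol{\sigma}$ step, the $O(R^2)$ linearization remainder, the init-to-limit gap of $\boldsymbol{\Theta}^\mu(t)$ and $\boldsymbol{\Theta}({\bf x},{\bf X})$, and the concentration of the $\boldsymbol{\epsilon}$-average defining $\widehat{f}$ — bounding each by a polynomially small quantity and choosing $m\ge{\rm poly}(n,1/\lambda_0,1/\delta,1/\mathcal{E})$ so that their sum is at most $\mathcal{E}$ with probability $1-\delta$. The step I expect to be the main obstacle is making the frozen-$\boldsymbol{\sigma}$ reduction rigorous in tandem with the linearization: unlike the pure-$\widehat{R}_S$ setting of Theorem~\ref{thm:opt}, the ridge term feeds $\boldsymbol{\mu}_t-\boldsymbol{\mu}_0$ back into the dynamics, so I must simultaneously bound the weight displacement $R$, verify it keeps $\boldsymbol{\Theta}^\mu(t)$ in its window, and confirm that the $\boldsymbol{\epsilon}_r$-driven fluctuations of the $\boldsymbol{\sigma}$-gradient never accumulate enough to move $\sigma_t$ and thereby shift the effective ridge parameter $\lambda/\sigma_0^2$.
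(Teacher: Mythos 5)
Your proposal follows essentially the same route as the paper's proof: freeze $\boldsymbol{\sigma}$ (justified by the vanishing $\boldsymbol{\epsilon}_r$-weighted gradient) so the KL term collapses to an $\ell_2$ ridge on $\boldsymbol{\mu}$, linearize the network around initialization, solve the resulting linear dynamics to obtain the kernel ridge regression predictor, and fold the frozen-$\boldsymbol{\sigma}$, linearization, kernel-deviation, and initial-output errors into $\pm\mathcal{E}$ under a polynomial width requirement. The remaining differences are cosmetic bookkeeping: the paper integrates the weight-space ODE for $\boldsymbol{\mu}(t)$ explicitly (obtaining the matrix-exponential solution and sending $t\to\infty$) and absorbs the nonzero initial output into an $\mathcal{E}_{\rm init}$ term via the coupling argument it cites from Arora et al., whereas you work with the output-space fixed point and the antisymmetric-initialization trick.
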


The theorem above reveals that the regularization effect of the KL term in PAC-Bayesian learning, and presents an explicit expression for the convergence result of the output function. 


\subsection{Generalization analysis}\label{sec:4.3}

Recall that in Theorem \ref{thm_lambda}, the PAC-Bayesian bound with respect to the distribution at initialization and after optimization is given. Therefore, combined with results from Theorem \ref{thm:solution}, we can directly provide a generalization bound for PAC-Bayesian learning with ultra-wide networks. 

\begin{theorem} [PAC-Bayesian bound with NTK] \label{thm:PAC}
Suppose data $S = \{ ({\bf x}_i,y_i)\}_{i=1}^n$ are i.i.d. samples from a non-degenerate distribution $\mathcal{D}$, and $m \ge {\rm poly}(n, \lambda_0^{-1}, \delta^{-1})$. Consider any loss function $\ell: \mathbb{R} \times \mathbb{R} \rightarrow [0,1]$. Then with probability at least $1-\delta$ over the random initialization and the training samples, the probabilistic network trained by gradient descent for $T \ge \Omega(\frac{1}{\eta \lambda_0} \log \frac{n}{\delta})$ iterations has population risk $R_{\mathcal{D}}(Q)$ that is bounded as follows:
\small{
\begin{equation}
\begin{aligned}
   R_{\mathcal{D}} & \le \frac{  {\bf Y}^\top (\boldsymbol{\Theta}+ {\lambda}/{\sigma^2_0} {\bf I})^{-1} {\bf Y}}{n\sigma^2_0} + \frac{ \lambda}{\sigma^2_0}  \sqrt{ \frac{ {\bf Y}^\top(\boldsymbol{\Theta}+  {\lambda}/{\sigma^2_0} {\bf I})^{-2} {\bf Y}} {n}} \\ & + O \bigg(\frac{\log \frac{2\sqrt{n}}{\delta}}{n} \bigg).
   \end{aligned}
\end{equation}}
\end{theorem}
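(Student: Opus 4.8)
\textbf{Proof Proposal for Theorem \ref{thm:PAC}.}

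The plan is to combine the PAC-Bayes-$\lambda$ bound of Theorem \ref{thm_lambda} with the explicit convergence result of Theorem \ref{thm:solution}, then control the two data-dependent terms—the empirical risk $\widehat{R}_S(Q)$ and the KL divergence $\mathrm{KL}(Q\|Q_0)$—using the kernel-regression structure. I would start from the inequality in Theorem \ref{thm_lambda} with $\bar\lambda=2$ absorbed into the hyperparameter $\lambda$ as in the objective \eqref{eq:target}, so that the population risk is bounded by a term proportional to $\widehat{R}_S(Q)$ plus a term proportional to $\mathrm{KL}(Q\|Q_0)/n$ plus the residual $O(\log(2\sqrt n/\delta)/n)$. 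The task then reduces to evaluating these two main quantities at the trained posterior $Q$ obtained after $T\ge\Omega\big(\tfrac{1}{\eta\lambda_0}\log\tfrac{n}{\delta}\big)$ iterations.

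First I would handle the KL term. Since the posterior and prior are Gaussian with means $\boldsymbol\mu_t,\boldsymbol\mu_0$ and (approximately fixed) variance $\sigma_0^2$, the KL expansion in Section \ref{sec:4.2} collapses to $\mathrm{KL}(Q\|Q_0)=\frac{1}{2\sigma_0^2}\|\boldsymbol\mu_t-\boldsymbol\mu_0\|_2^2$ up to lower-order variance terms. The key observation is that the total movement of the mean parameters is exactly what drives the output from $f_0$ to the kernel-ridge solution, so $\|\boldsymbol\mu_t-\boldsymbol\mu_0\|_2^2$ can be expressed through the PNTK and the labels. Using the gradient-flow dynamics and the fact that $\boldsymbol\Theta^\mu\to\boldsymbol\Theta^\infty$, I expect this squared displacement to evaluate to ${\bf Y}^\top(\boldsymbol\Theta+\lambda/\sigma_0^2\,{\bf I})^{-1}{\bf Y}$ in the infinite-width limit, which after dividing by $n\sigma_0^2$ produces the leading term of the bound. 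This mirrors the standard NTK computation where the RKHS norm of the interpolant equals ${\bf Y}^\top\boldsymbol\Theta^{-1}{\bf Y}$, here regularized by the ridge parameter $\lambda/\sigma_0^2$.

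Next I would bound the empirical risk $\widehat{R}_S(Q)$. From Theorem \ref{thm:solution} the trained predictor equals the kernel-ridge regressor up to error $\mathcal{E}$, so the residual vector on the training set is $\big(\boldsymbol\Theta+\lambda/\sigma_0^2\,{\bf I}\big)^{-1}(\lambda/\sigma_0^2){\bf Y}$; squaring and taking the $\ell_2$ norm, together with the $1$-Lipschitz / bounded loss assumption $\ell\in[0,1]$ and Cauchy–Schwarz over the $n$ samples, should yield the middle term $\frac{\lambda}{\sigma_0^2}\sqrt{{\bf Y}^\top(\boldsymbol\Theta+\lambda/\sigma_0^2\,{\bf I})^{-2}{\bf Y}/n}$. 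Finally, since the width is taken large enough that $\mathcal{E}$ and the kernel-perturbation errors from Lemmas \ref{lem:train}--\ref{lem:4.9} are subsumed into the $O(\log(2\sqrt n/\delta)/n)$ residual, and since $R_{\mathcal D}(Q)\le1$ lets me replace the $R_{\mathcal D}(Q)$ factor on the right of \eqref{eq:bound_sqrt} by a constant, the three contributions assemble into the claimed inequality after a union bound over the initialization event and the sampling event.

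The main obstacle will be making the passage from the finite-width trained network to the clean infinite-width kernel quantities fully rigorous while keeping every approximation inside the $O(\log(2\sqrt n/\delta)/n)$ budget. Concretely, the displacement $\|\boldsymbol\mu_t-\boldsymbol\mu_0\|_2^2$ must be tied to ${\bf Y}^\top(\boldsymbol\Theta+\lambda/\sigma_0^2\,{\bf I})^{-1}{\bf Y}$ despite the time-varying PNTK, the finite-$T$ truncation, the approximate-fixing of $\boldsymbol\sigma$, and the replacement of $\boldsymbol\Theta^\mu+\boldsymbol\Theta^\sigma$ by $2\boldsymbol\Theta^\infty$; each of these introduces a perturbation that I would have to show is $o(1/\sqrt n)$ under the stated polynomial width. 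Controlling these error terms simultaneously—rather than the single interpolation error in the non-regularized NTK analysis—is exactly the elaborate term-by-term bounding the introduction warns about, and it is where the bulk of the technical work lies.
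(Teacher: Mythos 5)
Your proposal is correct and takes essentially the same route as the paper's proof: start from the PAC-Bayes-$\lambda$ bound, control the empirical risk by the kernel-ridge residual $\frac{\lambda}{\sigma_0^2}(\boldsymbol{\Theta}+\lambda/\sigma_0^2\,{\bf I})^{-1}{\bf Y}$ via Cauchy--Schwarz (giving the middle term), and control the KL term through the mean displacement $\|\boldsymbol\mu_\infty-\boldsymbol\mu_0\|_2^2$ computed from the gradient-flow ODE solution (giving the leading term). One small correction: that displacement equals ${\bf Y}^\top(\boldsymbol{\Theta}+\lambda/\sigma_0^2\,{\bf I})^{-1}\boldsymbol{\Theta}(\boldsymbol{\Theta}+\lambda/\sigma_0^2\,{\bf I})^{-1}{\bf Y}$, which the paper then upper-bounds by ${\bf Y}^\top(\boldsymbol{\Theta}+\lambda/\sigma_0^2\,{\bf I})^{-1}{\bf Y}$ rather than "evaluating exactly" to it as you write, but since only an upper bound is needed this does not affect the argument.
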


In this theorem, we establish a reasonable generalization bound for PAC-Bayesian learning framework, thus providing a theoretical guarantee. We further demonstrate the advantage of PAC-Bayesian learning by comparing it with the Rademacher complexity-based generalization bound for deterministic neural network with kernel ridge solution.  

\begin{theorem} [Rademacher bound with NTK] \label{thm:Rademacher}
Suppose data $S = \{ ({\bf x}_i,y_i)\}_{i=1}^n$ are i.i.d. samples from a non-degenerate distribution $\mathcal{D}$, and $m \ge {\rm poly}(n, \lambda_0^{-1}, \delta^{-1})$. Consider any loss function $\ell: \mathbb{R} \times \mathbb{R} \rightarrow [0,1]$. Then with probability at least $1-\delta$ over the random initialization and training samples, the network trained by gradient descent for $T \ge \Omega(\frac{1}{\eta \lambda_0} \log \frac{n}{\delta})$ iterations has population risk $R_{\mathcal{D}}(Q)$ that is bounded as follows:
\small{
\begin{equation}
\begin{aligned}
   R_{\mathcal{D}} & \le  \sqrt{\frac{  {\bf Y}^\top (\boldsymbol{\Theta}+ {\lambda}/{\sigma^2_0} {\bf I})^{-1} {\bf Y}}{n}} + \frac{\lambda}{\sigma^2_0}  \sqrt{ \frac{ {\bf Y}^\top(\boldsymbol{\Theta}+  {\lambda}/{\sigma^2_0} {\bf I})^{-2} {\bf Y}} {n}} \\ & +O \bigg(\sqrt{\frac{\log \frac{n}{\lambda_0 \delta}}{n}} \bigg).
   \end{aligned}
\end{equation}}
\end{theorem}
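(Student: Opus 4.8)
The plan is to reduce the trained network to the kernel ridge regression predictor in the reproducing kernel Hilbert space (RKHS) induced by the limiting kernel $\boldsymbol{\Theta}^\infty$, and then invoke the classical Rademacher-complexity generalization theorem, following the kernel-ridge treatment of \citep{hu2019simple} adapted to the PNTK. Write $\beta=\lambda/\sigma_0^2$ and let $f^\star(\mathbf{x})=\boldsymbol{\Theta}^\infty(\mathbf{x},\mathbf{X})(\boldsymbol{\Theta}+\beta\mathbf{I})^{-1}\mathbf{Y}$ denote the kernel ridge predictor. By \Cref{thm:solution} the expected network output $\widehat f(\mathbf{x},T)$ equals $f^\star(\mathbf{x})$ up to an error $\mathcal{E}$ that can be driven below any target polynomially in the width $m$ once $T=\Omega(\frac{1}{\eta\lambda_0}\log\frac{n}{\delta})$. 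Hence it suffices to generalization-bound $f^\star$ and add the negligible coupling error.

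First I would control the RKHS norm of $f^\star$. Writing $\boldsymbol{\alpha}=(\boldsymbol{\Theta}+\beta\mathbf{I})^{-1}\mathbf{Y}$, the reproducing property gives $\|f^\star\|_{\mathcal{H}}^2=\boldsymbol{\alpha}^\top\boldsymbol{\Theta}\boldsymbol{\alpha}=\mathbf{Y}^\top(\boldsymbol{\Theta}+\beta\mathbf{I})^{-1}\boldsymbol{\Theta}(\boldsymbol{\Theta}+\beta\mathbf{I})^{-1}\mathbf{Y}\le\mathbf{Y}^\top(\boldsymbol{\Theta}+\beta\mathbf{I})^{-1}\mathbf{Y}$, where the inequality uses $\boldsymbol{\Theta}\preceq\boldsymbol{\Theta}+\beta\mathbf{I}$. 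Thus $f^\star$ lies in the RKHS ball $\mathcal{F}_B=\{f:\|f\|_{\mathcal{H}}\le B\}$ with $B=\sqrt{\mathbf{Y}^\top(\boldsymbol{\Theta}+\beta\mathbf{I})^{-1}\mathbf{Y}}$. The empirical Rademacher complexity of such a ball obeys $\widehat{\mathfrak{R}}_S(\mathcal{F}_B)\le\frac{B}{n}\sqrt{\Tr(\boldsymbol{\Theta})}$, and since the inputs are normalized the diagonal entries of $\boldsymbol{\Theta}^\infty$ are $O(1)$, giving $\Tr(\boldsymbol{\Theta})=O(n)$ and therefore $\widehat{\mathfrak{R}}_S(\mathcal{F}_B)=O\big(\sqrt{\mathbf{Y}^\top(\boldsymbol{\Theta}+\beta\mathbf{I})^{-1}\mathbf{Y}/n}\big)$, which is the first term.

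Next I would bound the empirical risk of $f^\star$. The in-sample residual of kernel ridge regression is $\mathbf{Y}-\boldsymbol{\Theta}(\boldsymbol{\Theta}+\beta\mathbf{I})^{-1}\mathbf{Y}=\beta(\boldsymbol{\Theta}+\beta\mathbf{I})^{-1}\mathbf{Y}$, so $\frac1n\|\mathbf{Y}-f^\star(\mathbf{X})\|_2^2=\frac{\beta^2}{n}\mathbf{Y}^\top(\boldsymbol{\Theta}+\beta\mathbf{I})^{-2}\mathbf{Y}$. Assuming $\ell$ is $1$-Lipschitz in its second argument with $\ell(y,y)=0$, Cauchy--Schwarz yields $\widehat R_S(f^\star)\le\frac1n\sum_i|y_i-f^\star(\mathbf{x}_i)|\le\sqrt{\frac1n\|\mathbf{Y}-f^\star(\mathbf{X})\|_2^2}=\beta\sqrt{\mathbf{Y}^\top(\boldsymbol{\Theta}+\beta\mathbf{I})^{-2}\mathbf{Y}/n}$, matching the second term. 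I would then assemble these using the standard bound $R_{\mathcal{D}}(f^\star)\le\widehat R_S(f^\star)+2\widehat{\mathfrak{R}}_S(\ell\circ\mathcal{F}_B)+O\big(\sqrt{\log(1/\delta)/n}\big)$, with Talagrand's contraction lemma stripping the $1$-Lipschitz loss from the complexity term, and finally fold in the coupling error $\mathcal{E}$ from \Cref{thm:solution}; tracking the dependence on $T$, $\lambda_0$, and the number of events over which we union-bound produces the $O(\sqrt{\log(n/(\lambda_0\delta))/n})$ lower-order term.

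The main obstacle is the uniform coupling in the first step: I must show that the finite-width network trained by discrete gradient descent realizes a predictor whose RKHS norm and whose values on both the sample and the population are within $\mathcal{E}$ of the idealized predictor $f^\star$, uniformly over the input domain rather than merely at the training points. This requires combining the kernel-stability lemmas (\Cref{lem:train}, \Cref{lem:duringtrain}) with the explicit solution of \Cref{thm:solution}, and verifying that the approximation error inflates neither the RKHS radius $B$ nor the residual beyond lower-order terms; the probabilistic reparameterization --- with the extra randomness $\boldsymbol{\epsilon}_r$ and the second kernel $\boldsymbol{\Theta}^\sigma$ --- is what makes this control more delicate than in the deterministic analyses of \citep{arora2019fine,hu2019simple}.
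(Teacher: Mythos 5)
Your proposal follows essentially the same route as the paper's proof: identify the trained network with the kernel ridge regression predictor, bound its RKHS norm by $\sqrt{\mathbf{Y}^\top(\boldsymbol{\Theta}+\lambda/\sigma_0^2\,\mathbf{I})^{-1}\mathbf{Y}}$, bound the empirical risk via the KRR residual $\beta(\boldsymbol{\Theta}+\beta\mathbf{I})^{-1}\mathbf{Y}$, control the RKHS ball's Rademacher complexity by $B\sqrt{\Tr(\boldsymbol{\Theta})}/n$ with $\Tr(\boldsymbol{\Theta})=O(n)$, and assemble with the standard Rademacher generalization theorem. Your additions (explicit $1$-Lipschitz assumption with Talagrand contraction, and tracking the finite-width coupling error via \Cref{thm:solution}) are refinements the paper leaves implicit rather than a different argument.
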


Theorem \ref{thm:Rademacher} is obtained by following Theaorem 5.1 in \cite{hu2019simple}, who presents a Rademacher complexity-based generalization bound for ultra-wide neural networks with kernel ridge regression solution. Similar analysis for kernel regression with NTK can be found in \cite{arora2019fine,cao2019generalization}.

The main difference between two generalization bound is $\frac{  {\bf Y}^\top (\boldsymbol{\Theta}+ {\lambda}/{\sigma^2_0} {\bf I})^{-1} {\bf Y}}{n}$ versus $\sqrt{\frac{  {\bf Y}^\top (\boldsymbol{\Theta}+ {\lambda}/{\sigma^2_0} {\bf I})^{-1} {\bf Y}}{n}}$, which is due to that PAC-Bayesian bound count the KL divergence while Rademacher bound calculate the reproducing kernel Hilbert space (RKHS) norm. We find that the rates of convergence are different. One is $O(1/n)$ and another is $O(1/\sqrt{n})$. Therefore, we conclude that the PAC-Bayesian bound has an improvement over Rademacher complexity-based bound.

\begin{figure}[t!]
\centering
\subfloat{\includegraphics[width=0.25\textwidth]{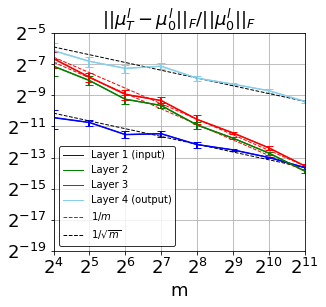}} 
\subfloat{\includegraphics[width=0.25 \textwidth]{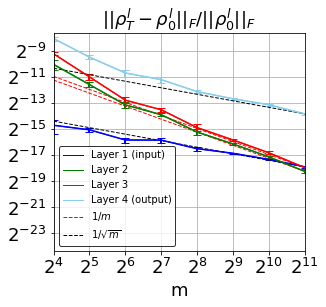}}
\vspace*{-4mm}
\caption{Relative Frobenius norm change in $\mu$ and $\rho$ respectively during training with MSE loss that derived from the classic PAC-Bayesian bound, where $m$ is the width of the network.} 
\label{fig:1} 
\end{figure}

\section{Experiments}
In this section, we first provide empirical support showing that the training dynamics of wide probabilistic neural networks using the training objective derived from a PAC-Bayes bound are captured by PNTK, which validates our Lemma \ref{lem:duringtrain}. Second, as an extension of our finding of PAC-Bayesian bound in Theorem \ref{thm:PAC}, we provide a training-free metric to approximate the PAC-Bayesian bound via NTK, which can be used to select the best hyperparameters without involving any training and eliminate excessive computation time.

\subsection{Experimental setup}
In all experiments, the NTK parameterization is chosen for initializing parameters, which follows Equation (\ref{eq:network}). Specifically, the initial mean for weights, $\mu_0$, is sampled from a truncated Gaussian distribution with a mean of 0 and a standard variance of $1$, truncating at 2 standard deviations. To make sure that the variance is positive, the initial variance for weight is transformed from the given value of $\rho_0$ through formula $\sigma_0 = \log(1 + \exp(\rho_0))$.


For the experiment in section \ref{tdwp}, we consider a three hidden layer ReLU fully-connected network of the training objective derived from the PAC-Bayesian lambda bound in Equation (\ref{eq:lambda}), used an ordinary MSE function as loss, trained with full-batch gradient descent using learning rates equal to 1 on a fixed subset of MNIST ($|D|=128$) of ten-classification. An random initialized prior with no connection to data is used since it is inline with our theoretical setting and we only intend to observe the change in parameters rather than the performance of the actual bound. 

In section \ref{PFHS}, we use both fully connected and convoluted neural network structures to perform experiments on MINIST and CIFAR-10 datasets for demonstrating the effectiveness of our training-free PAC-Bayesian network bound for searching hyperparameters under different datasets and network structures. We build a 3 layers fully-connected neural network with 600 neurons on each layer.  Another convolutional architecture with a total of 13 layers with around 10 million learnable parameters is constructed. We adopt data-dependent prior since it is a practical and popular method \cite{dziugaite2020role,perez2021learning}.  Specifically, this data-dependent prior is pre-trained on a subset of total training data with empirical risk minimization. The networks for posterior training is then initialized by the weights learnt from prior.  At last, the generalization bound is computed using Equation (\ref{eq:lambda}). Bound computing-related settings are referred to the work done by \citet{tight_pac_bay}, such as, confidence parameters for the risk certificate and chernoff bound, and the number of monte carlo samples for estimating the risk certificate.

 \subsection{Validation of theoretical results} \label{tdwp}
 
After $T=2^{17}$ steps of gradient descent updates from different random initialization, we plot the changes of (input/output/hidden layer) $\boldsymbol{\mu}$ and $\boldsymbol \rho$ for weights corresponded to the variation in width $m$ for each layer on Figure \ref{fig:1}. We observe that the relative Frobenius norm change in input/output layer's weights scales as $1/\sqrt{m}$ while hidden layers' weights scales as $1/m$ during the training, which verifies our lemma \ref{lem:duringtrain}.

 \begin{figure*}[t!]
\centering
 \subfloat{\includegraphics[width=0.28\textwidth]{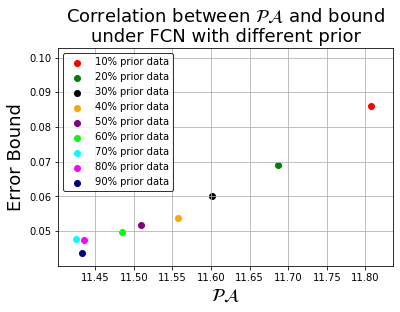}} 
 \subfloat{\includegraphics[width=0.28\textwidth]{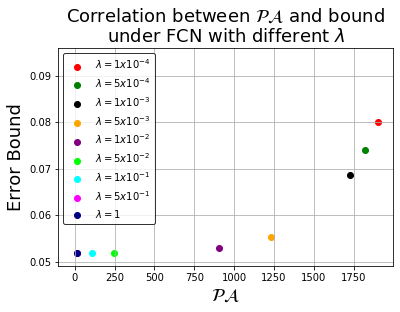}} 
\subfloat{\includegraphics[width=0.28\textwidth]{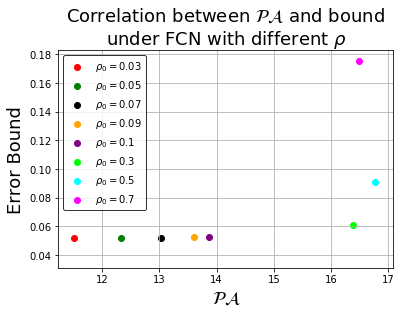}}
\vspace*{-4mm}
 \subfloat{\includegraphics[width=0.28\textwidth]{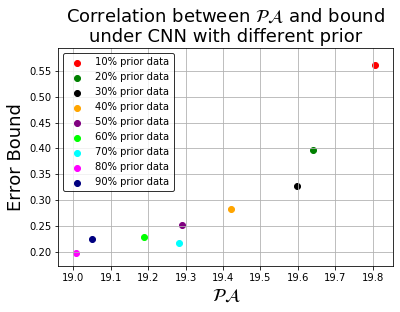}} 
 \subfloat{\includegraphics[width=0.28\textwidth]{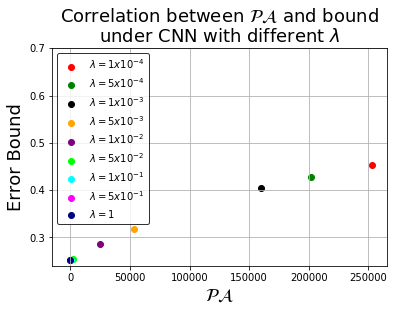}} 
\subfloat{\includegraphics[width=0.28\textwidth]{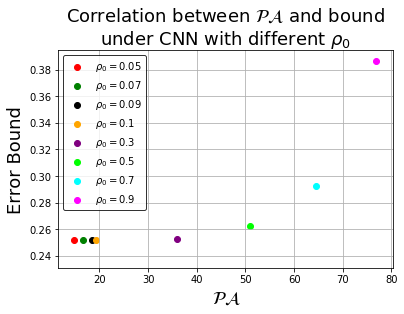}}
\vspace*{-4mm}
\caption{In FCN structure with MNIST dataset, Kendall-tau correlations between $\mathcal{PA}$ and generalization bound with respect to the different proportion of data used for prior training, the different values of KL penalty, and different values of $\rho_0$ are 0.89, 0.89, and 0.93 at 1 \% level of significance.  Similar results are found in CNN structure with CIFAR10 dataset where Kendall-tau correlations are 0.89, 0.83, and 0.57.}
 \label{fig:3}
\end{figure*}

 


\begin{table*}[!h]
\centering
\begin{tabular}{|c|c|c|c|c|}
	\hline
	Method & Network \& dataset  & Generalization bound & Testing error & Computation time \\
	\hline\hline
	 &MNIST + FCN &0.0427 &0.0192 &3888 mins\\
	 {Brute-Force}& MNIST + CNN  &0.0247 &0.0091 & 20736 mins\\
    &CIFAR10 + FCN &0.5377 &0.4840 & 8424 mins\\
    &CIFAR10 + CNN &0.1969 &0.1514 & 55080 mins\\
	\hline
	 &MNIST + FCN &0.0478 &0.0213 &1296 mins\\
	 {$\mathcal{PA}$}&MNIST + CNN&0.0346&0.0168 &1296 mins\\
	 &CIFAR10 + FCN &0.5490 &0.4905 &1296 mins\\
    &CIFAR10 + CNN&0.1970&0.1510 &1296 mins\\
	\hline
\end{tabular}
\vspace*{-1mm}
\caption{We conduct the grid search runs over 9 data-dependent prior with different subsets data for prior training, 9 different values of KL penalty, and 8 different values of $\rho_0$. Under any setting, the generalization bound from the hyperparameters selected from $\mathcal{PA}$ is close to the best generalization bound while saving lots of computation time. Training is performed with a server with a CPU with 5,120 cores, and a 32 GB Nvidia Quadro V100. Note, computation time per run using $\mathcal{PA}$ is around 2 mins. Whereas, computation times for the actual posterior training are 6 mins and 32 mins for MNIST trained with FCN and CNN respectively. In addition, 13 mins and 85 mins are used for CIFAR-10 trained with FCN and CNN.}
\label{t1}
\end{table*}

\subsection{Selecting hyperparameters via training-free metric} \label{PFHS}
PAC-Bayesian learning framework provides competitive performance with non-vacuous generalization bounds. However, the tightness of this generalization bound depends on the hyperparameters used, such as the proportionality of data used for the prior, the initialization of $\rho_0$, and the KL penalty weight ($\lambda$). Since these three values do not change during the training, we refer them as hyperparameters. Choosing the right hyperparameters via grid search is obviously prohibitive, as each attempt to compute the generalization bound can involve significant computational resources.

Another plausible approach is to design some kind of predictive, so-called ``training-free'' metric such that we can approximate the error bound without going through an expensive training process. In light of this goal, we have already developed a generalization bound in the theorem \ref{thm:PAC} via NTK. Since NTK changes are held constant during training, we can predict the generalization bound by this proxy metric, which can be formulated as follows:
\begin{equation}
\begin{aligned}
\mathcal{PA}  = &\operatorname{Tr}\bigg(\frac{(\boldsymbol{\Theta}+ {\lambda}/{\sigma^2_0} {\bf I})^{-1} \cdot {\bf Y}   {\bf Y}^\top}{\sigma^2_0 \cdot n} + \\ & \frac{\lambda}{\sigma^2_0}  \sqrt{\frac{ (\boldsymbol{\Theta}+  {\lambda}/{\sigma^2_0} {\bf I})^{-2}\cdot  {\bf Y}  {\bf Y}^\top}{n}} \bigg)
\end{aligned}
\end{equation}

where $\boldsymbol{\Theta}$ denotes the gram matrix. ${\bf Y} {\bf Y}^\top$ is a $n \times n$ label similarity matrix (if two data have the same label, their joint entry in the matrix is 1 and 0 otherwise), and $n$ is the number of data used. 
To demonstrate the computational practicality of this training-free metric, we compute $\mathcal{PA}$ using only a subset of the data for each class (325 per class for FCN and 75 per class for CNN). We should also mention that training-free methods for searching neural architectures are not new, they can be found in NAS \cite{chen2021neural,deshpande2021linearized}, MAE Random Sampling \cite{camero2021bayesian}, pruning at initialization \cite{abdelfattah2021zero}. To the best of our knowledge, there is currently no training-free method for selecting hyperparameters in the PAC-Bayesian framework, which we consider to be one of the novelties of this paper.

The Figure \ref{fig:3} demonstrates a strong correlation between $\mathcal{PA}$ and the actual generalization bound. 
Finally, we demonstrate that by searching through all possible combinations of hyperparameters using $\mathcal{PA}$, it is possible to select a hyperparameter leading towards a result that is comparable to the best generalization bound, but without  the excessive computation. To put things in perspective, in Table \ref{t1}, we show that grid search can be much more expensive than $\mathcal{PA}$. For instance, under the setting of CNN and CIFAR10 dataset, we save 42.5 times computational time to find the bound that is 1\% lower than the best generalization bound.

\section{Discussion}
In this work, we theoretically prove that the learning dynamics of deep probabilistic neural networks using training objectives derived from PAC-Bayes bounds are exactly described by the NTK in over-parameterized setting. Empirical investigation reveals that this agrees well with the actual training process. Furthermore, the expected output function trained with PAC-Bayesian bound converges to the kernel ridge regression under a mild assumption. Based on this finding, we obtain an explicit generalization bound with respect to NTK for PAC-Bayesian learning, which improves over the generalization bound obtained through NTK on non-probabilistic neural network. Finally, we show PAC-Bayesian bound score, the training-free method, can effectively select the hyperparameters which leads to a lower generalization bound without cost excessive computation time that brute-force grid search would take. In a summary, we establish our theoretical analysis on PAC-Bayes with random initialized prior. One promising direction is to study PAC-Bayesian learning with data-dependent prior by NTK.



\bibliography{example_paper}
\bibliographystyle{icml2022}

\appendix
\onecolumn

\section{Preliminary Lemmas} 

In this section, we list preliminary lemmas that with be used in the proof.

\begin{lemma} [Markov's inequality]
If $X$ is a nonnegative random variable and $a > 0$, then the probability that $X$ is at least $a$ is at most the expectation of $X$ divided by $a$:
$$
P(X \ge a) \le \frac{\mathbb{E}[X]}{a}
$$
\end{lemma}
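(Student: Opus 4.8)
The plan is to prove the inequality by introducing an indicator random variable and exploiting the monotonicity of expectation. First I would define the event $A = \{X \ge a\}$ and consider its indicator $\mathbb{I}(X \ge a)$, which equals $1$ when $X \ge a$ and $0$ otherwise. The key observation is the deterministic pointwise bound
\begin{equation}
a \cdot \mathbb{I}(X \ge a) \le X,
\end{equation}
which I would verify by a two-case argument: when $X \ge a$ the left side equals $a$ and the right side is at least $a$, so the inequality holds; when $X < a$ the left side is $0$ and the right side is nonnegative by the hypothesis $X \ge 0$, so the inequality again holds. Crucially, nonnegativity of $X$ is exactly what rules out the problematic case where $X$ could be negative while the indicator vanishes.

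Next I would take the expectation of both sides. Since expectation is monotone, the pointwise inequality yields $\mathbb{E}[a \cdot \mathbb{I}(X \ge a)] \le \mathbb{E}[X]$, and by linearity the left side equals $a \cdot \mathbb{E}[\mathbb{I}(X \ge a)]$. Using the standard identity $\mathbb{E}[\mathbb{I}(X \ge a)] = P(X \ge a)$, this gives
\begin{equation}
a \cdot P(X \ge a) \le \mathbb{E}[X].
\end{equation}
Finally, since $a > 0$, I would divide both sides by $a$ to arrive at $P(X \ge a) \le \mathbb{E}[X]/a$, which is the claimed bound.

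Honestly, there is no serious obstacle here, as this is a classical and elementary result; the only point requiring any care is the justification of the pointwise inequality via the case split, where the nonnegativity assumption on $X$ does the essential work, together with the fact that expectation preserves inequalities between integrable random variables. I would keep the argument at this level of rigor rather than invoking heavier measure-theoretic machinery, since the statement is self-contained and the monotonicity of expectation is standard.
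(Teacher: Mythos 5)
Your proof is correct and complete: the pointwise bound $a\,\mathbb{I}(X \ge a) \le X$, justified by the two-case split using nonnegativity, followed by monotonicity and linearity of expectation, is the canonical argument for Markov's inequality. The paper itself states this lemma as a standard preliminary fact without proof, so there is nothing to compare against; your argument is exactly the textbook derivation one would supply.
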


\begin{lemma} [Hoeffding's inequality]
Let $X_1, \ldots, X_n$ be independent random variables such that $ a_{i}\leq X_{i} \leq b_{i}$ almost surely. Consider the sum of these random variables, $S_n = X_1 + \cdots + X_n$, then Hoeffding's theorem states that, for all $t > 0$:
$$
P( |S_n -\mathbb{E}[S_n]| \ge t) \le 2 \exp \bigg(-\frac{2t^2}{\sum_{i=1}^n (b_i - a_i)^2} \bigg)
$$
\end{lemma}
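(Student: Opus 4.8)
The plan is to use the standard Chernoff (exponential Markov) technique, relying on Markov's inequality already stated in the appendix, together with one auxiliary estimate (Hoeffding's lemma) that I would establish along the way. First I would handle the upper tail $P(S_n - \mathbb{E}[S_n] \ge t)$. For any $s > 0$, monotonicity of $x \mapsto e^{sx}$ gives $P(S_n - \mathbb{E}[S_n] \ge t) = P(e^{s(S_n - \mathbb{E}[S_n])} \ge e^{st})$, and applying Markov's inequality to the nonnegative random variable $e^{s(S_n - \mathbb{E}[S_n])}$ yields $P(S_n - \mathbb{E}[S_n] \ge t) \le e^{-st}\, \mathbb{E}[e^{s(S_n - \mathbb{E}[S_n])}]$. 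Since the $X_i$ are independent, the moment generating function factorizes as $\mathbb{E}[e^{s(S_n - \mathbb{E}[S_n])}] = \prod_{i=1}^n \mathbb{E}[e^{s(X_i - \mathbb{E}[X_i])}]$.

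The key step is to bound each factor. Here I would invoke Hoeffding's lemma: if $Z$ is a mean-zero random variable with $a \le Z \le b$ almost surely, then $\mathbb{E}[e^{sZ}] \le \exp\!\big(s^2 (b-a)^2 / 8\big)$. Applying this to $Z = X_i - \mathbb{E}[X_i]$, which lies in an interval of length $b_i - a_i$, gives $\mathbb{E}[e^{s(X_i - \mathbb{E}[X_i])}] \le \exp\!\big(s^2 (b_i - a_i)^2 / 8\big)$, and therefore $P(S_n - \mathbb{E}[S_n] \ge t) \le \exp\!\big(-st + \tfrac{s^2}{8}\sum_{i=1}^n (b_i - a_i)^2\big)$.

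Next I would optimize the free parameter: the exponent is a quadratic in $s$ minimized at $s^\star = 4t / \sum_{i=1}^n (b_i - a_i)^2$, and substituting this value collapses the bound to $\exp\!\big(-2t^2 / \sum_{i=1}^n (b_i - a_i)^2\big)$. The identical argument applied to $-S_n$ (whose summands $-(X_i - \mathbb{E}[X_i])$ lie in intervals of the same length) controls the lower tail $P(S_n - \mathbb{E}[S_n] \le -t)$ by the same quantity, and a union bound over the two tails produces the claimed two-sided estimate with the leading factor $2$.

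The main obstacle is the proof of Hoeffding's lemma itself, since everything else is bookkeeping. I would establish it by studying the cumulant generating function $\psi(s) = \log \mathbb{E}[e^{sZ}]$: one checks $\psi(0) = 0$ and $\psi'(0) = \mathbb{E}[Z] = 0$, while $\psi''(s)$ equals the variance of $Z$ under the exponentially tilted law with density proportional to $e^{sZ}$. Because that tilted variable is still supported in $[a,b]$, its variance is at most $(b-a)^2/4$ by Popoviciu's inequality, so $\psi''(s) \le (b-a)^2/4$ uniformly in $s$. A second-order Taylor expansion of $\psi$ with this uniform bound then gives $\psi(s) \le s^2 (b-a)^2 / 8$, which is exactly the required estimate.
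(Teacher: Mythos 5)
Your proposal is correct and complete. Note, however, that the paper itself does not prove this statement at all: Hoeffding's inequality appears there only as a preliminary background lemma, stated without proof and used as a black box (e.g., in the concentration argument for the PNTK at initialization), so there is no in-paper argument to compare against. What you give is the canonical Chernoff--Hoeffding proof, and every step checks out: the exponential Markov bound, factorization of the moment generating function by independence, Hoeffding's lemma $\mathbb{E}[e^{sZ}] \le \exp(s^2(b-a)^2/8)$ applied to the centered summands, the optimization $s^\star = 4t/\sum_i (b_i-a_i)^2$ collapsing the exponent to $-2t^2/\sum_i (b_i-a_i)^2$, and the union bound over the two tails giving the factor $2$. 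Your proof of Hoeffding's lemma via the cumulant generating function is also sound: $\psi(0)=\psi'(0)=0$, $\psi''(s)$ is the variance of $Z$ under the exponentially tilted law, which remains supported in $[a,b]$ and hence has variance at most $(b-a)^2/4$ by Popoviciu's inequality, and the second-order Taylor bound finishes it. The only implicit hypothesis worth making explicit is that boundedness of $Z$ guarantees the moment generating function is finite and smooth on all of $\mathbb{R}$, which justifies the differentiations of $\psi$; with that remark included, the argument is airtight.
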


\section{Mission Proofs of Section \ref{sec:4.1}}
This section contains detailed proofs of the results that are missing in the main paper.

\begin{lemma} [PNTK at initialization] \label{lemma:initial}
If $m = \Omega(\frac{n^2}{\lambda^2_0}\log\frac{n}{\delta})$, while $\boldsymbol{\mu}_r$ and $\boldsymbol{\sigma}_r$ are initialized by the form in Theorem \ref{thm:opt}, then with probability at least $1-\delta$, $\big \|\boldsymbol{\Theta}^\mu({\bf X}, {\bf X},0) -\boldsymbol{\Theta}^\infty({\bf X}, {\bf X}) \big \|_2  \le \frac{\lambda_0}{4}$, $\big \|\boldsymbol{\Theta}^\mu({\bf X}, {\bf X},0) \big \|_2  \ge \frac{3\lambda_0}{4}$;
and $\big \|\boldsymbol{\Theta}^\sigma_0({\bf X}, {\bf X}) -\boldsymbol{\Theta}^\infty({\bf X}, {\bf X}) \big \|_2   \le \frac{\lambda_0}{4}$, $\big \|\boldsymbol{\Theta}^\sigma({\bf X}, {\bf X},0) \big \|_2  \ge \frac{3\lambda_0}{4}$.
\end{lemma}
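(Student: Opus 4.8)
The plan is to write each Gram-matrix entry of $\boldsymbol{\Theta}^\mu({\bf X},{\bf X},0)$ and $\boldsymbol{\Theta}^\sigma({\bf X},{\bf X},0)$ as an empirical average of $m$ i.i.d.\ contributions over the neurons $r\in[m]$, show that the per-neuron expectation equals the corresponding entry of $\boldsymbol{\Theta}^\infty$, invoke a concentration inequality for each of the $n^2$ entries, and then convert the entrywise control into a spectral-norm bound. Using $\frac{\partial f({\bf x}_i)}{\partial\boldsymbol{\mu}_r}=\frac{1}{\sqrt m}v_r\mathbb{I}({\bf w}_r^\top{\bf x}_i\ge0){\bf x}_i$ together with $v_r^2=1$, we get $\boldsymbol{\Theta}^\mu_{ij}(0)=\frac1m\sum_{r=1}^m {\bf x}_i^\top{\bf x}_j\,\mathbb{I}\{{\bf w}_r(0)^\top{\bf x}_i\ge0,\,{\bf w}_r(0)^\top{\bf x}_j\ge0\}$, whose expectation over ${\bf w}_r(0)$ is precisely $\boldsymbol{\Theta}^\infty_{ij}$ by \eqref{eq:limit_ntk}.

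For $\boldsymbol{\Theta}^\mu$, each summand lies in $[0,1]$ since the data are unit-norm, so I would apply Hoeffding's inequality entrywise: for each pair $(i,j)$, $\mathbb{P}(|\boldsymbol{\Theta}^\mu_{ij}(0)-\boldsymbol{\Theta}^\infty_{ij}|\ge t)\le 2\exp(-2mt^2)$. Taking $t=\lambda_0/(4n)$ and a union bound over the $n^2$ entries forces the right-hand side below $\delta$ exactly when $m=\Omega\big(\frac{n^2}{\lambda_0^2}\log\frac{n}{\delta}\big)$. On this event, $\|\boldsymbol{\Theta}^\mu(0)-\boldsymbol{\Theta}^\infty\|_2\le\|\boldsymbol{\Theta}^\mu(0)-\boldsymbol{\Theta}^\infty\|_F\le n\max_{ij}|\boldsymbol{\Theta}^\mu_{ij}(0)-\boldsymbol{\Theta}^\infty_{ij}|\le\lambda_0/4$, which is the first claim.

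The $\sigma$-kernel is the crux and needs two modifications. Its entries read $\boldsymbol{\Theta}^\sigma_{ij}(0)=\frac1m\sum_r \mathbb{I}\{{\bf w}_r^\top{\bf x}_i\ge0,{\bf w}_r^\top{\bf x}_j\ge0\}\sum_k x_{ik}x_{jk}\epsilon_{rk}^2$, where the factor $\epsilon_{rk}^2$ is coupled to the indicator through ${\bf w}_r=\boldsymbol{\mu}_r+\sigma_0\boldsymbol{\epsilon}_r$. I would first verify that the per-neuron expectation still equals $\boldsymbol{\Theta}^\infty_{ij}$: conditioning on ${\bf w}_r$ and using the Gaussian identity $\mathbb{E}[\epsilon_{rk}^2\mid{\bf w}_r]=\big(\frac{\sigma_0}{\sigma_0^2+\hat\sigma^2}w_{rk}\big)^2+\frac{\hat\sigma^2}{\sigma_0^2+\hat\sigma^2}$, the conditional-mean-squared and conditional-variance pieces recombine so that the bias from the coupling cancels exactly and $\mathbb{E}[\boldsymbol{\Theta}^\sigma_{ij}(0)]=\boldsymbol{\Theta}^\infty_{ij}$ --- this exact cancellation (rather than a merely vanishing-with-$m$ bias) is what makes the lemma true. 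The second modification is that the summand now contains $\epsilon_{rk}^2$ and is unbounded, so Hoeffding no longer applies; I would replace it with a Bernstein-type (sub-exponential) tail bound, exploiting that $\sum_k x_{ik}x_{jk}\epsilon_{rk}^2$ is sub-exponential, then repeat the same union-bound and Frobenius-to-spectral steps.

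Finally, positive-definiteness follows from Weyl's inequality: on the above event $\lambda_{\min}(\boldsymbol{\Theta}^\mu(0))\ge\lambda_{\min}(\boldsymbol{\Theta}^\infty)-\|\boldsymbol{\Theta}^\mu(0)-\boldsymbol{\Theta}^\infty\|_2\ge\lambda_0-\lambda_0/4=\tfrac{3\lambda_0}{4}$ (and identically for $\sigma$), giving the stated least-eigenvalue bounds. The main obstacle is the $\sigma$-kernel: establishing the exact cancellation of the coupling bias in its expectation, and obtaining its concentration through a sub-exponential inequality rather than the bounded-difference Hoeffding bound used for $\boldsymbol{\Theta}^\mu$; the $\mu$-kernel itself is a routine adaptation of the standard NTK-at-initialization argument.
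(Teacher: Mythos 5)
Your treatment of $\boldsymbol{\Theta}^\mu$ is correct and is essentially the paper's own argument: each entry is an average of $m$ i.i.d.\ bounded summands whose mean is $\boldsymbol{\Theta}^\infty_{ij}$, Hoeffding plus a union bound over the $n^2$ entries gives entrywise control at scale $\lambda_0/(4n)$ once $m=\Omega(n^2\lambda_0^{-2}\log(n/\delta))$, and the Frobenius-to-spectral bound together with Weyl's inequality yields both conclusions. The concentration half of your $\sigma$-argument (replacing Hoeffding by a sub-exponential/Bernstein bound because of the $\epsilon_{rk}^2$ factors) also coincides with what the paper does.

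The genuine gap is your ``exact cancellation'' claim for the mean of $\boldsymbol{\Theta}^\sigma$. You are right to distrust the paper's own justification --- the paper asserts that $\boldsymbol{\epsilon}_r^2$ is independent of the indicators, which is false because ${\bf w}_r=\boldsymbol{\mu}_r+\sigma_0\boldsymbol{\epsilon}_r$ --- but your repair fails at the recombination step. Write $s^2=\sigma_0^2+\hat\sigma^2$ and $\mathbb{I}_i=\mathbb{I}({\bf w}_r^\top{\bf x}_i\ge 0)$. Your conditional identity $\mathbb{E}[\epsilon_{rk}^2\mid{\bf w}_r]=\frac{\sigma_0^2}{s^4}w_{rk}^2+\frac{\hat\sigma^2}{s^2}$ is correct, but after substituting it, unbiasedness of $\boldsymbol{\Theta}^\sigma_{ij}(0)$ requires
\begin{equation*}
\mathbb{E}\Big[\mathbb{I}_i\mathbb{I}_j\sum_k x_{ik}x_{jk}w_{rk}^2\Big]=s^2\,({\bf x}_i^\top{\bf x}_j)\,\mathbb{E}[\mathbb{I}_i\mathbb{I}_j],
\end{equation*}
which holds when $i=j$ (this is why the diagonal looks fine) but fails off the diagonal. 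Concretely, take $d=2$, ${\bf x}_i=(\cos(\theta/2),-\sin(\theta/2))$, ${\bf x}_j=(\cos(\theta/2),\sin(\theta/2))$, so ${\bf x}_i^\top{\bf x}_j=\cos\theta$ and $\{\mathbb{I}_i\mathbb{I}_j=1\}=\{|\phi|<(\pi-\theta)/2\}$ in polar coordinates. A direct computation gives $\mathbb{E}[\mathbb{I}_i\mathbb{I}_j w_1^2]=s^2\big(\tfrac{\pi-\theta}{2\pi}+\tfrac{\sin\theta}{2\pi}\big)$ and $\mathbb{E}[\mathbb{I}_i\mathbb{I}_j w_2^2]=s^2\big(\tfrac{\pi-\theta}{2\pi}-\tfrac{\sin\theta}{2\pi}\big)$, hence
\begin{equation*}
\mathbb{E}\Big[\mathbb{I}_i\mathbb{I}_j\sum_k x_{ik}x_{jk}w_k^2\Big]=s^2\cos\theta\,\tfrac{\pi-\theta}{2\pi}+s^2\,\tfrac{\sin\theta}{2\pi},
\end{equation*}
exceeding the required value by $s^2\sin\theta/(2\pi)$. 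Tracing this through your decomposition leaves $\mathbb{E}[\boldsymbol{\Theta}^\sigma_{ij}(0)]-\boldsymbol{\Theta}^\infty_{ij}=\frac{\sigma_0^2}{s^2}\cdot\frac{\sin\theta}{2\pi}$, a constant-order bias that does not shrink as $m\to\infty$; accumulated over $O(n^2)$ entries it overwhelms the $\lambda_0/4$ target, so no concentration inequality around the (biased) mean can finish the proof. In short, the step you correctly identified as the crux is where your argument breaks: neither your cancellation claim nor the paper's independence assertion actually establishes $\mathbb{E}[\boldsymbol{\Theta}^\sigma_{ij}(0)]=\boldsymbol{\Theta}^\infty_{ij}$, and a correct proof of the $\sigma$-half of the lemma would have to quantify this coupling bias (e.g.\ showing it is harmless only in regimes where $\sigma_0^2/s^2$ is small) rather than assume it away.
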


\begin{proof}
The derivative of output over the parameters $\boldsymbol{\mu}$ and $\boldsymbol{\sigma}$ can be expressed as
\begin{equation} \label{eq:deri}
\frac{\partial f({\bf x}_i)}{\partial \boldsymbol{\mu}_r} = \frac{1}{\sqrt{m}} { v}_r  \mathbb{I} ({\bf w}^\top_r {\bf x}_i \ge 0) {\bf x}_i ~~~ \frac{\partial f({\bf x}_i)}{\partial \boldsymbol{\sigma}_r} = \frac{1}{\sqrt{m}} { v}_r  \mathbb{I} ({\bf w}^\top_r {\bf x}_i) {\bf x}_i \odot \boldsymbol{\epsilon}_r 
\end{equation}

Since the probabilistic neural network contains two set of parameters, we discuss the corresponding NTK accordingly.

\paragraph{(1) $\boldsymbol{\Theta}^\mu$.} Plugging the derivative result regarding mean weights in Equation (\ref{eq:deri}) into the definition of PNTK (Eqution \ref{eq:NTK_kernel}) yields:
$$
\boldsymbol{\Theta}^\mu_{ij}(0) = \frac{{\bf x}^\top_i {\bf x}_j}{m} ({v}_r)^2  \sum_{r=1}^m \mathbb{I}( {\bf w}^\top_r {\bf x}_i \ge 0) \mathbb{I}( {\bf w}^\top_r {\bf x}_j \ge 0) = \frac{{\bf x}^\top_i {\bf x}_j}{m}  \sum_{r=1}^m \mathbb{I}( {\bf w}^\top_r {\bf x}_i \ge 0) \mathbb{I}( {\bf w}^\top_r {\bf x}_j \ge 0) $$ 
By an analysis, we find that for all pairs of $i,j$, $\boldsymbol{\Theta}^\mu_{ij}(0)$ is the average of $m$ i.i.d. random variables bounded in $[0,1]$, with the expectation $\boldsymbol{\Theta}^\infty_{ij} = \mathbb{E}_{{\bf w}  } \big[{\bf x}^\top_i {\bf x}_j \mathbb{I} \{ {\bf w}^\top {\bf x}_i \ge 0, {\bf w}^T {\bf x}_j \ge 0 \} \big]$. Here ${\bf w}_i = \boldsymbol{\mu}_i + \boldsymbol{\sigma}_i  {\epsilon}$, 
and $\boldsymbol{\mu}_i \sim \mathcal{N}(\hat \mu, \hat \sigma) $. Then with a simple argument, we show that ${\bf w}$ is also a Gaussian variable.
The variance of distribution for ${\bf w}_i$ can be inferred through the following process,
$$
f({\bf w}_i) = \frac{1}{\sqrt{2\pi} \sigma_0} e^{-\frac{({\bf w}_i-\boldsymbol{\mu}_i)^2}{2 \sigma_0^2}} ~~~f(\boldsymbol{\mu}_i) = \frac{1}{\sqrt{2\pi} \hat \sigma} e^{-\frac{(\boldsymbol{\mu}_i- \hat \mu)^2}{2 \hat \sigma^2}}
$$
Taking the density function of variable $\boldsymbol{\mu}_i$ into $f({\bf w}_i)$, we can obtain,
$$
\begin{aligned}
f({\bf w}_i) = \int \frac{1}{\sqrt{2\pi} \sigma_0} e^{-\frac{({\bf w}_i-\boldsymbol{\mu}_i)^2}{2 \sigma_0^2}} \frac{1}{\sqrt{2\pi} \hat \sigma} e^{-\frac{(\boldsymbol{\mu}_i- \hat \mu)^2}{2 \hat \sigma^2}} d \boldsymbol{\mu}_i = \frac{1}{\sqrt{2\pi (\sigma_0^2 + \hat \sigma^2)}} e^{-\frac{({\bf w}_i-\hat \mu)^2}{2 (\sigma_0^2+ \hat \sigma^2)}} 
\end{aligned}
$$
Then by Hoeffding's inequality,  we know that the following inequality holds with probability at least $1-\delta'$,
$$
\big |\boldsymbol{\Theta}^\mu_{ij}(0) - \boldsymbol{\Theta}^\infty_{ij} \big| \le \sqrt{\frac{\log(2/\delta')}{2m}}
$$

Because NTK matrix is of size $n \times n$, we then apply a union bound over all $i,j \in [n]$  (by setting $\delta'=\delta/n^2$), and obtain that 


$$
\big|\boldsymbol{\Theta}^\mu_{ij}(0) - \boldsymbol{\Theta}^\infty_{ij} \big| \le \sqrt{\frac{\log(2n^2/\delta)}{2m}}
$$
Thus we have,
$$
\big \| \boldsymbol{\Theta}^\mu(0)- \boldsymbol{\Theta}^\infty \big \|^2_2 \le  \big \| \boldsymbol{\Theta}^\mu(0)- \boldsymbol{\Theta}^\infty \big \|^2_F \le \sum_{i,j} \big| \boldsymbol{\Theta}^\mu_{ij}(0) - \boldsymbol{\Theta}_{ij}^\infty \big |^2 = O\bigg(\frac{n^2 \log(2n^2/\delta)}{m}\bigg)
$$
Finally, if $\sqrt{\frac{n^2 \log(2n^2/\delta)}{m}} \le \frac{\lambda_0}{4}$, which implies  $m =O \big(\frac{n^2\log(n^2/\delta)}{\lambda^2_0} \big)$, then with probability at least $1-\delta$,
$$
\big \|\boldsymbol{\Theta}^\mu({\bf X}, {\bf X},0) -\boldsymbol{\Theta}^\infty({\bf X}, {\bf X}) \big \|_2  \le \frac{\lambda_0}{4},~ \big \|\boldsymbol{\Theta}^\mu({\bf X}, {\bf X},0) \big \|_2  \ge \frac{3\lambda_0}{4}.
$$

\paragraph{(2) $\boldsymbol{\Theta}^\sigma$.} Plugging the derivative result regarding mean weights in Equation (\ref{eq:deri}) into the definition of PNTK (Eqution \ref{eq:NTK_kernel}) yields:
$$
\boldsymbol{\Theta}^\sigma_{ij} = \frac{{\bf x}^\top_i {\bf x}_j}{m} \sum_{r=1}^m \mathbb{I}( {\bf w}^\top_r {\bf x}_i \ge 0) \mathbb{I}( {\bf w}^\top_r {\bf x}_j \ge 0) \boldsymbol{\epsilon}^2_r
$$
Note that the tangent kernel for $\sigma$ differs from $\boldsymbol{\Theta}^\mu_{ij}$ with an additional term $\boldsymbol{\epsilon}^2_r$. It is known the $\boldsymbol{\epsilon}^2_r \sim \chi_1$ independently with $\mathbb{I}( {\bf w}^\top_r {\bf x}_i \ge 0) \mathbb{I}( {\bf w}^\top_r {\bf x}_j \ge 0)$. Because $\mathbb{E}[\chi_1] = 1$, the expectation of $\boldsymbol{\Theta}^\sigma_{ij}$ equals the expectation of $\boldsymbol{\Theta}^\mu_{ij}$. Thus for all pairs of $i,j$, $\boldsymbol{\Theta}^\sigma_{ij}$ is the average of $m$ i.i.d. random variables with the expectation $\boldsymbol{\Theta}^\infty_{ij} = \mathbb{E}_{{\bf w}  } \big[{\bf x}^\top_i {\bf x}_j \mathbb{I} \{ {\bf w}^\top {\bf x}_i \ge 0, {\bf w}^\top {\bf x}_j \ge 0 \} \big]$. 

Now we calculate the concentration bound. It is known that $\boldsymbol{\epsilon}^2_r$ is independent and sub-exponential.
Then, by sub-exponential tail bound, we know that the following holds with probability at least $1-\delta'$,
$$
\big| \boldsymbol{\Theta}^\sigma_{ij}(0) - \boldsymbol{\Theta}^\infty_{ij} \big| \le \sqrt{\frac{\log(8/\delta')}{2m}}
$$
This bound is of the same order to concentration bound for $ \boldsymbol{\Theta}^\mu_{ij}(0)$. Thus we can take all the arguments for $ \boldsymbol{\Theta}^\mu_{ij}(0)$ above to finalize the proof: If $\sqrt{\frac{n^2 \log(8n^2/\delta)}{m}} \le \frac{\lambda_0}{4}$, which implies  $m =O \big(\frac{n^2\log(n^2/\delta)}{\lambda^2_0} \big)$, then with probability at least $1-\delta$,
$$
\big \|\boldsymbol{\Theta}^\sigma({\bf X}, {\bf X},0) -\boldsymbol{\Theta}^\infty({\bf X}, {\bf X}) \big \|_2  \le \frac{\lambda_0}{4},~ \big \|\boldsymbol{\Theta}^\sigma({\bf X}, {\bf X},0) \big \|_2  \ge \frac{3\lambda_0}{4}.
$$
\end{proof}

\begin{lemma} [PNTK during training] 
If $\boldsymbol{\mu}_r$ and $\boldsymbol{\sigma}_r$ are initialized the same with Theorem \ref{thm:opt}, for any set of weight vectors ${\bf w}_1,\ldots, {\bf w}_m $ that satisfy for any $r \in [m]$, $\| {\bf w}_{r}(t) - {\bf w}_{r}(0) \| \le \frac{c \lambda_0 \delta \sqrt{\sigma_0^2+ \hat \sigma^2}}{n^2} \equiv R$, where $c$ is a constant, then with probability at least $1-\delta$, $\big \|\boldsymbol{\Theta}^\mu({\bf X}, {\bf X},t) -\boldsymbol{\Theta}^\infty({\bf X}, {\bf X}) \big \|_2  \le \frac{\lambda_0}{4}$, $\big \|\boldsymbol{\Theta}^\mu({\bf X}, {\bf X},t) \big \|_2  \ge \frac{\lambda_0}{2}$;
and $\big \|\boldsymbol{\Theta}^\sigma({\bf X}, {\bf X},t) -\boldsymbol{\Theta}^\infty({\bf X}, {\bf X}) \big \|_2   \le \frac{\lambda_0}{4}$, $\big \|\boldsymbol{\Theta}^\sigma({\bf X}, {\bf X},t) \big \|_2  \ge \frac{\lambda_0}{2}$.
\end{lemma}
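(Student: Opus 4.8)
The plan is to prove that each probabilistic kernel barely moves away from its value at initialization, and then to combine this with the initialization lemma through the triangle and Weyl inequalities. I would treat $\boldsymbol{\Theta}^\mu$ first. Writing the entry as $\boldsymbol{\Theta}^\mu_{ij}(t) = \frac{{\bf x}_i^\top {\bf x}_j}{m}\sum_{r=1}^m \mathbb{I}({\bf w}_r(t)^\top {\bf x}_i \ge 0)\,\mathbb{I}({\bf w}_r(t)^\top {\bf x}_j \ge 0)$ with ${\bf w}_r(t) = \boldsymbol{\mu}_r(t) + \boldsymbol{\sigma}_r(t)\odot\boldsymbol{\epsilon}_r(0)$, the only quantities that differ between time $t$ and time $0$ are the activation indicators. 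Hence the entrywise gap is controlled by the fraction of neurons whose sign flips, and a neuron $r$ can flip for input $i$ only when ${\bf w}_r(0)^\top {\bf x}_i$ lies within $R$ of the origin, since $\|{\bf w}_r(t)-{\bf w}_r(0)\|\le R$ and $\|{\bf x}_i\|=1$.

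The next step is the probabilistic accounting of these flips. Because ${\bf w}_r(0) = \boldsymbol{\mu}_r(0)+\sigma_0\boldsymbol{\epsilon}_r(0)\sim\mathcal{N}({\bf 0},(\sigma_0^2+\hat\sigma^2){\bf I}_d)$, the projection ${\bf w}_r(0)^\top {\bf x}_i$ is a centred Gaussian of variance $\sigma_0^2+\hat\sigma^2$, so Gaussian anti-concentration caps the flip probability at $\frac{2R}{\sqrt{2\pi(\sigma_0^2+\hat\sigma^2)}}$. Bounding $\abs{\boldsymbol{\Theta}^\mu_{ij}(t)-\boldsymbol{\Theta}^\mu_{ij}(0)} \le \frac{1}{m}\sum_r(\mathbb{I}^{\mathrm{flip}}_{ir}+\mathbb{I}^{\mathrm{flip}}_{jr})$ and summing over all $n^2$ entries, the expected total deviation is $O(n^2 R/\sqrt{\sigma_0^2+\hat\sigma^2})$; Markov's inequality then forces $\sum_{ij}\abs{\cdot}$ to be at most $O(n^2 R/(\delta\sqrt{\sigma_0^2+\hat\sigma^2}))$ with probability $1-\delta$. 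Using $\|\cdot\|_2\le\|\cdot\|_F\le\sum_{ij}\abs{\cdot}$ and substituting $R=\frac{c\lambda_0\delta\sqrt{\sigma_0^2+\hat\sigma^2}}{n^2}$ with $c$ small yields $\|\boldsymbol{\Theta}^\mu(t)-\boldsymbol{\Theta}^\mu(0)\|_2\le\frac{\lambda_0}{4}$. Combining this with the initialization lemma ($\|\boldsymbol{\Theta}^\mu(0)-\boldsymbol{\Theta}^\infty\|_2\le\frac{\lambda_0}{4}$ and $\lambda_{\min}(\boldsymbol{\Theta}^\mu(0))\ge\frac{3\lambda_0}{4}$) through the triangle and Weyl inequalities gives the stated closeness and the eigenvalue lower bound $\lambda_{\min}(\boldsymbol{\Theta}^\mu(t))\ge\frac{\lambda_0}{2}$.

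For $\boldsymbol{\Theta}^\sigma$ the skeleton is identical, but each summand now carries the extra weight $({\bf x}_i\odot\boldsymbol{\epsilon}_r(0))^\top({\bf x}_j\odot\boldsymbol{\epsilon}_r(0))$, an unbounded sub-exponential $\chi^2$-type quantity rather than something in $[0,1]$, and this is where I expect the real difficulty. The subtlety is that this weight and the sign-flip event both depend on the same $\boldsymbol{\epsilon}_r(0)$, so they are not independent, and a naive Cauchy--Schwarz split only buys a $\sqrt{R}$ rate, which is too weak to recover the required $R$. The fix I would use is to control the conditional second moment directly: decomposing $\epsilon_{rk}$ along and orthogonal to the projection ${\bf w}_r(0)^\top {\bf x}_i$ shows that even conditioned on the thin flip slab $\{\abs{{\bf w}_r(0)^\top {\bf x}_i}\le R\}$, one has $\mathbb{E}[\epsilon_{rk}^2\mid\mathrm{flip}]=O(1)$, so $\mathbb{E}[\mathbb{I}^{\mathrm{flip}}_{ir}\,\epsilon_{rk}^2]=O(P(\mathrm{flip}_{ir}))$ and the expected deviation is again linear in $R$. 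Alternatively one can first truncate $\|\boldsymbol{\epsilon}_r(0)\|$ uniformly over the $m$ neurons by a union bound, paying only a polylogarithmic factor in the width requirement. Once the expectation is back at the $O(R)$ scale, the same Markov-plus-Weyl argument delivers the closeness and eigenvalue claims for $\boldsymbol{\Theta}^\sigma$.
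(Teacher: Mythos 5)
Your proposal follows essentially the same route as the paper's proof: define the sign-flip events $A_{ir}$, note they can occur only when $|{\bf w}_r(0)^\top {\bf x}_i| < R$, bound their probability by anti-concentration of the Gaussian projection ${\bf w}_r(0)^\top{\bf x}_i \sim \mathcal{N}(0,\sigma_0^2+\hat\sigma^2)$, bound each entrywise deviation by the fraction of flipped neurons, sum over the $n^2$ entries, apply Markov's inequality, pass to the spectral norm via the Frobenius norm, and conclude by combining with the initialization lemma. The one place you go beyond the paper is the $\boldsymbol{\Theta}^\sigma$ case: the paper freezes $\boldsymbol{\epsilon}_r(t)=\boldsymbol{\epsilon}_r(0)$ via its definition of ${\bf w}_r(t)$ (as you do) and then simply asserts
\begin{equation*}
\frac{1}{m}\sum_{r=1}^m \mathbb{E}\big[\mathbb{I}\{A_{ir}\cup A_{jr}\}\,\boldsymbol{\epsilon}_r^2(t)\big] \le \frac{4R}{\sqrt{2\pi(\sigma_0^2+\hat\sigma^2)}}
\end{equation*}
without addressing the fact that the flip indicator and $\boldsymbol{\epsilon}_r(0)$ are dependent --- precisely the subtlety you flag. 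Your fix, decomposing $\boldsymbol{\epsilon}_r$ into its component along ${\bf x}_i$ and the orthogonal complement so that the second moment restricted to the flip slab remains $O(1)$ and the expected deviation stays linear in $R$ (rather than the $\sqrt{R}$ that naive Cauchy--Schwarz would give), is correct and supplies the justification the paper leaves implicit; your truncation alternative also works at the cost of logarithmic factors in the width. So your argument is not a different method, but a more rigorous rendering of the same one, and it patches the loosest step in the paper's own proof.
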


\begin{proof}
Define the event $A_{ir} = \{ \exists {\bf w}: \|{\bf w}_{r}(t) -{\bf w}_{r}(0) \| \le R, \mathbb{I}\{{\bf x}^\top_i {\bf w}_{r}(t) \ge 0 \} \neq \mathbb{I}\{{\bf x}^\top_i {\bf w}_{r}(0) \ge 0 \}  \}$. Then this event would only happen when $|{\bf w}_{r,0}^\top {\bf x}_i| < R $. As calculated in the proof of Lemma \ref{lemma:initial}, we know that ${\bf w}_i \sim \mathcal{N}(0, \sqrt{\hat \sigma^2 + \sigma^2_0 })$. Then with the condition that $\|{\bf x}_i \| = 1$, we obtain 
$$
z = {\bf w}_{r}(0)^\top {\bf x}_i \sim \mathcal{N} \big(0, \sqrt{\hat \sigma^2 + \sigma^2_0 } \big)
$$
By integrating over the condition $|{\bf w}_{r}(0)^\top {\bf x}_i| < R $, we have  
$$
P(A_{ir}) = P_{z \sim \mathcal{N}(0, \sqrt{\sigma^2_0 + \hat \sigma^2})} (|z| < R) \le \frac{2R}{\sqrt{2 \pi (\sigma^2_0+ \hat \sigma^2)}} 
$$ 

Then we can bound the entry of the matrix $\boldsymbol{\Theta}^\mu$:
$$
 \begin{aligned}
&\mathbb{E} \big[|\boldsymbol{\Theta}^\mu_{ij}(0) -\boldsymbol{\Theta}^\mu_{ij}(t) | \big] = \mathbb{E} \bigg[\frac{1}{m}\big |{\bf x}^\top_i {\bf x}_j \sum_{r=1}^m \big( \mathbb{I} \{{\bf w}^\top_{r}(0) {\bf x}_i \ge 0, {\bf w}^\top_{r}(0) {\bf x}_j \ge 0 \} - \mathbb{I} \{ {\bf w}_{r}(t)^\top {\bf x}_i \ge 0, {\bf w}_{r}(t)^\top {\bf x}_j \ge 0 \} \big) \big| \bigg] \\
& \le \frac{1}{m} \sum_{r=1}^m \mathbb{E} \big[\mathbb{I} \{A_{ir} \cup A_{jr} \} \big] \le \frac{4R}{\sqrt{2 \pi (\sigma^2_0+ \hat \sigma^2)}}
\end{aligned}
$$
Summing over all entries of the matrix, we have 
$\mathbb{E} \bigg[\sum_{(ij)} \big|\boldsymbol{\Theta}^\mu_{ij}(0) -\boldsymbol{\Theta}^\mu_{ij}(t) \big| \bigg] \le \frac{4n^2R}{\sqrt{2 \pi (\sigma_0^2+ \hat \sigma^2)}}$. By Markov's inequality, with probability of $1-\delta$ over distribution of ${\bf w}_{r,0}$ for $r \in [m]$, $\sum_{(ij)} \big| \boldsymbol{\Theta}^\mu_{ij}(0) -\boldsymbol{\Theta}^\mu_{ij}(t) \big| \le \frac{4n^2R}{\sqrt{2 \pi   (\sigma^2_0+ \hat \sigma^2)}\delta}$. Then by the matrix perturbation theory, we have, 
$$
\big \| \boldsymbol{\Theta}^\mu(t) - \boldsymbol{\Theta}^\mu(0) \big \|_2 \le \big \| \boldsymbol{\Theta}^\mu(t) - \boldsymbol{\Theta}^\mu(0) \big \|_F \le \sum_{(ij)} \big| \boldsymbol{\Theta}^\mu_{ij}(t) -\boldsymbol{\Theta}^\mu_{ij}(0)
\big| \le  \frac{4n^2R}{\sqrt{2 \pi (\sigma_0^2+ \hat \sigma^2)}\delta}
$$

If $\frac{4n^2R}{\sqrt{2 \pi (\sigma_0^2+ \hat \sigma^2)}\delta} \le \frac{\lambda_0}{4}$, which implies $\| {\bf w}_{r}(t) - {\bf w}_{r}(0) \|  \le \frac{c \lambda_0 \delta \sqrt{\sigma_0^2+ \hat \sigma^2}}{n^2}$, then with probability at least $1-\delta$,
$$
\big \|\boldsymbol{\Theta}^\mu({\bf X}, {\bf X},t) -\boldsymbol{\Theta}^\infty({\bf X}, {\bf X}) \big \|_2  \le \frac{\lambda_0}{4},~ \big \|\boldsymbol{\Theta}^\mu({\bf X}, {\bf X},t) \big \|_2  \ge \frac{\lambda_0}{2}.
$$

On the other hand, we can bound the entry of the matrix $\boldsymbol{\Theta}^\sigma$ with following inequality:
$$
 \begin{aligned}
&\mathbb{E} \big[|\boldsymbol{\Theta}^\sigma_{ij}(0) -\boldsymbol{\Theta}^\sigma_{ij}(t) | \big] = \mathbb{E} \bigg[\frac{1}{m}\big |{\bf x}^\top_i {\bf x}_j \sum_{r=1}^m \big( \mathbb{I} \{{\bf w}^\top_{r}(0) {\bf x}_i \ge 0, {\bf w}^\top_{r}(0) {\bf x}_j \ge 0 \} \boldsymbol{\epsilon}^2_{r}(0) - \mathbb{I} \{ {\bf w}_{r}(t)^\top {\bf x}_i \ge 0, {\bf w}_{r}(t)^\top {\bf x}_j \ge 0 \} \boldsymbol{\epsilon}^2_{r}(t) \big)  \big| \bigg] \\
& \le \frac{1}{m} \sum_{r=1}^m \mathbb{E} \big[\mathbb{I} \{A_{ir} \cup A_{jr} \} \boldsymbol{\epsilon}^2_{r}(t) \big] \le \frac{4R}{\sqrt{2 \pi (\sigma^2_0+ \hat \sigma^2)}}
\end{aligned}
$$

For the first inequality in the second lien, we use $\boldsymbol{\epsilon}^2_{r}(t) = \boldsymbol{\epsilon}^2_{r}(0)$ according to the definition of ${\bf w}_{r}(t)$. We argue that our definition is reasonable because $\mathbb{E}[\boldsymbol{\epsilon}^2_{r}(0)] = \mathbb{E}[ \boldsymbol{\epsilon}^2_{r}(t)]$ when $\boldsymbol{\epsilon}_{r}(t)$ and $\boldsymbol{\epsilon}_{r}(0)$ are two i.i.d. variables. With our definition, we can compute $\| {\bf w}_{r}(t) - {\bf w}_{r}(0) \|$ directly, and keep the corresponding conclusion applicable to real weights at training step simultaneously.

The above inequality shows that with additional $\boldsymbol{\epsilon}^2_r$, we still have the same result. Once again, we can apply arguments for $\boldsymbol{\Theta}^\mu$ here, and conclude the proof: If $\frac{4n^2R}{\sqrt{2 \pi (\sigma_0^2+ \hat \sigma^2)}\delta} \le \frac{\lambda_0}{4}$, which implies $\| {\bf w}_{r}(t) - {\bf w}_{r}(0) \|  \le \frac{c \lambda_0 \delta \sqrt{\sigma_0^2+ \hat \sigma^2}}{n^2}$, then with probability at least $1-\delta$,
$$
\big \|\boldsymbol{\Theta}^\sigma({\bf X}, {\bf X},t) -\boldsymbol{\Theta}^\infty({\bf X}, {\bf X}) \big \|_2  \le \frac{\lambda_0}{4},~ \big \|\boldsymbol{\Theta}^\sigma({\bf X}, {\bf X},t) \big \|_2  \ge \frac{\lambda_0}{2}.
$$
\end{proof}

\begin{lemma} [Change of weights during training]
Suppose $\lambda_{\min}(\boldsymbol{\Theta}(t)) \ge \frac{\lambda_0}{2}$ for $0 < t < T $. Then we have $  \widehat{R}_S(Q,t) \le \exp(-{\lambda_0} t)\widehat{R}_S(Q,0) $, and with probability at least $1-\delta$ over initialization, $\| {\bf w}_{r}(t) - {\bf w}_{r}(0) \|_2 \le \frac{2n}{\sqrt{m \delta}\lambda_0 }  \equiv R' $.
\end{lemma}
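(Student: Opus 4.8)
The plan is to transport the deterministic NTK convergence argument (as in \cite{du2018gradient,arora2019fine}) to the probabilistic setting by tracking the prediction residual under gradient flow, while carrying the extra noise variable $\boldsymbol{\epsilon}_r$ through the $\boldsymbol{\sigma}$-branch. Write the residual as ${\bf u}(t) = {\bf Y} - f({\bf X},t)$, so that $\widehat{R}_S(Q,t) = \tfrac{1}{2}\|{\bf u}(t)\|_2^2$ (any fixed prefactor cancels in the claimed ratio), and set $\boldsymbol{\Theta}(t) = \boldsymbol{\Theta}^\mu({\bf X},{\bf X},t) + \boldsymbol{\Theta}^\sigma({\bf X},{\bf X},t)$. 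The output dynamics stated just before this lemma read $\dot f({\bf X},t) = \boldsymbol{\Theta}(t)\,{\bf u}(t)$, hence $\dot{\bf u}(t) = -\boldsymbol{\Theta}(t)\,{\bf u}(t)$.

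First I would establish the loss decay. Differentiating, $\tfrac{d}{dt}\|{\bf u}(t)\|_2^2 = -2\,{\bf u}(t)^\top\boldsymbol{\Theta}(t)\,{\bf u}(t) \le -2\lambda_{\min}(\boldsymbol{\Theta}(t))\|{\bf u}(t)\|_2^2 \le -\lambda_0\|{\bf u}(t)\|_2^2$, where the last step uses the hypothesis $\lambda_{\min}(\boldsymbol{\Theta}(t)) \ge \lambda_0/2$ on $(0,T)$. A standard Gr\"onwall argument then gives $\|{\bf u}(t)\|_2^2 \le e^{-\lambda_0 t}\|{\bf u}(0)\|_2^2$, i.e. $\widehat{R}_S(Q,t) \le e^{-\lambda_0 t}\widehat{R}_S(Q,0)$, which is the first claim.

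Next I would bound the movement of each neuron. Using the gradient expressions in Equation~(\ref{eq:deri}) together with $\|{\bf x}_i\|_2 = 1$ and $v_r \in \{-1,1\}$, the per-neuron velocity obeys $\|\dot{\boldsymbol{\mu}}_r(t)\|_2 = \tfrac{1}{\sqrt m}\big\|\sum_i u_i(t)\,\mathbb{I}({\bf w}_r^\top{\bf x}_i\ge 0)\,{\bf x}_i\big\|_2 \le \tfrac{1}{\sqrt m}\sum_i |u_i(t)| \le \tfrac{\sqrt n}{\sqrt m}\|{\bf u}(t)\|_2$ by Cauchy--Schwarz, and the $\boldsymbol{\sigma}$-velocity carries an extra elementwise factor $\boldsymbol{\epsilon}_r$, giving $\|\dot{\boldsymbol{\sigma}}_r(t)\|_2 \le \tfrac{\sqrt n}{\sqrt m}\|\boldsymbol{\epsilon}_r\|_\infty\|{\bf u}(t)\|_2$. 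Integrating in time and substituting the exponential decay yields $\int_0^t\|{\bf u}(s)\|_2\,ds \le \tfrac{2}{\lambda_0}\|{\bf u}(0)\|_2$, so both $\|\boldsymbol{\mu}_r(t)-\boldsymbol{\mu}_r(0)\|_2$ and $\|\boldsymbol{\sigma}_r(t)-\boldsymbol{\sigma}_r(0)\|_2$ are $O\!\big(\tfrac{\sqrt n\,\|{\bf u}(0)\|_2}{\sqrt m\,\lambda_0}\big)$, the latter up to the factor $\|\boldsymbol{\epsilon}_r\|_\infty = O(\sqrt{\log(md/\delta)})$ which holds with high probability uniformly over all $m$ neurons. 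Since ${\bf w}_r(t) = \boldsymbol{\mu}_r(t) + \boldsymbol{\sigma}_r(t)\odot\boldsymbol{\epsilon}_r(0)$, the triangle inequality controls $\|{\bf w}_r(t)-{\bf w}_r(0)\|_2$ by the same order.

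Finally I would control the initial residual: because $\|{\bf x}_i\|_2 = 1$ and the labels are bounded, $\mathbb{E}\|{\bf u}(0)\|_2^2 = O(n)$, so Markov's inequality gives $\|{\bf u}(0)\|_2 = O(\sqrt{n/\delta})$ with probability at least $1-\delta$; plugging this in produces $\|{\bf w}_r(t)-{\bf w}_r(0)\|_2 \le \tfrac{2n}{\sqrt{m\delta}\,\lambda_0} = R'$. The main obstacle is the $\boldsymbol{\sigma}$-branch: the extra $\boldsymbol{\epsilon}_r$ inside its gradient must be shown not to inflate the displacement beyond the $\boldsymbol{\mu}$-branch order, which requires the uniform $\|\boldsymbol{\epsilon}_r\|_\infty$ bound above and, to recover the clean constant $2$, exploiting that the $\boldsymbol{\sigma}$-gradient has vanishing expected norm (as developed in Section~\ref{sec:4.2}) so its contribution is lower-order.
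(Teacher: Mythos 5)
Your overall skeleton (Gr\"onwall decay of the residual, per-neuron velocity bound, integration in time, Markov's inequality on the initial residual) is the same as the paper's, but there is a genuine gap in how you handle the $\boldsymbol{\sigma}$-branch, and it is precisely the step where this lemma differs from the deterministic argument of \cite{du2018gradient,arora2019fine}. Your primary route bounds the \emph{pathwise} $\boldsymbol{\sigma}$-gradient via $\|\dot{\boldsymbol{\sigma}}_r(t)\|_2 \le \tfrac{\sqrt n}{\sqrt m}\|\boldsymbol{\epsilon}_r\|_\infty\|{\bf u}(t)\|_2$ with $\|\boldsymbol{\epsilon}_r\|_\infty = O\big(\sqrt{\log(md/\delta)}\big)$. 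That yields a displacement of order $\tfrac{\sqrt n\,\|{\bf u}(0)\|_2}{\sqrt m\,\lambda_0}\sqrt{\log(md/\delta)}$, which grows with $m$ relative to the target and therefore can never be absorbed into the claimed bound $R' = \tfrac{2n}{\sqrt{m\delta}\,\lambda_0}$; with this route you prove a strictly weaker statement, not the lemma. The paper avoids the issue entirely: since $\widehat{R}_S(Q) = \mathbb{E}_{f\sim Q}\widehat{R}_S(f)$, the gradient flow is driven by the \emph{expected} output $\widehat f = \mathbb{E}_{f\sim Q} f$, so the $\boldsymbol{\sigma}_r$-velocity is $\big\|\sum_i (y_i-\widehat f({\bf x}_i))\tfrac{1}{\sqrt m}\,\mathbb{E}_{\boldsymbol{\epsilon}_r}\big[{\bf v}_r\,{\bf x}_i\odot\boldsymbol{\epsilon}_r\,\mathbb{I}({\bf w}_r^\top{\bf x}_i\ge 0)\big]\big\|_2$, which the paper argues is zero. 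Hence $\boldsymbol{\sigma}_r(t)\equiv\boldsymbol{\sigma}_r(0)$ identically, the $\boldsymbol{\sigma}$-contribution to $\|{\bf w}_r(t)-{\bf w}_r(0)\|_2$ vanishes exactly (not merely to lower order), and only the $\boldsymbol{\mu}$-displacement $\le 2\sqrt{\tfrac{n}{m}}\tfrac{\|{\bf Y}-\widehat f({\bf X},0)\|_2}{\lambda_0}$ survives, which combined with $\widehat{R}_S(Q,0)=O(n/\delta)$ gives $R'$.

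You do mention this vanishing-expectation fact in your last sentence, but you cast it as an optional refinement ``to recover the clean constant $2$,'' when in fact it is the load-bearing step: without it the lemma's bound is false as a consequence of your estimates. Relatedly, your choice of residual ${\bf u}(t) = {\bf Y} - f({\bf X},t)$ (pathwise, at fixed $\boldsymbol{\epsilon}$) is not the object the paper evolves; the expectation over $\boldsymbol{\epsilon}_r$ must sit \emph{inside} the gradient from the start (i.e.\ work with $\widehat f$), since that is exactly what kills the $\boldsymbol{\sigma}$-dynamics. To repair your proof, replace the pathwise $\|\boldsymbol{\epsilon}_r\|_\infty$ estimate by the expectation argument as the main step for the $\boldsymbol{\sigma}$-branch, and keep the rest of your argument (which matches the paper's) unchanged.
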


\begin{proof}
According to the gradient flow of output function, we have
$$
\begin{aligned}
 \frac{d  f({\bf x}_i,t)}{d t}  & = \sum_{r=1}^m \big( \langle \frac{ \partial   f({\bf x}_i,t)}{ \partial \boldsymbol{\mu}_r} , \frac{d \boldsymbol{\mu}_r(t)}{d t} \rangle + \langle \frac{ \partial   f({\bf x},t)}{ \partial\boldsymbol{\sigma}_r} , \frac{d \boldsymbol{\sigma}_r(t)}{d t} \rangle \big)  \\ & = \sum_{j=1}^n ({\bf y}_i-f({\bf x}_j)) \sum_{r=1}^m \big( \langle \frac{ \partial  f({\bf x}_i)}{ \partial \boldsymbol{\mu}_r} , \frac{  f({\bf x}_j)}{ \partial \boldsymbol{\mu}_r} \rangle  + \langle \frac{ \partial f({\bf x}_i)}{ \partial \boldsymbol{\sigma}_r} ,  \frac{ \partial  f({\bf x}_j)}{ \partial \boldsymbol{\sigma}_r} \rangle \big) \\
 & = \sum_{j=1}^n ({\bf y}_j-  f({\bf x}_j,t)) (\boldsymbol{\Theta}^{\mu}_{ij} + \boldsymbol{\Theta}^{\sigma}_{ij})
\end{aligned}
$$

Then the dynamics of loss can be calculated,
$$
\begin{aligned}
\frac{d}{dt}  \widehat{R}_S(Q,t) &= \frac{1}{2}\frac{d}{dt}  \big \| \mathbb{E}_{f(t) \sim Q} f({\bf X},t) - {\bf Y} \big \|_2^2 = \frac{1}{2}\frac{d}{dt}  \big \| \widehat f({\bf X},t) - {\bf Y} \big \|_2^2 \\ &= - ({\bf Y}- \widehat f({\bf X},t))^\top (\boldsymbol{\Theta}^\mu(t) + \boldsymbol{\Theta}^\sigma(t))  ({\bf Y}- \widehat f({\bf X}),t) \le -\lambda_0 \| {\bf Y} -\widehat f ({\bf X},t)\|^2_2
\end{aligned}
$$
where we define $\widehat f(t) = \mathbb{E}_{f(t) \sim Q} f(t) = \mathbb{E}_\epsilon f(t) $. ${\bf X}, {\bf Y}$ are collections of features and labels. Then the loss can be bounded,
$$
\widehat{R}_S(Q,t) = \frac{1}{2}\| {\bf Y}-\widehat f({\bf X},t) \|^2_2 \le \exp(-{\lambda_0} t)\widehat{R}_S(Q,0)
$$
The above equation implies the linear convergence rate of gradient descent on over-parameterized probabilistic network.

Now we bounded the change of mean weights:
$$
\begin{aligned}
\bigg \| \frac{d}{dt} \boldsymbol{\mu}_{r}(t) \bigg \|_2 & = \bigg \| \sum_{i=1}^n (y_i -\widehat f({\bf x}_i )) \frac{1}{\sqrt{m}} {\bf v}_r {\bf x}_i \mathbb{I}({\bf w}^\top_{r}(t) {\bf x}_i \ge 0) \bigg \|_2 \\
& \le \frac{1}{\sqrt{m}} \sum_{i=1}^n |y_i -\widehat f({\bf x}_i)| \le \sqrt{\frac{n}{m}} \| {\bf Y}-\widehat f({\bf X},t) \|_2 \le \sqrt{\frac{n}{m}} \exp(-\lambda_0 t/2) \| {\bf Y}-\widehat f({\bf X},0) \|_2
\end{aligned}
$$
Integrating the gradient, we have $$
\|\boldsymbol{\mu}_{r}(T) -\boldsymbol{\mu}_{r}(0) \|_2 \le \int_{0}^T  \big \|  \frac{d}{dt} \boldsymbol{\mu}_{r}(t) \big \|_2 dt  \le  2 \sqrt{\frac{n}{m}} \frac{\| {\bf y}-\widehat f({\bf X},0) \|_2}{\lambda_0} 
$$

Next we bounded the change of variance weights:
$$
\begin{aligned}
\bigg \| \frac{d}{dt} \boldsymbol{\sigma}_r(t) \bigg \|_2 & = \bigg \| \sum_{i=1}^n (y_i -\widehat f({\bf x}_i )) \frac{1}{\sqrt{m}} \mathbb{E}_{\boldsymbol{\epsilon}_{r}(t) } {\bf v}_r {\bf x}_i \odot \boldsymbol{\epsilon}_{r}(t)  \mathbb{I}( {\bf w}^\top_{r}(t) {\bf x}_i \ge 0)  \bigg \|_2  = 0
\end{aligned}
$$
In the above derivation, we use the definition of loss which is $\widehat{R}_S(Q) = \mathbb{E}_{f \in Q} \widehat{R}_S(f)$ and interchange integration and differentiation. The result leads to:
$$
\|\boldsymbol{\sigma}_{r}(T) -\boldsymbol{\sigma}_{r}(0) \| \le \int_{0}^T  \bigg \|  \frac{d}{dt} \boldsymbol{\sigma}_{r}(t) \bigg\|_2 dt  = 0
$$

Plug the results for both mean and variance weights together, we obtain
$$
\|{\bf w}_{r}(T) -{\bf w}_{r}(0) \| = \|{\boldsymbol \mu}_{r}(T) -{\boldsymbol \mu}_{r}(0) + {\boldsymbol \epsilon}_{r}(0) \odot ({\boldsymbol \sigma}_{r}(T)- {\boldsymbol \sigma}_{r}(0)) \| \le
2 \sqrt{\frac{n}{m}} \frac{\| {\bf Y}-\widehat f({\bf X},0) \|_2}{\lambda_0} =  \frac{4\sqrt{n \widehat{R}_S(Q,0)}}{\sqrt{m}\lambda_0} 
$$
Finally, it is time to bound $\widehat{R}_S(Q,0)$,
$$
\mathbb{E} \big[\| {\bf Y} - \widehat f({\bf X}) \|^2_2 \big] = \sum_{i=1}^n (y^2_i + y_i \mathbb{E}[\widehat{f}( {\bf x}_i )] + \mathbb{E}[\widehat{f}( {\bf x}_i )^2]) = \sum_{i=1}^n (1+O(1) = O(n))
$$
Thus by Markov's inequality, we have with probability at least $1-\delta$, $\widehat{R}_S(Q,0) = O(\frac{n}{\delta})$, and 
$$
\| {\bf w}_{r}(t) - {\bf w}_{r}(0) \|_2 \le \frac{2n}{\sqrt{m \delta}\lambda_0 }  \equiv R'
$$
\end{proof}

\begin{lemma}
If $R' < R$, we have
$\widehat{R}_S(Q,t)  \le \exp(-\lambda_0 t) \widehat{R}_S(Q,0). 
$
\end{lemma}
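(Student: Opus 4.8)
The plan is to close the loop between Lemma~\ref{lem:train} and Lemma~\ref{lem:duringtrain} by a continuity (bootstrapping) argument that hinges precisely on the hypothesis $R' < R$. The two lemmas pull in opposite directions: Lemma~\ref{lem:train} says that as long as the trajectory stays inside the radius-$R$ ball around initialization, the kernels satisfy $\lambda_{\min}(\boldsymbol{\Theta}^\mu(t)), \lambda_{\min}(\boldsymbol{\Theta}^\sigma(t)) \ge \lambda_0/2$; conversely, Lemma~\ref{lem:duringtrain} says that once the eigenvalue lower bound holds, the weights can travel at most $R'$ from initialization. Since $R' < R$, the trajectory can never reach the boundary of the radius-$R$ ball, so neither hypothesis is ever violated and both conclusions persist for all time.

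First I would introduce the critical time $T = \sup\{ t \ge 0 : \|{\bf w}_r(s) - {\bf w}_r(0)\|_2 \le R \text{ for all } s \in [0,t] \text{ and } r \in [m]\}$. At $t=0$ every weight equals its initial value, so by continuity of the gradient flow $T > 0$. On the closed interval $[0,T]$ the defining constraint $\|{\bf w}_r(s)-{\bf w}_r(0)\|_2 \le R$ holds, so Lemma~\ref{lem:train} applies and gives $\lambda_{\min}(\boldsymbol{\Theta}(s)) \ge \lambda_0/2$ throughout $[0,T]$. Feeding this eigenvalue bound into Lemma~\ref{lem:duringtrain} then yields simultaneously the exponential decay $\widehat{R}_S(Q,s) \le \exp(-\lambda_0 s)\widehat{R}_S(Q,0)$ and the sharper displacement estimate $\|{\bf w}_r(s)-{\bf w}_r(0)\|_2 \le R'$ for every $s \in [0,T]$ and every $r$.

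The contradiction step finishes the argument. Suppose $T < \infty$. By continuity of $s \mapsto {\bf w}_r(s)$, we must have $\max_r \|{\bf w}_r(T)-{\bf w}_r(0)\|_2 = R$; otherwise the constraint would hold with strict slack at $T$ and, by continuity, extend to an interval beyond $T$, contradicting the maximality of $T$. But the previous paragraph gives $\|{\bf w}_r(T)-{\bf w}_r(0)\|_2 \le R' < R$ for all $r$, which is incompatible with the displacement equalling $R$. Hence $T = \infty$, the bound $\lambda_{\min}(\boldsymbol{\Theta}(t)) \ge \lambda_0/2$ holds for all $t \ge 0$, and consequently $\widehat{R}_S(Q,t) \le \exp(-\lambda_0 t)\widehat{R}_S(Q,0)$ for all $t$, as claimed.

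I expect the only genuine subtlety to lie in making this continuity argument fully rigorous: one must check that $t \mapsto {\bf w}_r(t)$ is continuous under the gradient flow (so that the supremum defining $T$ is well behaved) and that $t \mapsto \lambda_{\min}(\boldsymbol{\Theta}(t))$ is continuous, so that the eigenvalue bound extends cleanly to the endpoint $T$. I would also remark that $R' < R$ is not an extra assumption but is exactly what the width condition $m = \Omega\big(n^6/(\lambda_0^4 \delta^3(\sigma_0^2+\hat\sigma^2))\big)$ of Theorem~\ref{thm:opt} buys: substituting $R = c\lambda_0\delta\sqrt{\sigma_0^2+\hat\sigma^2}/n^2$ and $R' = 2n/(\sqrt{m\delta}\,\lambda_0)$ and solving $R' < R$ for $m$ reproduces that very bound, so the lemma's hypothesis is automatically met in the regime of interest.
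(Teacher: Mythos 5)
Your proof is correct, and it is in fact more complete than the paper's own. The paper's entire proof of this lemma is the arithmetic you relegate to your closing remark: substituting $R' = \frac{2n}{\sqrt{m\delta}\,\lambda_0}$ and $R = \frac{c\lambda_0\delta\sqrt{\sigma_0^2+\hat\sigma^2}}{n^2}$ into $R' < R$ and solving for $m$, which yields the width requirement of Theorem~\ref{thm:opt}. The actual logical content of the lemma --- why $R' < R$ lets one chain Lemma~\ref{lem:train} (displacement bound $\Rightarrow$ eigenvalue bound) and Lemma~\ref{lem:duringtrain} (eigenvalue bound $\Rightarrow$ displacement bound $R'$ and exponential decay) without circularity --- is left entirely implicit in the paper, presumably deferring to the standard argument of Du et al.\ that the authors cite as their technical template. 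Your continuity/bootstrapping argument, with the critical time $T$ defined as a supremum and the contradiction at the boundary $\max_r \|{\bf w}_r(T)-{\bf w}_r(0)\|_2 = R$, is precisely the missing step, and it is carried out correctly; the only caveat, which applies equally to the paper, is that one should take a union bound over the high-probability events of the two cited lemmas so that both hold simultaneously on a single event of probability at least $1-O(\delta)$. One further small point in your favor: solving $R' < R$ gives $m = \Omega\bigl(n^6/(\lambda_0^4\delta^3(\sigma_0^2+\hat\sigma^2))\bigr)$ with exponent $\delta^3$, as you state and as the statement of Theorem~\ref{thm:opt} has it, whereas the paper's appendix writes $\delta^4$ --- a minor inconsistency in the paper that your computation quietly corrects.
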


\begin{proof}
Since $R' = \frac{2n}{\sqrt{m \delta}\lambda_0 } $ and $R = \frac{c \lambda_0 \delta \sqrt{(\sigma^2_0+ \hat \sigma^2)}}{n^2} $, we have $m = \Omega\big(\frac{n^6  }{ \lambda^4_0 \delta^4 (\sigma_0^2+\hat \sigma^2)}\big)$.
 \end{proof}
 
 \section{Missing Proofs of Section \ref{sec:4.2}}

\begin{theorem} 
Consider gradient descent on target function (\ref{eq:target}), with the initialization stated in Theorem \ref{thm:opt}. Suppose $m \ge {\rm poly}({n},1/\lambda_0, 1/\delta,1/ \mathcal{E})$. Then with probability at least $1-\delta$ over the random initialization, we have
\begin{equation}
 {\widehat{f}({\bf x})} = \boldsymbol{\Theta}^\infty({\bf x},{\bf X}) \big (\boldsymbol{\Theta}({\bf X},{\bf X}) + \lambda/{\sigma^2_0} {\bf I} \big)^{-1} {\bf Y} \pm \mathcal{E}
\end{equation}
where $\widehat{f}({\bf x}) = \mathbb{E}_{f \sim Q} f(\bf x) $ aligns with the definition of empirical loss function. 
\end{theorem}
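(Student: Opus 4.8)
The plan is to exploit the reduction already set up in Section~\ref{sec:4.2}: since the gradient of the objective with respect to the variance parameters $\boldsymbol{\sigma}_r$ carries an odd power of $\boldsymbol{\epsilon}_r$ and therefore has vanishing expectation, I treat $\boldsymbol{\sigma}_r \equiv \sigma_0$ as frozen throughout training. With $\sigma_t=\sigma_0$ the logarithmic and variance-ratio terms in the two-Gaussian KL cancel, so the penalty collapses to $\mathrm{KL}=\tfrac{1}{2\sigma_0^2}\|\boldsymbol{\mu}(t)-\boldsymbol{\mu}(0)\|^2$. Folding the $1/n$ factors of both $\widehat{R}_S(Q)$ and the KL term in (\ref{eq:target}) into a common scale, the PAC-Bayes objective becomes an ordinary squared loss on the expected output plus an $\ell_2$ penalty on $\boldsymbol{\mu}$ anchored at initialization, with ridge strength exactly $\lambda/\sigma_0^2$. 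Thus training is gradient flow on a regularized least-squares problem, and the only question is whether over-parameterization linearizes it.

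Next I would linearize using the machinery already proved. By Lemma~\ref{lem:train} and Lemma~\ref{lem:duringtrain}, for $m$ large the mean weights stay inside the ball of radius $R$ around initialization and the tangent features $\nabla_{\boldsymbol{\mu}}\widehat f(\cdot)$ stay essentially constant, so $\boldsymbol{\mu}\mapsto\widehat f(\cdot)$ is affine to leading order with a fixed feature map $\boldsymbol{\phi}$ whose Gram matrix $\boldsymbol{\Phi}\boldsymbol{\Phi}^\top$ equals $\boldsymbol{\Theta}$ and satisfies $\|\boldsymbol{\Theta}-\boldsymbol{\Theta}^\infty\|_2\le\lambda_0/4$. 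The one genuinely new observation relative to Theorem~\ref{thm:opt} is that the added $\ell_2$ term only helps: it makes the regularized objective strongly convex (so gradient flow still converges, now to a unique minimizer) and shrinks the displacement of the weights, so the same width requirement suffices.

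I would then solve the linearized flow for the in-sample expected outputs $\mathbf{u}(t)=\widehat f(\mathbf{X},t)$, which obey $\tfrac{d\mathbf{u}}{dt}=-(\boldsymbol{\Theta}+\tfrac{\lambda}{\sigma_0^2}\mathbf{I})(\mathbf{u}-\mathbf{u}^\ast)$ with fixed point $\mathbf{u}^\ast=(\boldsymbol{\Theta}+\tfrac{\lambda}{\sigma_0^2}\mathbf{I})^{-1}(\boldsymbol{\Theta}\mathbf{Y}+\tfrac{\lambda}{\sigma_0^2}\mathbf{u}_0)$; since $\boldsymbol{\Theta}+\tfrac{\lambda}{\sigma_0^2}\mathbf{I}\succeq(\tfrac{\lambda_0}{2}+\tfrac{\lambda}{\sigma_0^2})\mathbf{I}$, convergence is exponential and $T\ge\Omega(\tfrac{1}{\eta\lambda_0}\log\tfrac{n}{\delta})$ places us $\mathcal{E}$-close to $\mathbf{u}^\ast$. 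Propagating to a test point through the constant feature map and applying the push-through (Woodbury) identity $\boldsymbol{\phi}(\mathbf{x})^\top(\boldsymbol{\Phi}^\top\boldsymbol{\Phi}+c\mathbf{I})^{-1}\boldsymbol{\Phi}^\top=\boldsymbol{\Theta}(\mathbf{x},\mathbf{X})(\boldsymbol{\Theta}+c\mathbf{I})^{-1}$ with $c=\lambda/\sigma_0^2$ yields the claimed kernel-ridge form; replacing $\boldsymbol{\Theta}(\mathbf{x},\mathbf{X})$ by $\boldsymbol{\Theta}^\infty(\mathbf{x},\mathbf{X})$ and discarding the $\mathbf{u}_0$ term produces the stated expression.

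The main obstacle is collecting every approximation error into the single additive slack $\pm\mathcal{E}$. I must control, uniformly in the test point $\mathbf{x}$, the kernel perturbation $\boldsymbol{\Theta}(\mathbf{x},\mathbf{X})-\boldsymbol{\Theta}^\infty(\mathbf{x},\mathbf{X})$, the Taylor remainder of the linearization along the whole trajectory (which is $O(1/\sqrt m)$ but must be shown not to accumulate under the extra regularization), the initial-output contribution $\mathbf{u}_0=\widehat f(\mathbf{X},0)$, and the fluctuation of the frozen-$\boldsymbol{\sigma}$ approximation, whose per-step deviation is $O(1/\sqrt m)$ and must be shown not to build up over $T$ iterations. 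Each of these is driven to at most a constant times $\mathcal{E}$ by choosing $m={\rm poly}(n,1/\lambda_0,1/\delta,1/\mathcal{E})$, and a union bound over the events of the earlier lemmas gives the overall probability $1-\delta$.
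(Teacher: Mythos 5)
Your proposal is correct and follows essentially the same route as the paper's own proof: freeze $\boldsymbol{\sigma}$ via the vanishing expected gradient so the KL term collapses to an $\ell_2$ ridge penalty $\frac{\lambda}{2\sigma_0^2}\|\boldsymbol{\mu}(t)-\boldsymbol{\mu}(0)\|^2$, linearize the network using the constancy of the PNTK, solve the resulting linear gradient-flow ODE whose $t\to\infty$ limit is the kernel ridge regression predictor, and absorb the kernel-perturbation, linearization, and initial-output errors into $\pm\mathcal{E}$ (the paper defers this last accounting to the trajectory-coupling argument of Arora et al., which you sketch instead). The only cosmetic difference is that you integrate the dynamics in output space and invoke the push-through identity, while the paper solves for $\boldsymbol{\mu}(t)$ in parameter space and substitutes into the linearized output; these are equivalent.
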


\begin{proof} 
At a high level, our proof first establishes the result of kernel ridge regression in the infinite-width limit, then bounds the perturbation on the predict.

According the linearization rules for infinitely-wide networks \citep{lee2019wide}, the output function can be expressed as,
$$
\widehat{f}_{\rm ntk}({\bf x},t) = \phi_{{\mu}}({\bf x})^\top(\boldsymbol{\mu}(t)-\boldsymbol{\mu}(0))+\phi_{{\sigma}}({\bf x})^\top(\boldsymbol{\sigma}(t)-\boldsymbol{\sigma}(0)),
$$
where $\phi_{{\mu}}({\bf x}) = \nabla_{{{\mu}}} \widehat f({\bf x},0)$, and $\phi_{{\sigma}}({\bf x}) = \nabla_{{{\sigma}}} \widehat f({\bf x},0)$.
Since $\boldsymbol{\sigma}$ does not change during training, then the KL divergence reduces to
$$
\text{KL} = \frac{1}{2}   \frac{({\mu}_{t}-{\mu}_{0})^2}{{\sigma}^2_{0}} 
$$
Then the gradient flow equation for $\boldsymbol{\mu}$ becomes,

$$
\begin{aligned}
\frac{d \boldsymbol{\mu}(t)}{d t} = \frac{\partial \mathcal{B}}{ \partial   \boldsymbol{\mu}} & = (\widehat f_{\rm ntk}({\bf X},t) -{\bf Y})\phi_{{\mu}}({\bf X}) + {\lambda}/{\sigma^2_0}(\boldsymbol{\mu}(t)-\boldsymbol{\mu}(0))  \\ &= \boldsymbol{\Theta}^\mu(\boldsymbol{\mu}(t)-\boldsymbol{\mu}(0)) -\phi_{\mu}({\bf X})^\top {\bf Y} + {\lambda}/{\sigma^2_0}(\boldsymbol{\mu}(t)-\boldsymbol{\mu}(0))
\end{aligned}
$$
By analysing above equation, we find it is an ordinary differential equation regarding $\boldsymbol{\mu}_t$, and the solution is,
$$
\boldsymbol{\mu}(t) = \phi_{\mu}({\bf X})^\top(\boldsymbol{\Theta}({\bf X},{\bf X})+\lambda/\sigma^2_0 {\bf I})^{-1}  {\bf Y}({\bf I}-e^{-(\boldsymbol{\Theta}({\bf X},{\bf X})+\lambda/\sigma^2_0 {\bf I})t})
$$
Plug this result into the linearization of expected output function, we have,
$$
\widehat f_{\rm ntk}({\bf x},t) = \boldsymbol{\Theta}({\bf x},{\bf X})(\boldsymbol{\Theta}({\bf X},{\bf X})+{\lambda}/{\sigma^2_0} {\bf I})^{-1} {\bf Y} ({\bf I}-e^{-(\boldsymbol{\Theta}({\bf X},{\bf X})+{\lambda}/{\sigma^2_0} {\bf I})t})
$$
when we take the time to be infinity, 
$$
\widehat f_{\rm ntk}({\bf x},t =\infty) = \boldsymbol{\Theta}({\bf x},{\bf X})(\boldsymbol{\Theta}({\bf X},{\bf X})+{\lambda}/{\sigma^2_0} {\bf I})^{-1} {\bf y} .
$$
The next step is to show that
$$
|{\widehat f_{\rm ntk}({\bf x}) - \widehat f({\bf x})}| \le
O(\mathcal{E}).
$$
where $\mathcal{E} = \mathcal{E}_{\rm init}  + \frac{\sqrt{n}}{\lambda_0^2}\log(\frac{n}{\mathcal{E}_{\Theta} \lambda_0})\mathcal{E}_{\Theta}$. In which, we define $|\widehat f({\bf x},0)| \le \mathcal{E}_{\rm init}$ and $\|\boldsymbol{\Theta}^\infty -\boldsymbol{\Theta}(t)  \|_2 \le \mathcal{E}_{\Theta}$. 

The proof relies a careful analysis on the trajectories induced by gradient flows for optimizing the neural network and the NTK predictor. The detailed proof can be found in the proof of Theorem 3.2 in \cite{arora2019exact}, and we can replace kernel ridge regression here by kernel regression.

\end{proof}

 \section{Missing Proofs of Section \ref{sec:4.3}}

\begin{theorem} [PAC-Bayesian bound with NTK] \label{thm:PAC}
Suppose data $S = \{ ({\bf x}_i,y_i)\}_{i=1}^n$ are i.i.d. samples from a non-degenerate distribution $\mathcal{D}$, and $m \ge {\rm poly}(n, \lambda_0^{-1}, \delta^{-1})$. Consider any loss function $\ell: \mathbb{R} \times \mathbb{R} \rightarrow [0,1]$. Then with probability at least $1-\delta$ over the random initialization and the training samples, the probabilistic network trained by gradient descent for $T \ge \Omega(\frac{1}{\eta \lambda_0} \log \frac{n}{\delta})$ iterations has population risk $R_{\mathcal{D}}(Q)$ that is bounded as follows:

\begin{equation}
\begin{aligned}
   R_{\mathcal{D}} & \le  \frac{1}{\sigma^2_0}\frac{  {\bf Y}^\top (\boldsymbol{\Theta}+ {\lambda}/{\sigma^2_0} {\bf I})^{-1} {\bf Y}}{n} + \frac{ \lambda}{\sigma^2_0}  \sqrt{ \frac{ {\bf Y}^\top(\boldsymbol{\Theta}+  {\lambda}/{\sigma^2_0} {\bf I})^{-2} {\bf Y}} {n}}  + O \bigg(\frac{\log \frac{2\sqrt{n}}{\delta}}{n} \bigg).
   \end{aligned}
\end{equation}
\end{theorem}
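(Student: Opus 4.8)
The plan is to feed the explicit kernel ridge regression solution from Theorem \ref{thm:solution} into the PAC-Bayes-kl inequality of Theorem \ref{thm_mca}, and to read off the two leading terms as, respectively, the KL contribution and the empirical-risk (residual) contribution. First I would compute the KL divergence between the trained posterior and the prior. Because the analysis of Section \ref{sec:4.2} shows that $\boldsymbol{\sigma}$ is effectively frozen during gradient descent ($\boldsymbol{\sigma}_t=\sigma_0$), the Gaussian KL formula collapses to $\mathrm{KL}(Q\|Q_0)=\|\boldsymbol{\mu}(\infty)-\boldsymbol{\mu}(0)\|_2^2/(2\sigma_0^2)$. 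Using the closed-form trajectory for $\boldsymbol{\mu}$ derived in the proof of Theorem \ref{thm:solution}, namely $\boldsymbol{\mu}(\infty)-\boldsymbol{\mu}(0)=\phi_{\mu}({\bf X})^\top(\boldsymbol{\Theta}+\lambda/\sigma_0^2{\bf I})^{-1}{\bf Y}$ together with $\phi_{\mu}({\bf X})\phi_{\mu}({\bf X})^\top=\boldsymbol{\Theta}$, I get $\|\boldsymbol{\mu}(\infty)-\boldsymbol{\mu}(0)\|_2^2={\bf Y}^\top(\boldsymbol{\Theta}+\lambda/\sigma_0^2{\bf I})^{-1}\boldsymbol{\Theta}(\boldsymbol{\Theta}+\lambda/\sigma_0^2{\bf I})^{-1}{\bf Y}$. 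Since $\boldsymbol{\Theta}\preceq\boldsymbol{\Theta}+\lambda/\sigma_0^2{\bf I}$ implies $(\boldsymbol{\Theta}+\lambda/\sigma_0^2{\bf I})^{-1}\boldsymbol{\Theta}(\boldsymbol{\Theta}+\lambda/\sigma_0^2{\bf I})^{-1}\preceq(\boldsymbol{\Theta}+\lambda/\sigma_0^2{\bf I})^{-1}$, this yields $\mathrm{KL}(Q\|Q_0)\le{\bf Y}^\top(\boldsymbol{\Theta}+\lambda/\sigma_0^2{\bf I})^{-1}{\bf Y}/(2\sigma_0^2)$, which after dividing by $n$ and multiplying by the factor $2$ that appears below gives exactly the first term of the stated bound.

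Next I would control the empirical risk at convergence. The residual of the kernel ridge predictor is ${\bf Y}-\widehat f({\bf X})=(\lambda/\sigma_0^2)(\boldsymbol{\Theta}+\lambda/\sigma_0^2{\bf I})^{-1}{\bf Y}$, so $\|{\bf Y}-\widehat f({\bf X})\|_2^2=(\lambda^2/\sigma_0^4)\,{\bf Y}^\top(\boldsymbol{\Theta}+\lambda/\sigma_0^2{\bf I})^{-2}{\bf Y}$. Assuming (as in the companion Rademacher statement following \citet{hu2019simple}) that the loss is $1$-Lipschitz in its second argument with $\ell(y,y)=0$, so that $\ell(y,\widehat f)\le|y-\widehat f|$, Cauchy–Schwarz gives $\widehat R_S(Q)\le\sqrt{\|{\bf Y}-\widehat f({\bf X})\|_2^2/n}=(\lambda/\sigma_0^2)\sqrt{{\bf Y}^\top(\boldsymbol{\Theta}+\lambda/\sigma_0^2{\bf I})^{-2}{\bf Y}/n}$, which is precisely the second term.

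To combine these into a risk bound with the fast $O(1/n)$ rate on the KL term, I would invert the binary-kl inequality of Theorem \ref{thm_mca} rather than use the slower $\sqrt{\cdot}$ form of \eqref{eq:bound_sqrt}: writing $c=(\mathrm{KL}(Q\|Q_0)+\log(2\sqrt n/\delta))/n$, the standard refined inversion gives $R_\mathcal{D}(Q)\le\widehat R_S(Q)+\sqrt{2\widehat R_S(Q)\,c}+2c$. Here $2c$ produces the first term plus the $O(\log(2\sqrt n/\delta)/n)$ confidence term, $\widehat R_S(Q)$ is bounded by the second term, and the cross term is handled by AM-GM, $\sqrt{2\widehat R_S(Q)\,c}\le\alpha\widehat R_S(Q)+c/(2\alpha)$, so it is absorbed (up to constants) into the first and second terms. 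This is the step I expect to be the crux, since it is exactly the fast-rate inversion that separates this bound from the Rademacher bound of Theorem \ref{thm:Rademacher}.

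Finally I would account for the approximations hidden in the closed form. Theorem \ref{thm:solution} only identifies $\widehat f$ with the kernel ridge predictor up to an additive error $\mathcal{E}$, and the lemmas of Section \ref{sec:4.1} only guarantee $\|\boldsymbol{\Theta}(t)-\boldsymbol{\Theta}^\infty\|_2\le\lambda_0/4$; by choosing $m\ge\mathrm{poly}(n,\lambda_0^{-1},\delta^{-1})$ (and large enough in $1/\mathcal{E}$) these perturbations are driven below $1/n$ and merged into the $O(\log(2\sqrt n/\delta)/n)$ remainder. The main obstacle, as noted, is the careful bookkeeping in the kl-inversion step — verifying that the cross term and all finite-width perturbations are genuinely lower order so that only the two advertised leading terms survive.
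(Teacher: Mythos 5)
Your proposal is correct and follows essentially the same route as the paper's proof: bound $\mathrm{KL}(Q\|Q_0)$ via the closed-form trajectory $\boldsymbol{\mu}(\infty)-\boldsymbol{\mu}(0)=\phi_{\mu}({\bf X})^\top(\boldsymbol{\Theta}+\lambda/\sigma_0^2{\bf I})^{-1}{\bf Y}$ together with the PSD inequality $(\boldsymbol{\Theta}+\lambda/\sigma_0^2{\bf I})^{-1}\boldsymbol{\Theta}(\boldsymbol{\Theta}+\lambda/\sigma_0^2{\bf I})^{-1}\preceq(\boldsymbol{\Theta}+\lambda/\sigma_0^2{\bf I})^{-1}$, bound the empirical risk by the ridge residual $(\lambda/\sigma_0^2)\sqrt{{\bf Y}^\top(\boldsymbol{\Theta}+\lambda/\sigma_0^2{\bf I})^{-2}{\bf Y}/n}$ via Cauchy--Schwarz, and combine with a fast-rate PAC-Bayes inequality. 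Your use of the kl-inversion plus AM--GM is equivalent to the paper's (implicit) use of the PAC-Bayes-$\lambda$ bound, which is itself derived from Theorem \ref{thm_mca} by exactly that AM--GM step; your explicit handling of the finite-width error $\mathcal{E}$ is, if anything, more careful than the paper's.
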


\begin{proof}
The generalization bound consists two terms, one is the empirical error, and another is KL divergence.

(1) We first bound the empirical error $\sqrt{\sum_{i=1}^n (\widehat f_{\rm ntk}({\bf x}_i,\infty)-y_i)^2}$ with following inequality,
$$
\begin{aligned}
\sqrt{\sum_{i=1}^n (\widehat f_{\rm ntk}({\bf x}_i,\infty)-y_i)^2} & = \| \boldsymbol{\Theta}({\bf X},{\bf X})(\boldsymbol{\Theta}({\bf X},{\bf X}) + {\lambda}/{\sigma^2_0} {\bf I})^{-1} {\bf Y} -{\bf Y} \| 
= \| {\lambda}/{\sigma^2_0}(\boldsymbol{\Theta}+ {\lambda}/{\sigma^2_0} {\bf I})^{-1} {\bf Y} \| \\
& = {\lambda}/{\sigma^2_0}  \sqrt{ {\bf Y}^\top(\boldsymbol{\Theta}+  {\lambda}/{\sigma^2_0} {\bf I})^{-2}{\bf Y}}  
\end{aligned}
$$

Then we can further bound the error as,
$$
\frac{1}{n} \sum_{i=1}^n \ell (\widehat f_{\rm ntk}({\bf x}_i) -y_i) \le \frac{1}{n}\sum_{i=1}^n |\widehat f_{\rm ntk}({\bf x}_i) - y_i| \le \frac{1}{\sqrt{n}} \sqrt{\sum_{i=1}^n |\widehat f_{\rm ntk}({\bf x}_i) -y_i |^2} \le \frac{\lambda}{\sigma^2_0}  \sqrt{\frac{ {\bf Y}^\top(\boldsymbol{\Theta}+  {\lambda}/{\sigma^2_0} {\bf I})^{-2} {\bf Y}}{n}}
$$

(2) The next step is to calculate the KL divergence. According to the solution of differential equation, we have,
$$
\boldsymbol{\mu}(t) -  \boldsymbol{\mu}(0)= \phi_{{\mu}}({\bf x})^\top {\bf Y} (\boldsymbol{\Theta}({\bf X},{\bf X})+ {\lambda}/{\sigma^2_0} {\bf I})^{-1}( {\bf I}-e^{-(\boldsymbol{\Theta}({\bf X},{\bf X})+ {\lambda}/{\sigma^2_0} {\bf I})t}),
$$
then $t= T= \infty$ yields $\boldsymbol{\mu}(t) -  \boldsymbol{\mu}(0)=  \phi_{{\mu}}^\top(\boldsymbol{\Theta}({\bf X},{\bf X})+\frac{\lambda}{\sigma^2_0} {\bf I})^{-1} {\bf Y}$. Therefore, the KL divergence is,
$$
\begin{aligned}
\text{KL} & = \frac{1}{ \sigma^2_0} {\bf Y}^\top (\boldsymbol{\Theta}({\bf X},{\bf X})+{\lambda}/{\sigma^2_0} {\bf I})^{-1} \boldsymbol{\Theta}({\bf X},{\bf X}) (\boldsymbol{\Theta}({\bf X},{\bf X})+{\lambda}/{\sigma^2_0} {\bf I})^{-1} {\bf Y} \le \frac{1}{ \sigma^2_0} {\bf Y}^\top (\boldsymbol{\Theta}({\bf X},{\bf X})+{\lambda}/{\sigma^2_0} {\bf I})^{-1} {\bf Y}
\end{aligned}
$$

Finally, we achieve the PAC-Bayesian generalization bound,
$$
\begin{aligned}
   R_{\mathcal{D}} & \le \frac{{\bf Y}^\top (\boldsymbol{\Theta}+ \frac{\lambda}{\sigma^2_0} {\bf I})^{-1} {\bf Y}}{n\sigma^2_0} + \frac{ \lambda}{\sigma^2_0}  \sqrt{ \frac{ {\bf Y}^\top(\boldsymbol{\Theta}+  \frac{\lambda}{\sigma^2_0} {\bf I})^{-2} {\bf Y}} {n}} +O \bigg(\frac{\log \frac{2\sqrt{n}}{\delta}}{n} \bigg).
   \end{aligned}
$$

\end{proof}

\begin{theorem} [Rademacher bound with NTK] 
Suppose data $S = \{ ({\bf x}_i,y_i)\}_{i=1}^n$ are i.i.d. samples from a non-degenerate distribution $\mathcal{D}$, and $m \ge {\rm poly}(n, \lambda_0^{-1}, \delta^{-1})$. Consider any loss function $\ell: \mathbb{R} \times \mathbb{R} \rightarrow [0,1]$. Then with probability at least $1-\delta$ over the random initialization and training samples, the network trained by gradient descent for $T \ge \Omega(\frac{1}{\eta \lambda_0} \log \frac{n}{\delta})$ iterations has population risk $R_{\mathcal{D}}(Q)$ that is bounded as follows:
\small{
\begin{equation}
\begin{aligned}
   R_{\mathcal{D}} & \le  \sqrt{\frac{  {\bf Y}^\top (\boldsymbol{\Theta}+ {\lambda}/{\sigma^2_0} {\bf I})^{-1} {\bf Y}}{n}} + \frac{\lambda}{\sigma^2_0}  \sqrt{ \frac{ {\bf Y}^\top(\boldsymbol{\Theta}+  {\lambda}/{\sigma^2_0} {\bf I})^{-2} {\bf Y}} {n}}  +O \bigg(\sqrt{\frac{\log \frac{n}{\lambda_0 \delta}}{n}} \bigg).
   \end{aligned}
\end{equation}}
\end{theorem}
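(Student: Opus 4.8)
The plan is to reduce the statement to the Rademacher-complexity generalization bound for kernel ridge regression established in Theorem~5.1 of \cite{hu2019simple}, exploiting the fact (Theorem~\ref{thm:solution}) that the trained network coincides, up to an $O(\mathcal{E})$ perturbation, with the kernel ridge predictor $\widehat f_{\rm ntk}({\bf x}) = \boldsymbol{\Theta}({\bf x},{\bf X})(\boldsymbol{\Theta}+ \lambda/\sigma_0^2 {\bf I})^{-1}{\bf Y}$ whose kernel is the PNTK. The present bound and the PAC-Bayesian bound of Theorem~\ref{thm:PAC} share the same predictor and hence the same empirical-error term; the only difference is the complexity term, measured here by the RKHS norm of the predictor rather than by the KL divergence, which is precisely the source of the $O(1/\sqrt{n})$ versus $O(1/n)$ gap.

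First I would decompose the population risk with the standard symmetrization inequality: with probability at least $1-\delta$ over the sample,
$$
R_{\mathcal{D}} \le \widehat R_S + 2\,\mathcal{R}_S(\mathcal{F}_B) + O\Big(\sqrt{\tfrac{\log(1/\delta)}{n}}\Big),
$$
where $\mathcal{F}_B = \{f \in \mathcal{H} : \|f\|_{\mathcal{H}} \le B\}$ is the RKHS ball containing the predictor. The empirical error is bounded exactly as in the proof of Theorem~\ref{thm:PAC}: the residual of the ridge predictor is $\widehat f_{\rm ntk}({\bf X})-{\bf Y} = -(\lambda/\sigma_0^2)(\boldsymbol{\Theta}+\lambda/\sigma_0^2 {\bf I})^{-1}{\bf Y}$, so Cauchy--Schwarz over the $n$ samples yields $\widehat R_S \le \frac{\lambda}{\sigma_0^2}\sqrt{\frac{{\bf Y}^\top(\boldsymbol{\Theta}+\lambda/\sigma_0^2{\bf I})^{-2}{\bf Y}}{n}}$, the second term of the claimed bound.

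Next I would control the RKHS complexity. For the ridge predictor,
$$
\|\widehat f_{\rm ntk}\|_{\mathcal{H}}^2 = {\bf Y}^\top(\boldsymbol{\Theta}+\lambda/\sigma_0^2{\bf I})^{-1}\boldsymbol{\Theta}(\boldsymbol{\Theta}+\lambda/\sigma_0^2{\bf I})^{-1}{\bf Y} \le {\bf Y}^\top(\boldsymbol{\Theta}+\lambda/\sigma_0^2{\bf I})^{-1}{\bf Y} \equiv B^2,
$$
and the empirical Rademacher complexity of the ball obeys $\mathcal{R}_S(\mathcal{F}_B) \le \frac{B}{n}\sqrt{\operatorname{Tr}(\boldsymbol{\Theta})}$. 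Since $\|{\bf x}_i\|=1$, each diagonal PNTK entry is $\tfrac12$, so $\operatorname{Tr}(\boldsymbol{\Theta})=O(n)$ and hence $\mathcal{R}_S(\mathcal{F}_B) \le O\big(\sqrt{{\bf Y}^\top(\boldsymbol{\Theta}+\lambda/\sigma_0^2{\bf I})^{-1}{\bf Y}/n}\big)$, which is the first term of the bound.

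Finally I would absorb into the $O(\sqrt{\log(n/(\lambda_0\delta))/n})$ residual the three remaining error sources: the concentration term from the symmetrization inequality; the $O(\mathcal{E})$ gap between the finite-width trained network and the idealized kernel predictor, kept small by $m \ge {\rm poly}(n,\lambda_0^{-1},\delta^{-1})$ via Theorem~\ref{thm:solution}; and the error of stopping gradient descent after $T=\Omega(\frac{1}{\eta\lambda_0}\log\frac{n}{\delta})$ steps, controlled by the linear convergence rate of Theorem~\ref{thm:opt} (this is where the $\log(n/\lambda_0)$ factor originates). The main obstacle I anticipate is not any single computation, each being routine, but the bookkeeping of this perturbation chain: the Rademacher bound applies to the exact kernel predictor, while gradient descent reaches it only up to finite-$m$ and finite-$T$ errors, so the argument must certify that all perturbations stay uniformly within the $O(\sqrt{\log(n/(\lambda_0\delta))/n})$ budget, which is exactly what the polynomial width and logarithmic training-time requirements guarantee.
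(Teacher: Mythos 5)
Your proposal follows essentially the same route as the paper's proof: bound the empirical error of the kernel ridge predictor via Cauchy--Schwarz (reusing the computation from the PAC-Bayesian theorem), bound the RKHS norm of the ridge predictor by $\sqrt{{\bf Y}^\top(\boldsymbol{\Theta}+\lambda/\sigma_0^2{\bf I})^{-1}{\bf Y}}$, invoke the $\frac{B}{n}\sqrt{\operatorname{Tr}(\boldsymbol{\Theta})}$ bound for the Rademacher complexity of an RKHS ball with $\operatorname{Tr}(\boldsymbol{\Theta})=O(n)$, and finish with the standard symmetrization bound. Your treatment is in fact slightly more explicit than the paper's on two points it leaves implicit --- the justification $\boldsymbol{\Theta}_{ii}=\tfrac12$ for the trace assumption, and the bookkeeping of the finite-width and finite-$T$ perturbations, which the paper simply defers to the argument of \citet{hu2019simple} --- but these are refinements of the same argument, not a different one.
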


\begin{proof}
In this proof, we use Rademacher-complexity analysis. Let $\mathcal{H}$ be the reproducing kernel Hilbert space (RKHS) corresponding to the kernel $k(\cdot, \cdot)$. It is known that the RKHS norm of a function $f_{\rm ntk}({\bf x}) = \boldsymbol{\Theta}^\infty({\bf x},{\bf X}) \big (\boldsymbol{\Theta}({\bf X},{\bf X}) + \lambda/{\sigma^2_0} {\bf I} \big)^{-1} {\bf Y} = \boldsymbol{\alpha}^\top k({\bf x},{\bf X})$ is $\| f_{\rm ntk}\|_\mathcal{H} = \sqrt{\boldsymbol{\alpha}^\top k({\bf X},{\bf X}) \boldsymbol{\alpha}} $, where $ k = \boldsymbol{\Theta}$ and $\boldsymbol{\alpha} = {\lambda}/{\sigma^2_0} {\bf I})^{-1} {\bf Y}  $. Then we can bound the $\| f_{\rm ntk}\|_\mathcal{H}$.


$$
\| f_{\rm ntk}\|_\mathcal{H} = \sqrt{ {\bf Y}^\top (\boldsymbol{\Theta}({\bf X},{\bf X})+ {\lambda}/{\sigma^2_0} {\bf I})^{-1} \boldsymbol{\Theta}({\bf X},{\bf X})(\boldsymbol{\Theta}({\bf X},{\bf X})+ {\lambda}/{\sigma^2_0} {\bf I})^{-1} {\bf Y} } \le \sqrt{{\bf Y}^\top (\boldsymbol{\Theta}({\bf X},{\bf X})+ {\lambda}/{\sigma^2_0} {\bf I})^{-1} {\bf Y}} 
$$

 For function class $\mathcal{F}_B = \{f({\bf x}) = \boldsymbol{\alpha}^\top k({\bf x},{\bf X}): \| f\|_\mathcal{H} \le B \}$, it is shown that its empirical Rademacher complexity can be bounded as,
$$
 {\widehat R}_S(\mathcal{F}_B) = \frac{1}{n} \mathbb{E}\big[\sup_{f \in \mathcal{F}_B} \sum_{i=1}^n f({\bf x}_i) \gamma_i \big] \le \frac{B \sqrt{{\rm Tr}[k({\bf X},{\bf X})]}}{n}
$$

Assume that $ {\rm Tr}[k({\bf X},{\bf X})] \approx n$.
Recall the standard generalization bound from Rademacher complexity, with probability at least $1 - \delta$, we have,
$$
\sup_{f \in \mathcal{F}} [ \mathbb{E}_\mathcal{D}[\ell(f({\bf x}),y)] -\frac{1}{n} \sum_{i=1}^n \ell (f({\bf x}_i),y_i) ] \le 2 {\widehat R}_S(\mathcal{F}) + 3\sqrt{\frac{\log(2/\delta)}{2n}}
$$
Thus we have,
$$
\begin{aligned}
   R_{\mathcal{D}} & \le \sqrt{\frac{  {\bf Y}^\top (\boldsymbol{\Theta}+ {\lambda}/{\sigma^2_0} {\bf I})^{-1} {\bf Y}}{n}} + \frac{\lambda}{\sigma^2_0}  \sqrt{ \frac{ {\bf Y}^\top(\boldsymbol{\Theta}+  {\lambda}/{\sigma^2_0} {\bf I})^{-2} {\bf Y}} {n}}+O \bigg(\sqrt{\frac{\log \frac{n}{\lambda_0 \delta}}{n}} \bigg).
   \end{aligned}
$$
\end{proof}

\section{Additional Experiments}

This section contains additional experimental results.

\subsection{Training dynamics of wide probabilistic network}

We conduct an experiment similar to the experiment conducted in section \ref{tdwp} but without incorporating the KL term in the objective function. The same results can be observed, further verifying our theoretical results. 

\subsection{Comparison of gradient norm with respect to $\boldsymbol{\mu}$ and $\boldsymbol{\sigma}$} \label{sec:norm}

In Lemma \ref{lem:4.9}, we assume that the variance weight $\boldsymbol{\sigma}$ is fixed. To verify if this assumption is reasonable in the practical training process, we conduct an experiment, to compare the gradient of norm with respect to $\boldsymbol{\mu}$ and $\boldsymbol{\sigma}$. The result is shown in Figure \ref{fig:12}. We can see that the gradient norm  of $\nabla_{\boldsymbol{\mu}} f $ is much larger than that of $\nabla_{\boldsymbol{\sigma}} f $, which implies that $\boldsymbol{\sigma}$ is effectively fixed during gradient descent training.


\begin{figure}[t!]
\centering
\subfloat{\includegraphics[width=0.3\textwidth]{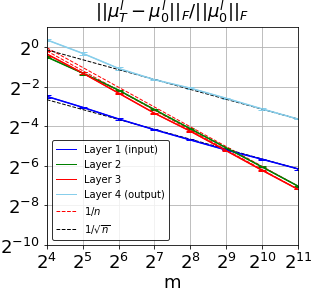}} 
\renewcommand{\thefigure}{S1}
\vspace*{-4mm}
\caption{Relative Frobenius norm change in $\mu$ during training with MSE loss that does not include the KL term, where $m$ is the width of the network.} 
\label{fig:11} 
\end{figure}

\begin{figure}[t!]
\centering
\subfloat{\includegraphics[width=0.32\textwidth]{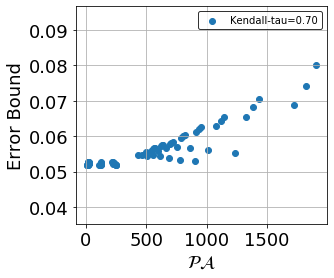}}
\renewcommand{\thefigure}{S2}
\vspace*{-4mm}
\caption{Correlation between aggregated proxy $\mathcal{PA}$ and generalization bound.} 
\label{fig:13} 
\end{figure}

\begin{figure}[t!]
\centering
\subfloat{\includegraphics[width=0.3\textwidth]{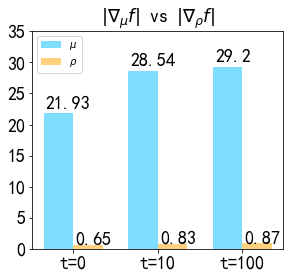}}
\renewcommand{\thefigure}{S3}
\vspace*{-4mm}
\caption{Comparison between the gradient of mean $\boldsymbol{\mu}$ and standard deviations $\boldsymbol{\sigma}$.} 
\label{fig:12} 
\end{figure}
\subsection{Correlation between generalization bound proxy metric and generalization bound}
In Figure \ref{fig:3}, we observe a positive and significant correlation between $\mathcal{PA}$ and generalization bound held among different values of a selected hyperparameter while fixing other hyperparameters. Furthermore, we provide a Figure \ref{fig:13} presenting the correlation for aggregated values of $ \rho_0$ and $ \lambda$, under the circumstance where 50\% data is used for prior training. We can clearly see that lower $\mathcal{PA}$ 
corresponds to the lower bound, with a strong positive Kendall-tau correlation of 0.7.

\subsection{Grid search}
For selecting hyperparameters, we conduct a grid search over $\rho_0$, percent of prior data, and KL penalty $\lambda$. Notably, we do grid sweep over the data for prior training with different proportion in [0.2, 0.3, 0.4, 0.5, 0.6, 0.7, 0.8, 0.9] since 0.2 is the minimum proportion required for obtaining a reasonably lower value generalization bound \citep{dziugaite2020role}. For the rest, we run over $\rho_0$ at value [0.03, 0.05, 0.07, 0.09, 0.1, 0.3, 0.5, 0.7] for FCN ([0.05, 0.07, 0.09, 0.1, 0.3, 0.5, 0.7, 0.9] for CNN) and KL penalty at [0.0001, 0.0005, 0.001, 0.005, 0.01, 0.05, 0.1, 0.5, 1] for both structures.



\end{document}